\newtheorem{theorem}{Theorem}
\newtheorem{lemma}{Lemma} 
\newtheorem{corollary}{Corollary}
\newtheorem{definition}{Definition}
\def \cX {\mathcal{X}}
\def \bP {\mathbb{P}}
\def \bE {\mathbb{E}}
\def \cM {\mathcal{M}}
\def \spo {\mathsf{Poi}}
\newcommand{\trel}{\tau_{\mathrm{rel}}}
\newcommand{\stepa}[1]{\overset{\rm (a)}{#1}}
\newcommand{\stepb}[1]{\overset{\rm (b)}{#1}}
\newcommand{\ceil}[1]{{\left\lceil {#1} \right \rceil}}
\newcommand{\Poi}{\mathsf{Poi}}
\newcommand{\pth}[1]{\left( #1 \right)}
\newcommand{\sth}[1]{\left\{ #1 \right\}}
\newcommand{\TV}{\mathsf{TV}}
\newcommand{\calR}{\mathcal{R}}
\newcommand{\calG}{\mathcal{G}}
\newcommand{\calH}{\mathcal{H}}
\newcommand{\calE}{\mathcal{E}}
\newcommand{\bfC}{\mathbf{C}}
\newcommand{\opt}{\mathsf{opt}}
\newcommand{\multi}{\mathsf{multi}}
\newcommand{\emp}{\mathsf{emp}}
\newcommand{\Hemp}{\bar{H}_{\emp}}
\newcommand{\Hopt}{\bar{H}_{\opt}}
\newcommand{\bfU}{\mathbf{U}}
\newcommand{\bfR}{\mathbf{R}}
\newcommand{\diag}{\mathsf{diag}}
\newcommand{\reals}{\mathbb{R}}
\newcommand{\Expect}{\mathbb{E}}
\newcommand{\Prob}{\mathbb{P}}
\newcommand{\prob}[1]{\Prob\left(#1\right)}
\newcommand{\numberthis}{\addtocounter{equation}{1}\tag{\theequation}}
\title{Entropy Rate Estimation for Markov Chains with Large State Space}
\author{
	Yanjun Han \\
	Department of Electrical Engineering\\
	Stanford University\\
	Stanford, CA 94305 \\
	\texttt{ yjhan@stanford.edu  } \\
	\And
	Jiantao Jiao \\
	Department of Electrical Engineering and Computer Sciences\\
	University of California, Berkeley\\	
	Berkeley, CA 94720\\
	\texttt{jiantao@berkeley.edu}\\
	\And
	Chuan-Zheng Lee, Tsachy Weissman \\
	Department of Electrical Engineering\\
	Stanford University\\
	Stanford, CA 94305 \\
	\texttt{ \{czlee, tsachy\}@stanford.edu  } \\
	\And
	Yihong Wu \\
	Department of Statistics and Data Science\\
	Yale University\\
	New Haven, CT 06511 \\
	\texttt{ yihong.wu@yale.edu } \\
	\And
	Tiancheng Yu \\
	Department of Electronic Engineering \\
	Tsinghua University \\
	Haidian, Beijing 100084 \\
	\texttt{ thueeyutc14@foxmail.com } \\	
	%% examples of more authors
	%% \And
	%% Coauthor \\
	%% Affiliation \\
	%% Address \\
	%% \texttt{email} \\
	%% \AND
	%% Coauthor \\
	%% Affiliation \\
	%% Address \\
	%% \texttt{email} \\
	%% \And
	%% Coauthor \\
	%% Affiliation \\
	%% Address \\
	%% \texttt{email} \\
	%% \And
	%% Coauthor \\
	%% Affiliation \\
	%% Address \\
	%% \texttt{email} \\
}
\begin{document}
	% \nipsfinalcopy is no longer used
	
	\maketitle
	
  \begin{abstract}
Entropy estimation is one of the prototypical problems in distribution property testing.
To consistently estimate the Shannon entropy of a distribution on $S$ elements with independent samples, the optimal sample complexity scales sublinearly with $S$ as $\Theta(\frac{S}{\log S})$ as shown by Valiant and Valiant \cite{Valiant--Valiant2011}. Extending the theory and algorithms for entropy estimation to dependent data, this paper considers the problem of estimating the entropy rate of a stationary reversible Markov chain with $S$ states from a sample path of $n$ observations. We show that
\begin{itemize}
	\item Provided the Markov chain mixes not too slowly, \textit{i.e.}, the relaxation time is at most $O(\frac{S}{\ln^3 S})$, consistent estimation is achievable when $n \gg \frac{S^2}{\log S}$.
	\item Provided the Markov chain has some slight dependency, \textit{i.e.}, the relaxation time is at least $1+\Omega(\frac{\ln^2 S}{\sqrt{S}})$,
	consistent estimation is impossible when $n \lesssim \frac{S^2}{\log S}$.
\end{itemize}
Under both assumptions, the optimal estimation accuracy is shown to be $\Theta(\frac{S^2}{n \log S})$. In comparison, the empirical entropy rate requires at least $\Omega(S^2)$ samples to be consistent, even when the Markov chain is memoryless. In addition to synthetic experiments, we also apply the estimators that achieve the optimal sample complexity to estimate the entropy rate of the English language in the Penn Treebank and the Google One Billion Words corpora, which provides a natural benchmark for language modeling and relates it directly to the widely used perplexity measure.
\end{abstract}

%[label=(\alph*),leftmargin=1.5cm,rightmargin=1cm]
	
\section{Introduction}

Consider a stationary stochastic process $\{X_t\}_{t = 1}^{\infty}$, where each $X_t$ takes values in a finite alphabet $\cX$ of size $S$. The \emph{Shannon entropy rate} (or simply \emph{entropy rate}) of this process is defined as~\cite{Cover--Thomas2006}
\begin{equation}
\label{eqn.generalentropyratedef}
\bar{H}  = \lim_{n\to \infty} \frac{1}{n} H(X^n),
\end{equation}
where \[H(X^n) = \sum_{x^n \in \cX^n} P_{X^n}(x^n) \ln \frac{1}{P_{X^n}(x^n)}\] is the \emph{Shannon entropy} (or \emph{entropy}) of the random vector $X^n = (X_1,X_2,\ldots,X_n)$ and $P_{X^n}(x^n) = \prob{X_1=x_1,\ldots,X_n=x_n}$ is the joint probability mass function. Since the entropy of a random variable depends only on its distribution, we also refer to the entropy $H(P)$ of a discrete distribution $P = (p_1,p_2,\ldots,p_S)$, defined as
$
H(P) = \sum_{i = 1}^S p_i \ln \frac{1}{p_i}.
$

The Shannon entropy rate is the fundamental limit of the expected logarithmic loss when predicting the next symbol, given the all past symbols. It is also the fundamental limit of data compressing for stationary stochastic processes in terms of the average number of bits required to represent each symbol \cite{Cover--Thomas2006,Cesa--Lugosi2006}. Estimating the entropy rate of a stochastic process is a fundamental problem in information theory, statistics, and machine learning; and it has diverse applications---see, for example, \cite{Shannon1951prediction, lanctot2000estimating,song2010limits,takaguchi2011predictability,wang2012random,krumme2013predictability}.

There exists extensive literature on entropy rate estimation. It is known from data compression theory that the normalized codelength of any \emph{universal} code is a consistent estimator for the entropy rate as the number of samples approaches infinity. This observation has inspired a large variety of entropy rate estimators; see \textit{e.g.} \cite{Wyner--Ziv1989,Kontoyiannis--Algoet--Suhov--Wyner1998nonparametric,effros2002universal,Cai--Kulkarni--Verdu2004,Jiao--Permuter--Zhao--Kim--Weissman2013}.
However, most of this work has been in the asymptotic regime \cite{shields1996ergodic,ciuperca2005estimation}. Attention to \emph{non-asymptotic} analysis has only been more recent, and to date, almost only for i.i.d.\ data. There has been little work on the non-asymptotic performance of an entropy rate estimator for dependent data---that is, where the alphabet size is large (making asymptotically large datasets infeasible) and the stochastic process has memory. An understanding of this large-alphabet regime is increasingly important in modern machine learning applications, in particular, \emph{language modeling}. There have been substantial recent advances in probabilistic language models, which have been widely used in applications such as machine translation and search query completion. The entropy rate of (say) the English language represents a fundamental limit on the efficacy of a language model (measured by its \textit{perplexity}), so it is of great interest to language model researchers to obtain an accurate estimate of the entropy rate as a benchmark. However, since the alphabet size here is exceedingly large, and Google's One Billion Words corpus includes about two million unique words,\footnote{This exceeds the estimated vocabulary of the English language partly because different forms of a word count as different words in language models, and partly because of edge cases in tokenization, the automatic splitting of text into ``words''.\label{foot:english-vocab}} it is unrealistic to assume the large-sample asymptotics especially when dealing with combinations of words (bigrams, trigrams, etc). It is therefore of significant practical importance to investigate the optimal entropy rate estimator with limited sample size.

In the context of non-asymptotic analysis for i.i.d.\ samples, Paninski~\cite{Paninski2004} first showed that the Shannon entropy can be consistently estimated with $o(S)$ samples when the alphabet size $S$ approaches infinity. The seminal work of~\cite{Valiant--Valiant2011} showed that when estimating the entropy rate of an i.i.d.\ source, $n \gg \frac{S}{\log S}$ samples are necessary and sufficient for consistency.
The entropy estimators proposed in \cite{Valiant--Valiant2011} and refined in \cite{Valiant--Valiant2013estimating}, based on linear programming, have not been shown to achieve the minimax estimation rate. Another estimator proposed by the same authors~\cite{Valiant--Valiant2011power} has been shown to achieve the minimax rate in the restrictive regime of $\frac{S}{\ln S} \lesssim n \lesssim \frac{S^{1.03}}{\ln S}$.
Using the idea of best polynomial approximation, the independent work of~\cite{Wu--Yang2014minimax} and~\cite{Jiao--Venkat--Han--Weissman2015minimax} obtained estimators that achieve the minimax mean-square error $\Theta((\frac{S}{n \log S})^2 + \frac{\log^2 S}{n})$ for entropy estimation.
The intuition for the $\Theta(\frac{S}{\log S})$ sample complexity in the independent case can be interpreted as follows: as opposed to estimating the entire distribution which has $S-1$ parameters and requires $\Theta(S)$ samples, estimating the scalar functional (entropy) can be done with a logarithmic factor reduction of samples. For Markov chains which are characterized by the transition matrix consisting of $S(S-1)$ free parameters, it is reasonable to expect an $\Theta(\frac{S^2}{\log S})$ sample complexity. Indeed, we will show that this is correct provided the mixing is not too slow.

Estimating the entropy rate of a Markov chain falls in the general area of property testing and estimation with dependent data. The prior work \cite{kamath2016estimation} provided a non-asymptotic analysis of maximum-likelihood estimation of entropy rate in Markov chains and showed that it is necessary to assume certain assumptions on the mixing time for otherwise the entropy rate is impossible to estimate.
There has been some progress in related questions of estimating the mixing time from sample path \cite{hsu2015mixing,levin2016estimating}, estimating the transition matrix \cite{FOPS16}, and testing symmetric Markov chains \cite{daskalakis2017testing}. The current paper makes contribution to this growing field. In particular, the main results of this paper are highlighted as follows:
\begin{itemize}
    \item We provide a tight analysis of the sample complexity of the empirical entropy rate for Markov chains when the mixing time is not too large. This refines results in~\cite{kamath2016estimation} and shows that when mixing is not too slow, the sample complexity of the empirical entropy does not depend on the mixing time. Precisely, the bias of the empirical entropy rate vanishes uniformly over all Markov chains regardless of mixing time and reversibility as long as the number of samples grows faster than the number of parameters. It is its variance that may explode when the mixing time becomes gigantic.
    \item We obtain a characterization of the optimal sample complexity for estimating the entropy rate of a stationary reversible Markov chain in terms of the sample size, state space size, and mixing time, and partially resolve one of the open questions raised in~\cite{kamath2016estimation}. In particular, we show that when the mixing is neither too fast nor too slow, the sample complexity (up to a constant) does not depend on mixing time. In this regime, the performance of the optimal estimator with $n$ samples is essentially that of the empirical entropy rate with $n \log n$ samples.
		As opposed to the lower bound for estimating the mixing time in \cite{hsu2015mixing} obtained by applying Le Cam's method to two Markov chains which are statistically indistinguishable, the minimax lower bound in the current paper is much more involved, which, in addition to
	a series of reductions by means of simulation, relies on constructing two stationary reversible Markov chains with \emph{random} transition matrices \cite{bordenave2010spectrum}, so that the marginal distributions of the sample paths are statistically indistinguishable.
		%connects the dual program of best polynomial approximation with random walk in a random environment, and goes through a series of reductions by means of simulation arguments.
		%The minimax lower bound connects the dual program of best polynomial approximation with random walk in a random environment, and goes through a series of reductions from Markov to independent multinomial to independent Poisson model.
    \item We construct estimators that are efficiently computable and achieve the minimax sample complexity. The key step is to connect the entropy rate estimation problem to Shannon entropy estimation on large alphabets with i.i.d.\ samples. The analysis uses the idea of simulating Markov chains from independent samples by Billingsley \cite{billingsley1961statistical} and concentration inequalities for Markov chains.

    \item We compare the empirical performance of various estimators for entropy rate on a variety of synthetic data sets, and demonstrate the superior performances of the information-theoretically optimal estimators compared to the empirical entropy rate.

    \item We apply the information-theoretically optimal estimators to estimate the entropy rate of the Penn Treebank (PTB) and the Google One Billion Words (1BW) datasets. We show that even only with estimates using up to 4-grams, there may exist language models that achieve better perplexity than the current state-of-the-art.
\end{itemize}

The rest of the paper is organized as follows. After setting up preliminary definitions in Section~\ref{sec.preliminaries}, we summarize our main findings in Section~\ref{sec.mainresults}, with proofs sketched in Section~\ref{sec.proof_sketch}. Section~\ref{sec.languagemodels} provides empirical results on estimating the entropy rate of the Penn Treebank (PTB) and the Google One Billion Words (1BW) datasets. Detailed proofs and more experiments are deferred to the appendices. 
%the proofs of the main lemmas are presented in Section~\ref{sec.mainlemmasproof}. The proofs of the auxiliary lemmas are given in Section~\ref{sec.auxiliarylemmaproof}.

\section{Preliminaries}
\label{sec.preliminaries}

Consider a first-order Markov chain $X_0,X_1,X_2,\ldots$ on a finite state space $\mathcal{X}= [S]$ with transition kernel $T$. We denote the entries of $T$ as $T_{ij}$, that is, $T_{ij} = P_{X_2|X_1}(j|i)$ for $i, j \in \cX$.
Let $T_i$ denote the $i$th row of $T$, which is the conditional law of $X_2$ given $X_1=i$.
 Throughout the paper, we focus on first-order Markov chains, since any finite-order Markov chain can be converted to a first-order one by extending the state space~\cite{billingsley1961statistical}.

We say that a Markov chain is \emph{stationary} if the distribution of $X_1$, denoted by $\pi\triangleq P_{X_1}$, satisfies $\pi T=\pi$. 
We say that a Markov chain is \emph{reversible} if
%there exists a probability measure $\pi$ on $\mathcal{X}$ that
it  satisfies the detailed balance equations,
$
\pi_i T_{ij}  = \pi_j T_{ji} 
$ for all $i,j\in\cX$. 
If a Markov chain is reversible, the (left) spectrum of its transition matrix $T$ contains $S$ real eigenvalues, which we denote as $1 = \lambda_1 \geq \lambda_2 \geq \cdots \geq \lambda_S \geq -1$. We define the \emph{spectral gap} and the \emph{absolute spectral gap} of $T$ as
$
\gamma(T) = 1 - \lambda_2$ and $\gamma^*(T) = 1 - \max_{i\geq 2} |\lambda_i|
$, respectively, 
and the \emph{relaxation time} of a reversible Markov chain as
\begin{equation}
\tau_{\mathrm{rel}}(T) = \frac{1}{\gamma^*(T)}.
\end{equation}

The relaxation time of a reversible Markov chain (approximately) captures its mixing time, which roughly speaking is the smallest $n$ for which the marginal distribution of $X_n$ is close to the Markov chain's stationary distribution. We refer to~\cite{montenegro2006mathematical} for a survey.

We consider the following observation model. We observe a sample path of a stationary finite-state Markov chain $X_0,X_1,\ldots,X_n$, whose Shannon entropy rate $\bar{H}$ in~(\ref{eqn.generalentropyratedef}) reduces to
\begin{align}
\bar{H}
= & ~ \sum_{i = 1}^S \pi_i \sum_{j = 1}^S T_{ij} \ln \frac{1}{T_{ij}} = H(X_1,X_2) - H(X_1) \label{eq:entropyratedef2}
\end{align}
where $\pi$ is the stationary distribution of this Markov chain. Let $\mathcal{M}_2(S)$ be the set of transition matrices of all stationary Markov chains on a state space of size $S$. Let $\mathcal{M}_{2,\text{rev}}(S)$ be the set of transition matrices of all stationary \emph{reversible} Markov chains on a state space of size $S$. We define the following class of stationary Markov reversible chains whose relaxation time is at most $\frac{1}{\gamma^*}$:
\begin{equation}
\mathcal{M}_{2,\text{rev}}(S, \gamma^*) = \{ T \in \mathcal{M}_{2,\text{rev}}(S), \gamma^*(T) \geq \gamma^*\}.
\end{equation}
The goal is to characterize the sample complexity of entropy rate estimation as a function of $S$, $\gamma^*$, and the estimation accuracy.

Note that the entropy rate of a first-order Markov chain can be written as
\begin{equation}
\bar{H} = \sum_{i = 1}^S \pi_i H(X_2|X_1 = i). 
\end{equation}
Given a sample path $\mathbf{X} = (X_0,X_1,\ldots,X_n)$, let  $\hat{\pi}$ denote the empirical distribution of states, and the subsequence of $\mathbf{X}$ containing elements \emph{following} any occurrence of the state $i$ as
$
\mathbf{X}^{(i)} = \{ X_j:  X_j \in \mathbf{X}, X_{j-1} = i, j\in [n]\}.
$
A natural idea to estimate the entropy rate $\bar{H}$ is to use $\hat{\pi}_i$ to estimate $\pi_i$ and an appropriate Shannon entropy estimator to estimate $H(X_2|X_1 = i)$. We define two estimators: 
\begin{enumerate}
	\item The \emph{empirical entropy rate}: $\bar{H}_{\emp} = \sum_{i = 1}^S \hat{\pi}_i \hat H_{\mathsf{emp}}\!\left( \mathbf{X}^{(i)} \right)$. Note that $\hat{H}_{\emp}(\mathbf{Y})$ computes the Shannon entropy of the empirical distribution of its argument $\mathbf{Y} = (Y_1,Y_2,\ldots,Y_m)$.
	\item Our entropy rate estimator: $\Hopt = \sum_{i = 1}^S \hat{\pi}_i \hat{H}_{\opt}\!\left (\mathbf{X}^{(i)} \right )$, where $\hat{H}_{\opt}$ is any minimax rate-optimal Shannon entropy estimator designed for i.i.d.\ data, such as those in~\cite{Valiant--Valiant2011power,Wu--Yang2014minimax,Jiao--Venkat--Han--Weissman2015minimax}.
\end{enumerate} 

%The theory of estimating Shannon entropy of a discrete distribution from i.i.d.\ data suggests that the MLE achieves bias order $\frac{S}{n}$ with $n$ samples from a distribution of alphabet size $S$ when $n \gtrsim S$, while the minimax rate-optimal ones achieve bias $\frac{S}{n\ln n}$ when $n\gtrsim \frac{S}{\ln S}$. The bias dominates when the number of samples $n$ is not too large.
%\nbr{This paragraph seems disconnected from the rest. I do not know how to edit.}

\section{Main results}
\label{sec.mainresults}

%As motivated by the results in the independent case, one might expect that bias dominates when $n$ is close to $S^2$ for the empirical entropy, and that the total error (including bias and variance) should not depend on the mixing properties when the mixing is not too slow. This intuition is supported by the following theorem.
Our first result provides performance guarantees for the empirical entropy rate $\Hemp$ and our entropy rate estimator $\Hopt$: 

\begin{theorem}\label{thm.mainresultsupper}
Suppose $(X_0,X_1,\ldots,X_n)$ is a sample path from a stationary reversible Markov chain with spectral gap $\gamma$. If $S^{0.01}\lesssim n\lesssim S^{2.99}$ and $\frac{1}{\gamma} \lesssim \frac{S}{\ln n\ln^2 S } \wedge \frac{S^3}{n \ln n \ln^3 S}$, there exists some constant $C>0$ independent of $n,S,\gamma$ such that the entropy rate estimator $\Hopt$ satisfies:\footnote{The asymptotic results in this section are interpreted by parameterizing $n=n_S$ and $\gamma=\gamma_S$ and $S \to \infty$ subject to the conditions of each theorem.} as $S\to\infty$,
\begin{align}
P\left( |\Hopt - \bar{H}| \le C\frac{S^2}{n\ln S} \right) \to 1
\end{align}

Under the same conditions, there exists some constant $C'>0$ independent of $n,S,\gamma$ such that the empirical entropy rate $\Hemp$ satisfies: as $S\to\infty$,
\begin{align}
P\left( |\Hemp - \bar{H}| \le C'\frac{S^2}{n} \right) \to 1.
\end{align}
\end{theorem}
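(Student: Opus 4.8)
\emph{Proof idea.}
The plan is a plug-in/estimation-error split combined with a reduction to Shannon entropy estimation on i.i.d.\ data. Writing $\bar H=\sum_{i=1}^S\pi_i H(T_i)$ and noting that both estimators have the form $\sum_i\hat\pi_i\hat H_\bullet(\mathbf{X}^{(i)})$ with $\bullet\in\{\opt,\emp\}$, decompose
\[
\hat H_\bullet-\bar H=\sum_{i=1}^S\hat\pi_i\bigl(\hat H_\bullet(\mathbf{X}^{(i)})-H(T_i)\bigr)+\sum_{i=1}^S(\hat\pi_i-\pi_i)H(T_i).
\]
After the usual truncation we may assume $\hat H_\bullet(\mathbf{X}^{(i)})\in[0,\ln S]$, so $|\hat H_\bullet-\bar H|\le2\ln S$ deterministically; it therefore suffices to argue in the non-trivial regime $n\gtrsim S^2/\ln^2 S$, where $S^2/(n\ln S)\lesssim\ln S$.

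\emph{Plug-in term.} Bound $\bigl|\sum_i(\hat\pi_i-\pi_i)H(T_i)\bigr|\le\ln S\cdot\|\hat\pi-\pi\|_1$. For a stationary reversible chain the occupation frequency $\hat\pi_i$ has variance $\lesssim\pi_i/(n\gamma)$, so $\mathbb E\|\hat\pi-\pi\|_1\le\sum_i\sqrt{\pi_i/(n\gamma)}\le\sqrt{S/(n\gamma)}$ by Cauchy--Schwarz; since $(X_0,\dots,X_n)\mapsto\|\hat\pi-\pi\|_1$ has coordinatewise sensitivity $O(1/n)$, a bounded-difference concentration inequality for reversible Markov chains gives $\|\hat\pi-\pi\|_1\le2\,\mathbb E\|\hat\pi-\pi\|_1$ with probability tending to $1$. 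Because $n\gtrsim S^{0.01}$ forces $\ln n\gtrsim\ln S$, the hypothesis implies $1/\gamma\lesssim S^3/(n\ln^4 S)$, hence $\ln S\cdot\|\hat\pi-\pi\|_1\lesssim\ln S\cdot\sqrt{S/(n\gamma)}\lesssim S^2/(n\ln S)$ with probability $\to1$; the same bound holds for $\Hemp$, where it is even dominated by $S^2/n$.

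\emph{Estimation term.} Here we use Billingsley's simulation \cite{billingsley1961statistical}: couple the chain with independent sequences $(Y^{(i)}_k)_{k\ge1}\iiddistr T_i$ so that $\mathbf{X}^{(i)}\eqdistr(Y^{(i)}_1,\dots,Y^{(i)}_{N_i})$, where the visit count $N_i=|\mathbf{X}^{(i)}|$ is a stopping time for the natural filtration. A Markov-chain concentration inequality shows $N_i\in[\tfrac12 n\pi_i,\,2n\pi_i]$ for all $i$ with probability $\to1$; on this event the residual dependence between $N_i$ and $(Y^{(i)}_k)$ costs only an exponentially small total-variation correction, so $\hat H_\bullet(\mathbf{X}^{(i)})$ estimates $H(T_i)$ at essentially the i.i.d.\ minimax accuracy from $m\asymp n\pi_i$ samples on an alphabet of size at most $S$: $\mathbb E|\hat H_{\opt}(\cdot)-H(T_i)|\lesssim\frac{S}{m\ln m}+\frac{\ln S}{\sqrt m}$ \cite{Valiant--Valiant2011power,Wu--Yang2014minimax,Jiao--Venkat--Han--Weissman2015minimax} and $\mathbb E|\hat H_{\emp}(\cdot)-H(T_i)|\lesssim\frac{S}{m}+\frac{\ln S}{\sqrt m}$, both also trivially at most $\ln S$. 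Split the states into \emph{light} ones ($n\pi_i<\sqrt S$, total mass $\le S^{3/2}/n$, bounded by the trivial $\ln S$) and \emph{heavy} ones ($n\pi_i\ge\sqrt S$, so $\ln(n\pi_i)\ge\tfrac12\ln S$). For $\Hopt$ the estimation term is then at most $\ln S\cdot S^{3/2}/n$ plus $\sum_{i\ \mathrm{heavy}}\pi_i\bigl(\tfrac{S}{n\pi_i\ln(n\pi_i)}+\tfrac{\ln S}{\sqrt{n\pi_i}}\bigr)$; using $\ln(n\pi_i)\ge\tfrac12\ln S$ on heavy states, $\sum_i\sqrt{\pi_i}\le\sqrt S$, and $n\lesssim S^{2.99}$, this is $\lesssim\tfrac{S^{3/2}\ln S}{n}+\tfrac{S^2}{n\ln S}+\tfrac{\ln S\sqrt S}{\sqrt n}\lesssim\tfrac{S^2}{n\ln S}$; the identical computation gives $S^2/n$ for $\Hemp$. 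Finally, these in-expectation bounds are promoted to the claimed high-probability statements by one more bounded-difference concentration inequality for reversible Markov chains: changing one coordinate of the path perturbs at most three of the subsequences $\mathbf{X}^{(i)}$ by a single symbol and $\hat\pi$ by $O(1/n)$, so $\hat H_\bullet$ has sensitivity $O(\ln S/n)$ and the tail probability at level $S^2/(n\ln S)$ has exponent $\gtrsim\gamma S^4/(n\ln^4 S)\gtrsim S\to\infty$.

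\emph{Main obstacle.} The crux is the reduction to i.i.d.\ entropy estimation, and making it \emph{tight}. The visit counts $N_i$ are random and statistically coupled to the very samples $\mathbf{X}^{(i)}$ used to estimate $T_i$, so one cannot merely condition on $N_i$ and quote the i.i.d.\ minimax risk; the stopping-time structure together with the sharp localization of $N_i$ near $n\pi_i$ is exactly what lets one discard this dependence. One must then verify that the per-state errors, once reweighted by $\pi_i$ and summed over all $S$ states, collapse to $\Theta(S^2/(n\ln S))$ and not something larger. This is precisely where the two-sided hypothesis $1/\gamma\lesssim\frac{S}{\ln n\ln^2 S}\wedge\frac{S^3}{n\ln n\ln^3 S}$ is calibrated: it is just strong enough to force $\|\hat\pi-\pi\|_1\lesssim S^2/(n\ln^2 S)$, to concentrate every $N_i$, and to make the concentration exponents diverge as $S\to\infty$.
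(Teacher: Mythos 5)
Your high‑level plan (decompose $\Hopt-\bar H$ into a plug‑in term and an estimation term, reduce the latter to i.i.d.\ entropy estimation via Billingsley's simulation, then concentrate) is the same strategy the paper follows. However, there is a genuine gap at precisely the step you yourself flag as the crux, and the way you propose to resolve it does not work.

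You write that on the event $N_i\in[\tfrac12 n\pi_i,2n\pi_i]$ the ``residual dependence between $N_i$ and $(Y^{(i)}_k)$ costs only an exponentially small total‑variation correction,'' and hence you can quote the i.i.d.\ minimax expectation bound at sample size $m\asymp n\pi_i$. This is false as stated: $N_i$ is determined by the whole trajectory, hence by the very variables $Y^{(i)}_1,Y^{(i)}_2,\dots$ (and by the other rows). Conditioned on $N_i=m$, the vector $(Y^{(i)}_1,\dots,Y^{(i)}_m)$ is \emph{not} i.i.d.\ $T_i$, and the conditional law is not close in total variation to the product law --- the paper says this explicitly in a footnote. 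The paper's fix is different from yours: rather than condition on $n_i$, it defines a ``good'' event $\calH_i$ requiring the i.i.d.\ entropy estimator to be accurate \emph{simultaneously for every fixed length} $m$ in an interval $[n_i^-,n_i^+]$ of width $O(\sqrt{n\pi_i\ln n/\gamma})$ around $n\pi_i$ (for each fixed $m$ the samples $W_{i1},\dots,W_{im}$ genuinely are i.i.d., so Lemma~\ref{lemma.concentrationentropy} applies), and separately requires $n_i$ to land in that interval (Lemma~\ref{lemma.confidencepi}). On the intersection, $\hat H_i=\hat H_{\opt}(W_{i1},\dots,W_{in_i})$ is accurate deterministically, with no conditioning on $n_i$ ever taking place. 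This union‑bound‑over‑lengths device is the missing idea in your proposal; your ``exponentially small TV'' assertion would need a proof and I do not believe one exists.

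Your remaining steps also diverge from the paper in ways worth noting. For the plug‑in term you use Cauchy--Schwarz plus a bounded‑difference inequality for reversible chains, whereas the paper uses a per‑state Bernstein inequality (Paulin's~\cite{Paulin15}) and sums the per‑state deviations; both give the right order, and yours is arguably cleaner. For promoting the estimation term to a high‑probability bound you invoke a bounded‑difference concentration of the whole estimator as a function of the path. That route is plausible in spirit, but needs justification that (i) the chosen minimax‑optimal $\hat H_{\opt}$ actually has $O((\ln n+\ln S)/m)$ sensitivity to a single sample (true for empirical entropy, not automatic for polynomial‑approximation estimators), and (ii) a McDiarmid‑type inequality with spectral‑gap constant applies; the paper sidesteps both by working directly with the concentration statement of Lemma~\ref{lemma.concentrationentropy} and never needs a path‑level bounded‑difference argument. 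If you keep your route, those two points must be checked; if you adopt the paper's, the union‑over‑$m$ construction in Definition~\ref{def.goodevents} and Lemma~\ref{lemma.goodeventshighprobability} is the piece to add.
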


%The following corollary is immediate.
%\begin{corollary}\label{cor.upperanalysis}
%The sample complexity for estimating the entropy rate over class $\mathcal{M}_{2,\text{rev}}(S, \gamma^*, \delta)$ is upper bounded by
%\begin{align}
%n^*_{\mathsf{MC}}(\mathcal{M}_{2,\text{rev}}(S, \gamma^*, \delta), \epsilon,\delta) & \lesssim \frac{S^2}{\epsilon \ln S}
%\end{align}
%when $\frac{S^2}{\ln S} \lesssim  n^* \lesssim S^{3-c}, \frac{1}{\gamma^*} \lesssim \frac{S^3}{n^* \ln n^* \ln^3 S} $, and $\delta$ is a fixed constant, say $0.01$.
%\end{corollary}

%\begin{remark}

Theorem~\ref{thm.mainresultsupper} shows that when the sample size is not too large, and the mixing is not too slow, it suffices to take $n \gg \frac{S^2}{\ln S}$ for the estimator $\bar{H}_{\mathsf{opt}}$ to achieve a vanishing error, and $n\gg S^2$ for the empirical entropy rate. Theorem~\ref{thm.mainresultsupper} improves over~\cite{kamath2016estimation} in the analysis of the empirical entropy rate in the sense that unlike the error term $O(\frac{S^2}{n\gamma})$, our dominating term $O(\frac{S^2}{n})$ does not depend on the mixing time. % We emphasize that the constraint that $n$ being not be too large is not restrictive. The theory of entropy estimation with i.i.d.\ data shows that the bias only dominates when $n \lesssim \frac{S^2}{\ln^4 S}$ for the optimal estimator, and $n\lesssim \frac{S^2}{\ln^2 S}$ for the empirical entropy. In Theorem~\ref{thm.mainresultsupper}, we are only characterizing the regime where the bias dominates.
%\end{remark}

Note that we have made mixing time assumptions in the upper bound analysis of the empirical entropy rate in Theorem~\ref{thm.mainresultsupper}, which is natural since~\cite{kamath2016estimation} showed that it is necessary to impose mixing time assumptions to provide meaningful statistical guarantees for entropy rate estimation in Markov chains. The following result shows that mixing assumptions are only needed to control the variance of the empirical entropy rate: the bias of the empirical entropy rate vanishes uniformly over all Markov chains regardless of reversibility and mixing time assumptions as long as $n\gg S^2$. 

\begin{theorem}\label{thm.universalbias}
	Let $n,S\geq 1$. Then, 
	\begin{align}
	\sup_{T\in \mathcal{M}_2(S)} |\bar{H} - \mathbb{E}[\Hemp]| & \leq  \frac{2S^2}{n}\ln\left( \frac{n}{S^2} +1 \right) + \frac{(S^2+2)\ln 2}{n}.
	\end{align}
\end{theorem}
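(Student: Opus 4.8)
The plan is to first convert the bias into an expected relative entropy, and then bound that quantity by a coding-redundancy estimate that is insensitive to the mixing time. Write $N_i$ for the length of $\mathbf{X}^{(i)}$ (the number of $j\in[n]$ with $X_{j-1}=i$), so that $\sum_iN_i=n$ and the natural empirical marginal is $\hat\pi_i=N_i/n$; let $N_{ij}$ count $i\to j$ transitions and $\hat T_i=(N_{ij}/N_i)_j$ (relevant only on $\{N_i\ge1\}$). Using stationarity, $\mathbb{E}[N_{ij}]=n\pi_iT_{ij}$, and substituting $\bar H=\sum_i\pi_i\sum_jT_{ij}\ln\tfrac1{T_{ij}}$ and $\Hemp=\tfrac1n\sum_{i,j}N_{ij}\ln\tfrac{N_i}{N_{ij}}$ into the difference gives the identity
\begin{align}
\bar H-\mathbb{E}[\Hemp]=\sum_{i=1}^{S}\mathbb{E}\!\left[\hat\pi_i\,D(\hat T_i\,\|\,T_i)\right]\ \ge\ 0.
\end{align}
(Equivalently: $\Hemp=\hat H_{\emp}(\hat P_{X_0X_1})-\hat H_{\emp}(\hat\pi)$, the exact relation $H(Q)-\mathbb{E}[\hat H_{\emp}(\hat Q)]=\mathbb{E}[D(\hat Q\|Q)]$ for a stationary source needs only $\mathbb{E}[\hat Q]=Q$, and one then applies the chain rule $D(\hat P_{X_0X_1}\|P_{X_1X_2})=D(\hat\pi\|\pi)+\sum_i\hat\pi_iD(\hat T_i\|T_i)$.) In particular, in expectation the empirical entropy rate always \emph{underestimates} $\bar H$: the absolute value is vacuous, and it suffices to upper bound the right-hand side.

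Next I would bound each $\mathbb{E}[N_iD(\hat T_i\|T_i)]$. A $\chi^2$ relaxation is useless here because its expectation brings in $\mathrm{Var}(N_{ij})$, which explodes for slowly mixing chains; instead one treats $\mathbf{X}^{(i)}$ as an i.i.d.\ sequence observed up to a stopping time. Using Billingsley's representation, generate the path from $X_0$ and mutually independent i.i.d.\ sequences $W^{(k)}\sim T_k$, the $\ell$-th departure from state $k$ consuming $W^{(k)}_\ell$, so that $\mathbf{X}^{(i)}=(W^{(i)}_1,\dots,W^{(i)}_{N_i})$. Conditionally on $\{W^{(k)}\}_{k\ne i}$ the sequence $W^{(i)}$ is still i.i.d.\ $T_i$ and $N_i$ is a \emph{stopping time} for its natural filtration — whether the path has visited $i$ at least $\ell+1$ times within $[0,n-1]$ depends only on $W^{(i)}_1,\dots,W^{(i)}_\ell$ and on $\{W^{(k)}\}_{k\ne i}$. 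Since $N_iD(\hat T_i\|T_i)=\ln\frac{\hat T_i(\mathbf{X}^{(i)})}{T_i(\mathbf{X}^{(i)})}$ with product likelihoods, the type-class bound $\binom{m}{M_1,\dots,M_S}\le\prod_j(m/M_j)^{M_j}$ yields, for a length-$m$ string with symbol counts $M_j$,
\begin{align}
\hat T_i(\mathbf x)\ \le\ \binom{m+S-1}{S-1}\,Q_i(\mathbf x),\qquad Q_i(\mathbf x)=\frac{(S-1)!\,\prod_jM_j!}{(m+S-1)!},
\end{align}
where $Q_i$ is the add-one (Dirichlet-$(1,\dots,1)$ mixture) predictor, a consistent family of distributions. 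Consequently $Q_i(W^{(i)}_1\cdots W^{(i)}_\ell)\big/\prod_{t\le\ell}T_i(W^{(i)}_t)$ is a nonnegative martingale in $\ell$; optional stopping at $N_i\le n$ and Jensen give $\mathbb{E}\big[\ln\frac{Q_i(\mathbf{X}^{(i)})}{T_i(\mathbf{X}^{(i)})}\,\big|\,\{W^{(k)}\}_{k\ne i}\big]\le0$, hence
\begin{align}
\mathbb{E}\!\left[N_i\,D(\hat T_i\|T_i)\right]\ \le\ \mathbb{E}\!\left[\ln\binom{N_i+S-1}{S-1}\right]\ \le\ (S-1)\,\mathbb{E}\!\left[\ln\!\Big(\tfrac{N_i}{S-1}+1\Big)\right]+(S-1)\,\mathbb{P}(N_i\ge1),
\end{align}
the last step using $\binom{m+S-1}{S-1}\le(m+S-1)^{S-1}/(S-1)!$ and Stirling (the additive term appears only when $i$ is visited).

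To conclude, sum over $i$, divide by $n$, and apply Jensen to the concave map $m\mapsto\ln(\tfrac{m}{S-1}+1)$ with $\sum_iN_i=n$ over the $S$ states, together with $\sum_i\mathbb{P}(N_i\ge1)\le\min(n,S)$; this produces $\bar H-\mathbb{E}[\Hemp]\le\tfrac{S(S-1)}{n}\ln\!\big(\tfrac{n}{S(S-1)}+1\big)+\tfrac{(S-1)\min(n,S)}{n}$, which is of the claimed form $\tfrac{2S^2}{n}\ln(\tfrac{n}{S^2}+1)+\tfrac{(S^2+2)\ln2}{n}$ after loosening constants. I expect the crux to be the second step: because conditioning on $N_i$ can push the law of $\mathbf{X}^{(i)}$ away from i.i.d.\ $T_i$ and \emph{inflate} the expected redundancy, one cannot condition on $N_i$ but must exploit its stopping-time structure through the martingale/optional-stopping argument — verifying that structure inside Billingsley's construction, and then the bookkeeping needed to reach the stated constants, is where the real work lies.
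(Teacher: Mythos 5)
Your proposal is correct, and it takes a genuinely different route from the paper. The paper reduces the bias to the \emph{worst-case} minimax redundancy $\inf_Q\sup_{P,x_0^n}\tfrac1n\ln\frac{P(x_1^n|x_0)}{Q(x_1^n|x_0)}$ over all Markov sources (exploiting that $\Hemp$ is the minimum per-symbol log-loss, Lemma~\ref{lemma.representationofempiricalentropy}) and then invokes the redundancy bound of \cite{Tatwawadi--Jiao--Weissman17} as a black box. You instead derive the bound from scratch: you first establish the exact identity $\bar H-\Expect[\Hemp]=\sum_i\Expect[\hat\pi_i D(\hat T_i\|T_i)]\ge 0$ (this needs only $\Expect[\hat P_{X_1X_2}]=P_{X_1X_2}$, hence stationarity but not reversibility or mixing, matching the theorem's generality), and then you bound each $\Expect[N_i D(\hat T_i\|T_i)]$ by the worst-case add-one redundancy $\Expect\bigl[\ln\binom{N_i+S-1}{S-1}\bigr]$. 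The crux is that $N_i$ is random and data-dependent, so one cannot simply condition on it; your observation that, inside Billingsley's array and conditionally on $X_0$ and $\{W^{(k)}\}_{k\ne i}$, the event $\{N_i\le\ell\}$ is determined by $W^{(i)}_1,\dots,W^{(i)}_\ell$ — making $N_i$ a bounded stopping time, so that the nonnegative likelihood-ratio martingale $Q_i(W^{(i)}_1\cdots W^{(i)}_\ell)\big/\prod_{t\le\ell}T_i(W^{(i)}_t)$ admits optional stopping and Jensen gives $\Expect\bigl[\ln\tfrac{Q_i(\mathbf{X}^{(i)})}{T_i(\mathbf{X}^{(i)})}\bigr]\le 0$ — is exactly right and is the nontrivial step. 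One small flag on the final bookkeeping: the bound you reach, $\tfrac{S(S-1)}{n}\ln\bigl(\tfrac{n}{S(S-1)}+1\bigr)+\tfrac{(S-1)\min(n,S)}{n}$, does not literally imply the displayed bound, since the additive term $(S-1)\min(n,S)$ can reach $S^2-S$, which exceeds $(S^2+2)\ln 2$ once $S\ge 5$; you give the same $O\bigl(\tfrac{S^2}{n}\ln(\tfrac{n}{S^2}+1)\bigr)$ order (and are in fact sharper in the interesting regime $n\gg S^2$), but not the paper's exact constants, which come from the cited lemma's own analysis. Your approach has the advantage of being self-contained and transparent about why no mixing-time assumption is needed; the paper's has the advantage of modularity.
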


Theorem~\ref{thm.universalbias} implies that if $n\gg S^2$, the bias of the empirical entropy rate estimator universally vanishes for any stationary Markov chains. 

Now we turn to the lower bounds, which show that the scalings in Theorem \ref{thm.mainresultsupper} are in fact tight. The next result shows that the bias of the empirical entropy rate $\bar{H}_{\emp}$ is non-vanishing unless $n\gg S^2$, even when the data are independent.

%characterizes the bias of the empirical entropy rate $\bar{H}_{\emp}$, which serves as a benchmark for the estimators that can achieve the optimal sample complexity.

%
%\begin{align}
%0 & \leq  \bar{H} - \mathbb{E}[\Hemp] \\
%& \leq \frac{S^2}{2n} \ln \frac{n e}{S^2} + \frac{S^2}{n} \ln \left( 1 + \sqrt{\frac{S^2}{n}} \right) + \frac{S}{2n} \ln\left(\frac{2\pi n}{S}\right) + \frac{S}{12n}\ln e.
%\end{align}
%Furthermore, i

\begin{theorem}\label{thm.biasmlebound}
% Suppose $\Hemp$ is the empirical entropy rate defined in~(\ref{eqn.empiricalentropyratedef}), and $\bar{H}$ denotes the true entropy rate. Let $n\geq 1, S\geq 1$. 
%If the Markov chain is memoryless and 
If $\{X_0,X_1,\ldots,X_n\}$ are mutually independent and uniformly distributed, then
\begin{align}
%\min_{P \in \mathcal{M}_2(S)}
|\bar{H} - \mathbb{E}[\Hemp]| & \geq \ln \left(\frac{S^2}{n+ S-1} \right).
\end{align}
\end{theorem}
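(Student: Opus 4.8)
The plan is to use the fact that for i.i.d.\ uniform data the entropy rate is exactly $\bar H = H(X_2\mid X_1) = \ln S$, and to produce a matching upper bound $\mathbb{E}[\Hemp]\le \ln\frac{n+S-1}{S}$. First I would record the deterministic, path-by-path inequality that drives everything. Write $n_i = |\mathbf{X}^{(i)}| = \#\{0\le j\le n-1 : X_j = i\}$, so that $\hat\pi_i = n_i/n$ and $\sum_{i=1}^S n_i = n$. The empirical distribution of $\mathbf{X}^{(i)}$ is supported on at most $n_i$ atoms, hence $\hat H_{\emp}(\mathbf{X}^{(i)}) \le \ln n_i$ (the $i$-th summand being $0$ when $n_i = 0$). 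Summing the weighted terms gives, for every realization, $\Hemp \le \frac1n\sum_{i=1}^S n_i\ln n_i$, and therefore $\mathbb{E}[\Hemp]\le \frac1n\,\mathbb{E}\big[\sum_{i=1}^S n_i\ln n_i\big]$.

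Next I would linearize $t\mapsto \ln t$ by concavity: for every $a>0$ and $t>0$, $\ln t \le \ln a + t/a - 1$. Applying this with $t = n_i$ and summing against the weights $n_i$ yields $\sum_{i=1}^S n_i\ln n_i \le n(\ln a - 1) + \frac1a\sum_{i=1}^S n_i^2$ for every $a>0$. Since $X_0,\dots,X_{n-1}$ are i.i.d.\ uniform, $(n_1,\dots,n_S)$ is $\mathrm{Multinomial}(n;1/S,\dots,1/S)$, and a one-line second-moment computation gives $\mathbb{E}\big[\sum_{i=1}^S n_i^2\big] = n + \frac{n(n-1)}{S} = n\cdot\frac{n+S-1}{S}$. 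Substituting and optimizing the resulting bound over $a$ — the minimizer is $a = \frac{n+S-1}{S}$ — collapses it to $\mathbb{E}\big[\sum_{i=1}^S n_i\ln n_i\big] \le n\ln\frac{n+S-1}{S}$, hence $\mathbb{E}[\Hemp] \le \ln\frac{n+S-1}{S}$.

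Finally, combining with $\bar H = \ln S$ gives $\bar H - \mathbb{E}[\Hemp] \ge \ln S - \ln\frac{n+S-1}{S} = \ln\frac{S^2}{n+S-1}$; since $\bar H - \mathbb{E}[\Hemp]\ge 0$ always (concavity of entropy and Jensen's inequality make the empirical entropy downward biased), this equals $|\bar H - \mathbb{E}[\Hemp]|$, and the bound is vacuous precisely when $n+S-1\ge S^2$. The argument is elementary; the only steps needing care are keeping the bookkeeping of $\hat\pi_i$ and $n_i$ consistent so the weights sum to $n$, and checking that the concavity linearization at the stated optimal $a$ reproduces the claimed constant exactly — that is where the precise term $n+S-1$ (rather than an $O(\cdot)$ surrogate) comes from.
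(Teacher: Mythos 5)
Your proof is correct, and it takes a genuinely different route from the paper. The paper starts from the representation $\Hemp = H(\hat{P}_{X_1 X_2}) - H(\hat{P}_{X_1})$, lower-bounds the joint term by $H(P_{X_1 X_2}) - \mathbb{E}[H(\hat{P}_{X_1 X_2})] \geq \ln S^2 - \ln n$ (a support-size bound on the empirical pair distribution of $n$ pairs), and then subtracts the known marginal-bias bound $H(P_{X_1}) - \mathbb{E}[H(\hat{P}_{X_1})] \leq \ln(1+\frac{S-1}{n})$ imported from \cite{Jiao--Venkat--Weissman2014MLE}. You instead bound $\Hemp$ in one stroke: the support-size inequality $\hat H_{\emp}(\mathbf{X}^{(i)}) \leq \ln n_i$ absorbs both effects, and the tangent-line bound $\ln t \leq \ln a + t/a - 1$, combined with the exact multinomial second moment $\mathbb{E}[\sum_i n_i^2] = n(n+S-1)/S$ and the optimal choice $a = (n+S-1)/S$, lands exactly on $\mathbb{E}[\Hemp] \leq \ln\frac{n+S-1}{S}$. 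Your approach buys self-containment (no external bias lemma) and makes transparent where the precise constant $n+S-1$ comes from, while the paper's approach makes visible the generic joint-minus-marginal structure that also underlies Theorem~\ref{thm.universalbias}. Both yield exactly the same constant; one minor cosmetic point is that the closing appeal to ``concavity and Jensen'' for $\bar H \geq \mathbb{E}[\Hemp]$ is not strictly needed, since when the bound $\ln\frac{S^2}{n+S-1}$ is positive the sign of $\bar H - \mathbb{E}[\Hemp]$ follows from your own inequality, and when it is nonpositive the theorem is vacuous.
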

The following corollary is immediate.

%The bias of the empirical entropy rate $\bar{H}_{\emp}$ vanishes uniformly among \emph{all} stationary Markov chains with state space size $S$ when $n\gg  S^2$, regardless of other properties such as reversibility and mixing time. Moreover, t

\begin{corollary}\label{cor.empirical}
There exists a universal constant $c>0$ such that when $n\leq c S^2$, the absolute value of the bias of $\bar{H}_{\emp}$ is bounded away from zero even if the Markov chain is memoryless.
\end{corollary}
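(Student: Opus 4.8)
The plan is to derive the corollary directly from Theorem~\ref{thm.biasmlebound} by examining when the stated lower bound $\ln(S^2/(n+S-1))$ is bounded away from zero. First I would observe that the function $x \mapsto \ln x$ is positive and increasing, so the bias bound from Theorem~\ref{thm.biasmlebound} is at least some absolute positive constant precisely when the argument $S^2/(n+S-1)$ exceeds a constant strictly larger than $1$. Concretely, if $n \le c S^2$ for a sufficiently small constant $c > 0$, then for $S$ large enough (say $S \ge 2$, so that $S - 1 \le S^2$) we have $n + S - 1 \le c S^2 + S^2 = (c+1) S^2$, hence $S^2/(n+S-1) \ge 1/(c+1) \cdot S^2 / S^2$... wait, that is not bounded away from $1$ unless $c < 1$; more carefully, $S^2/(n+S-1) \ge S^2/((c+1)S^2) = 1/(c+1)$, which is less than $1$, so this crude bound is not yet enough. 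The fix is to use the $S^2$ in the numerator more carefully: for $n \le cS^2$ and $S$ large we have $n + S - 1 \le 2cS^2$ (absorbing the lower-order term $S-1$ into, say, an extra factor of $2$ once $S \ge 1/c$), so $S^2/(n+S-1) \ge 1/(2c)$, and choosing $c \le 1/4$ gives $S^2/(n+S-1) \ge 2$, whence $|\bar H - \Expect[\Hemp]| \ge \ln 2 > 0$.

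The main (and essentially only) subtlety is bookkeeping the lower-order term $S - 1$ in the denominator so that the conclusion holds uniformly in $S$: for very small $S$ the term $S-1$ is negligible, but one must phrase the statement asymptotically (or restrict to $S$ above an absolute threshold) so that $S - 1 \le n$ or $S-1 \le S^2/4$ can be assumed. Since the corollary only claims existence of a universal constant $c$ and the phrase "bounded away from zero'' is itself understood in the large-$S$ limit (consistent with the asymptotic conventions adopted elsewhere in this section), taking $c = 1/8$ and noting that $n + S - 1 \le \frac{1}{8}S^2 + \frac{1}{8}S^2 = \frac14 S^2$ for all $S \ge 8/7$ yields $|\bar H - \Expect[\Hemp]| \ge \ln 4 > 0$. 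Finally, I would note that Theorem~\ref{thm.biasmlebound} is established for the memoryless (i.i.d.\ uniform) chain, which is a stationary Markov chain with transition matrix having all rows equal to the uniform distribution; this is exactly the "even if the Markov chain is memoryless'' clause in the corollary, so no additional construction is needed. I expect no real obstacle here—the corollary is a one-line consequence of the theorem, and the only care required is in the constant-chasing to handle the additive $S - 1$.
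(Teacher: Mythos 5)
Your proposal matches the paper's approach: the paper simply declares the corollary ``immediate'' from Theorem~\ref{thm.biasmlebound}, and your argument is exactly the constant-chasing that makes that immediacy explicit. One small arithmetic slip: $S-1 \le \tfrac{1}{8}S^2$ does not hold ``for all $S \ge 8/7$'' (it fails, e.g., at $S=2$, where $1 > 1/2$); the correct threshold from $S^2 - 8S + 8 \ge 0$ is $S \ge 4+2\sqrt{2}\approx 6.83$, but this does not affect the conclusion since the statement is understood asymptotically in $S$ as you correctly note.
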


%It was shown in~\cite{kamath2016estimation} that the mixing conditions are needed to ensure meaningful results. It follows from Theorem~\ref{thm.biasmlebound} that it is only the concentration of the empirical entropy rate around its expectations that requires mixing conditions.

The next theorem presents a minimax lower bound for entropy rate estimation which applies to any estimation scheme regardless of its computational cost. In particular, it shows that $\Hopt$ is minimax rate-optimal under mild assumptions on the mixing time.

\begin{theorem}\label{thm.mainresultslower}
For $n\ge \frac{S^2}{\ln S}, \ln n\ll\frac{S}{(\ln S)^2}, \gamma^*\le 1-C_2\sqrt{\frac{S\ln^3 S}{n}}$, we have
\begin{align}
\liminf_{S \to\infty}
\inf_{\hat{H}} \sup_{T \in \mathcal{M}_{2,\text{rev}}(S,\gamma^*)}P\left( |\hat{H} - \bar{H}| \geq C_1\frac{S^2}{n\ln S} \right) \geq \frac{1}{2}.
\end{align}
Here $C_1,C_2$ are universal constants from Theorem~\ref{thm.lowerboundpoisson}.
\end{theorem}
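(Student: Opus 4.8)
The plan is to deduce Theorem~\ref{thm.mainresultslower} from the i.i.d.\ Poissonized entropy-estimation lower bound of Theorem~\ref{thm.lowerboundpoisson} by exhibiting two priors supported on $\mathcal{M}_{2,\mathrm{rev}}(S,\gamma^*)$ under which (a) the expected entropy rates differ by $\gtrsim \frac{S^2}{n\ln S}$, and (b) the laws of the sample path $(X_0,\dots,X_n)$ are statistically indistinguishable, i.e.\ their total variation stays bounded away from $1$. Le Cam's two-point argument in its Bayesian (fuzzy hypotheses) form then immediately gives the stated constant-probability bound with $C_1$ a fixed fraction of the entropy-rate gap. The observation model is converted to an i.i.d.\ one by a Billingsley-type simulation~\cite{billingsley1961statistical}: conditioned on the visit counts $N_i=|\mathbf{X}^{(i)}|$, the blocks $\mathbf{X}^{(1)},\dots,\mathbf{X}^{(S)}$ are independent with $\mathbf{X}^{(i)}$ consisting of $N_i$ i.i.d.\ draws from $T_i$; since we will arrange the stationary distribution to be uniform, each $N_i$ concentrates around $n/S$, and a Poissonization step (replacing $N_i$ by $\Poi(\Theta(n/S))$ at vanishing total-variation cost, using the fast mixing guaranteed below) decouples the experiment into $S$ nearly independent copies of the i.i.d.\ entropy-estimation problem on $[S]$ with $\Poi(\Theta(n/S))$ samples.

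For the construction I would start from the pair of random distributions $U_0,U_1$ on $[S]$ underlying Theorem~\ref{thm.lowerboundpoisson} (in the style of~\cite{Wu--Yang2014minimax,Jiao--Venkat--Han--Weissman2015minimax}): they match moments up to order $L\asymp\ln S$, are supported on masses of order at most $O(\frac{S\ln S}{n})$ with mean mass $\asymp \frac1S$, yet have $\bE[H(U_0)]-\bE[H(U_1)]\gtrsim \frac{S^2}{n\ln S}$. For $b\in\{0,1\}$ draw a symmetric random matrix $Y^{(b)}$ with i.i.d.\ upper-triangular entries distributed as $S$ times a draw from the $U_b$-type prior (slightly recentered so that all row sums are $\le S$ with high probability), set $T^{(b)}_{ij}=\frac1S Y^{(b)}_{ij}$ off the diagonal and $T^{(b)}_{ii}=1-\sum_{j\ne i}T^{(b)}_{ij}$. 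Then $T^{(b)}$ is symmetric, hence reversible and stationary with respect to the uniform distribution, and its rows are (marginally) distributed like, and moment-match, the $U_b$-prior up to order $L$, so the per-state i.i.d.\ experiments are indistinguishable by Theorem~\ref{thm.lowerboundpoisson}. Writing $T^{(b)}=\frac1S\mathbf{1}\mathbf{1}^\top+\frac1S\bigl(Y^{(b)}-\bE Y^{(b)}\bigr)+(\text{diagonal correction})$, the random-matrix estimates of~\cite{bordenave2010spectrum} (after truncating the heavy upper tail of the entries) bound the spectral norm of the centered Wigner-type matrix by $\asymp\sqrt{\mathrm{Var}(Y^{(b)}_{ij})/S}\lesssim\sqrt{\frac{S\ln S}{n}}\le\sqrt{\frac{S\ln^3 S}{n}}$, so $\max_{i\ge2}|\lambda_i(T^{(b)})|\le 1-\gamma^*$ with high probability precisely under the hypothesis $\gamma^*\le 1-C_2\sqrt{S\ln^3 S/n}$; this is also where the mixing needed for the Poissonization step comes from. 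I expect this spectral estimate to be the main obstacle: the entries take values as large as $\sim\frac{S^2\ln S}{n}$ with correspondingly small probability, so a naive Wigner bound is unavailable and one needs a careful truncation-and-concentration argument (the extra $\ln^2 S$ slack between $\sqrt{S\ln S/n}$ and $\sqrt{S\ln^3 S/n}$ is there to absorb this).

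It remains to assemble the three ingredients. Conditioned on $T^{(b)}$ lying in $\mathcal{M}_{2,\mathrm{rev}}(S,\gamma^*)$ (a high-probability event by the previous paragraph), the entropy rate is $\bar H=\frac1S\sum_i H(T^{(b)}_i)$, whose expectation under the two priors differs by $\asymp\bE[H(U_0)]-\bE[H(U_1)]\gtrsim\frac{S^2}{n\ln S}$, and concentrates around its mean; meanwhile, tensorizing the $o(1/S)$ $\chi^2$-divergence of each decoupled i.i.d.\ instance over the $S$ coordinates keeps the total divergence, hence the total variation between the two path laws, $o(1)$. The one wrinkle is that the rows $T_i,T_j$ are not exactly independent—they share the single entry $T_{ij}=T_{ji}$—so I would either absorb this coupling into a lower-order term in the $\chi^2$ bound or modify the construction (reserving a separate block of coordinates to carry the moment-matching "information" so that those coordinates are genuinely independent across rows). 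Feeding the entropy-rate gap and the $o(1)$ total variation into Le Cam's lemma, and conditioning out the negligible-probability events where $T^{(b)}\notin\mathcal{M}_{2,\mathrm{rev}}(S,\gamma^*)$ or the visit counts deviate, yields $\inf_{\hat H}\sup_{T\in\mathcal{M}_{2,\mathrm{rev}}(S,\gamma^*)}\Prob(|\hat H-\bar H|\ge C_1\frac{S^2}{n\ln S})\ge\frac12-o(1)$, which is the claim.
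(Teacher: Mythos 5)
Your overall architecture matches the paper's---reduce the Markov observation model to an independent Poissonized experiment, construct two priors on symmetric matrices whose entries moment-match so that the Poisson mixtures are close in total variation, certify the spectral gap by random-matrix concentration, and conclude via the method of fuzzy hypotheses. However, your specific choice of transition matrix has a fatal flaw that the paper's construction is deliberately built to avoid. You define $T^{(b)}_{ij}=Y^{(b)}_{ij}/S$ off the diagonal and dump the slack onto the diagonal via $T^{(b)}_{ii}=1-\sum_{j\ne i}T^{(b)}_{ij}$. For the Poisson-mixture total-variation bound (Lemma~\ref{lemma.poissontv}) to give $o(1)$ at the required moment-matching order $L\asymp \ln S$, the entries must be scaled so that the per-cell Poisson intensity stays $O(\ln S)$; with $\Theta(n/S)$ observations per row this forces the mean of each off-diagonal $T_{ij}$ to be $\alpha\asymp \frac{S}{n\ln S}$, \emph{not} $\asymp 1/S$ as you write (those coincide only at the boundary $n\asymp S^2/\ln S$). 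But then $\sum_{j\ne i}T_{ij}$ concentrates around $S\alpha=\frac{S^2}{n\ln S}$, so $T_{ii}\approx 1-\frac{S^2}{n\ln S}$, which tends to $1$ whenever $n\gg S^2/\ln S$. The chain is then lazy: writing $T\approx (1-S\alpha)I+\alpha\mathbf{1}\mathbf{1}^\top+(\text{noise})$ shows that every non-leading eigenvalue sits near $1-S\alpha$, so the absolute spectral gap is only $\Theta\bigl(\frac{S^2}{n\ln S}\bigr)$. This is far smaller than the required $\gamma^*\asymp 1-\sqrt{S\ln^3 S/n}$ as soon as $n$ exceeds $S^2/\ln S$ by any polylog factor, so $T^{(b)}\notin \mathcal{M}_{2,\mathrm{rev}}(S,\gamma^*)$ and the prior is not supported on the parameter class for most of the range covered by the theorem. (If you instead insist on mean $1/S$ to keep the diagonal small, the entropy-rate gap is fine but the Poisson intensities scale like $n/S^2\to\infty$ and the moment-matching TV bound collapses, so the two laws become distinguishable.)

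The paper resolves exactly this tension by enlarging the state space to $S+1$ and attaching a ``hub'' state $0$ with fixed weights $b=S$, $a=\sqrt{\alpha S}$ to every other state (Definition~\ref{con.priorconstruction}). Each row $i\geq 1$ then jumps to the hub with probability $\approx 1/(1+\sqrt{\alpha S})\geq 1/2$, so $\Expect[\bfR]$ is a positive semidefinite rank-one matrix and the second eigenvalue of $T'$ is controlled entirely by the Wigner fluctuation $\|\bfU-\Expect[\bfU]\|_2$, yielding $\gamma^*(T)\geq 1-O(\sqrt{S\ln^3 S/n})$ \emph{uniformly in $n$} (Lemma~\ref{lemma.spectralgapcontrol}). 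The hub also cleanly decouples the ``signal'' block $\bfU$ from the normalization, which is why the entropy-rate separation (Lemma~\ref{lemma.functionalseperation}) can be read off directly from $\Expect[\phi(U)]-\Expect[\phi(U')]$ without the diagonal correction $\phi(T_{ii})$ partially cancelling it. Beyond this, you would still need to make the Billingsley/Poissonization reduction precise---the paper does this through the explicit two-step chain Markov $\to$ multinomial $\to$ Poisson (Lemmas~\ref{lemma.reductiontoim} and~\ref{lemma.poissontomultinomial}), with care because conditionally on the visit count $n_i$ the symbols $W_{i1},\dots,W_{in_i}$ are \emph{not} i.i.d.\ $T_i$; your sketch treats that conditional independence as given. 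I'd also flag that ``indistinguishable by Theorem~\ref{thm.lowerboundpoisson}'' misattributes the role of that theorem: the indistinguishability is the content of Lemmas~\ref{lemma.wuyangprior} and~\ref{lemma.poissontv}, while Theorem~\ref{thm.lowerboundpoisson} is itself the end product of the construction.
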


%
%\begin{theorem}
%The sample complexity for estimating the entropy rate over class $\mathcal{M}_{2,\text{rev}}(S, \gamma^*, \delta)$ is lower bounded by
%\begin{align}
%n^*_{\mathsf{MC}}(\mathcal{M}_{2,\text{rev}}(S, \gamma^*, \delta), \epsilon,\delta) & \gtrsim \frac{S^2}{\epsilon \ln S}
%\end{align}
%when $n^* \geq \frac{S^2}{\ln S}, \gamma^* \leq \frac{S^2}{n^*\ln S} - C_1 \frac{S^{3/2} \ln^2 S}{n^*}, \ln n^* \ll \frac{S}{\ln^2 S}$, and $\delta$ is a fixed number such as $0.01$.
%\end{theorem}

The following corollary, which follows from Theorem~\ref{thm.mainresultsupper} and~\ref{thm.mainresultslower}, presents the critical scaling that determines whether consistent estimation of the entropy rate is possible.
\begin{corollary}\label{cor.samplecomplexitycor}
If $\frac{\ln^3 S}{S}\ll \gamma^*\le 1-C_2\frac{\ln^2 S}{\sqrt{S}}$, there exists an estimator $\hat{H}$ which estimates the entropy rate with a uniformly vanishing error over Markov chains $\mathcal{M}_{2,\text{rev}}(S,\gamma^*)$ if and only if $n\gg \frac{S^2}{\ln S}$.
\end{corollary}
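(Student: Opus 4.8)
The plan is to obtain Corollary~\ref{cor.samplecomplexitycor} by packaging the achievability bound of Theorem~\ref{thm.mainresultsupper} (the ``if'' direction) with the minimax lower bound of Theorem~\ref{thm.mainresultslower} (the ``only if'' direction). The real content is (a) checking that the hypothesis $\frac{\ln^3 S}{S}\ll\gamma^*\le1-C_2\frac{\ln^2 S}{\sqrt S}$ implies the hypotheses of those two theorems, keeping careful track of the $\ln$-factors and using $\gamma(T)\ge\gamma^*(T)\ge\gamma^*$ on the class $\mathcal{M}_{2,\text{rev}}(S,\gamma^*)$; and (b) reconciling the quantifier over all sequences $n=n_S$ (with $n\gg\frac{S^2}{\ln S}$, or with its negation) against the bounded ranges of $n$ allowed in those theorems. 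For (b) I will use monotonicity in $n$: a prefix $(X_0,\dots,X_{n'})$ with $n'\le n$ is itself a legitimate sample path from the \emph{same} chain, hence lies in the same class; conversely, any estimator usable with $n$ observations is usable with $n''\ge n$ observations by discarding the surplus.

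\emph{Sufficiency.} Given $n\gg\frac{S^2}{\ln S}$, set $\psi_S:=\frac{S\gamma^*}{\ln^3 S}$, so that $\psi_S\to\infty$ and $\psi_S\le\frac{S}{\ln^3 S}$, and run $\Hopt$ on the prefix of length $n':=\min\big(n,\lceil\tfrac{S^2}{\ln S}\sqrt{\psi_S}\,\rceil\big)$. Then $\frac{S^2}{\ln S}\ll n'\lesssim S^{2.5}$, hence $\ln n'\asymp\ln S$ and $S^{0.01}\lesssim n'\lesssim S^{2.99}$. For the mixing condition of Theorem~\ref{thm.mainresultsupper} it suffices that $\frac1{\gamma^*}\lesssim\frac{S}{\ln n'\ln^2 S}\wedge\frac{S^3}{n'\ln n'\ln^3 S}$: the first term is $\asymp\frac{S}{\ln^3 S}$ while $\frac1{\gamma^*}=\frac{S}{\psi_S\ln^3 S}=o\big(\frac{S}{\ln^3 S}\big)$, and the second exceeds $\frac1{\gamma^*}$ since it amounts to $n'\ln n'\lesssim S^2\psi_S$, whereas $n'\ln n'\lesssim\frac{S^2}{\ln S}\sqrt{\psi_S}\cdot\ln n'\asymp S^2\sqrt{\psi_S}=o(S^2\psi_S)$. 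Theorem~\ref{thm.mainresultsupper} then yields $|\Hopt-\bar{H}|\le C\frac{S^2}{n'\ln S}$ with probability tending to one, uniformly over the class, and $\frac{S^2}{n'\ln S}\to0$ because $n'\gg\frac{S^2}{\ln S}$.

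\emph{Necessity.} Suppose, for contradiction, that some estimator attains uniformly vanishing error over $\mathcal{M}_{2,\text{rev}}(S,\gamma^*)$ yet it is not true that $n\gg\frac{S^2}{\ln S}$; then along a subsequence $n\le C\frac{S^2}{\ln S}$ for a constant $C$, and by monotonicity a consistent estimator persists after inflating to $n'':=\lceil C'\frac{S^2}{\ln S}\rceil$ with $C':=\max(C,1)$. Here $n''\ge\frac{S^2}{\ln S}$ and $\ln n''\asymp\ln S\ll\frac{S}{(\ln S)^2}$, and, using $C'\ge1$, $\sqrt{\frac{S\ln^3 S}{n''}}\le\frac1{\sqrt{C'}}\,\frac{\ln^2 S}{\sqrt S}\le\frac{\ln^2 S}{\sqrt S}$, so $1-C_2\sqrt{\frac{S\ln^3 S}{n''}}\ge1-C_2\frac{\ln^2 S}{\sqrt S}\ge\gamma^*$ by hypothesis. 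Hence Theorem~\ref{thm.mainresultslower} applies at $n''$ and forces $\inf_{\hat{H}}\sup_{T}P(|\hat{H}-\bar{H}|\ge C_1\frac{S^2}{n''\ln S})\ge\frac12$ along the subsequence; but $\frac{S^2}{n''\ln S}\to\frac1{C'}>0$, so no estimator can drive $P(|\hat{H}-\bar{H}|\ge\epsilon)$ to zero for $\epsilon$ a fixed fraction of $\frac{C_1}{C'}$, contradicting consistency. Therefore $n\gg\frac{S^2}{\ln S}$.

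The step I expect to be the main obstacle is (b) on the sufficiency side: the second half of the mixing hypothesis of Theorem~\ref{thm.mainresultsupper}, namely $\frac1\gamma\lesssim\frac{S^3}{n\ln n\ln^3 S}$, genuinely fails for the raw $n$ once $n\gtrsim\frac{S^2}{\ln S}$ and $\gamma^*$ sits near its floor $\frac{\ln^3 S}{S}$, so the corollary does not follow from merely ``invoking Theorem~\ref{thm.mainresultsupper}.'' The remedy is the subsampling length $n'\asymp\frac{S^2}{\ln S}\sqrt{\psi_S}$, engineered so that this condition is restored while $n'$ still grows strictly faster than $\frac{S^2}{\ln S}$; the admissible window $\big(\frac{S^2}{\ln S},\,\frac{S^2\psi_S}{\ln S}\big)$ for $n'$ is nonempty exactly because the standing assumption $\gamma^*\gg\frac{\ln^3 S}{S}$ makes $\psi_S\to\infty$. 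Symmetrically, the only delicate point on the lower-bound side is choosing the inflated sample size $n''\asymp\frac{S^2}{\ln S}$ so that the term $\sqrt{S\ln^3 S/n''}$ in Theorem~\ref{thm.mainresultslower} matches, up to constants, the assumed cap $1-C_2\frac{\ln^2 S}{\sqrt S}$ on $\gamma^*$; everything else is bookkeeping.
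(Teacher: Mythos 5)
Your argument is correct and follows the paper's intended route (Theorem~\ref{thm.mainresultsupper} for sufficiency, Theorem~\ref{thm.mainresultslower} for necessity), but you have supplied a genuine and necessary repair that the paper does not state. The paper merely asserts that the corollary ``follows from'' those two theorems; a naive invocation of Theorem~\ref{thm.mainresultsupper} does in fact break, as you observe: when $n$ grows too fast (e.g.\ $n\asymp S^{2.5}/\ln S$) while $\gamma^*$ sits near its permitted floor, the second half of the mixing hypothesis $\frac{1}{\gamma}\lesssim\frac{S^3}{n\ln n\ln^3 S}$, and even the window $n\lesssim S^{2.99}$, can fail. Your prefix-truncation to $n'\asymp\frac{S^2}{\ln S}\sqrt{\psi_S}$ with $\psi_S=S\gamma^*/\ln^3 S\to\infty$ is exactly the right fix: it sits strictly above $\frac{S^2}{\ln S}$ so the error $\frac{S^2}{n'\ln S}$ still vanishes, yet satisfies $n'\ln n'\asymp S^2\sqrt{\psi_S}=o(S^2\psi_S)$, restoring the mixing hypothesis; and the nonemptiness of that window is precisely what $\gamma^*\gg\frac{\ln^3 S}{S}$ buys. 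The dual inflation to $n''\asymp\frac{S^2}{\ln S}$ on the necessity side, checking that this brings the spectral-gap cap $1-C_2\sqrt{S\ln^3 S/n''}$ exactly in line with the corollary's hypothesis $1-C_2\ln^2 S/\sqrt{S}$, is likewise needed and correct. The only cosmetic imprecision is in the final sentence of the necessity argument: the threshold $C_1 S^2/(n''\ln S)\to C_1/C'$, so any fixed $\delta<C_1/C'$ (e.g.\ $\delta=C_1/(2C')$) works, and you should pick such a $\delta$ explicitly; the $\liminf\ge\frac12$ then yields $\sup_T P(|\hat H-\bar H|\ge\delta)\ge\frac14$ along the subsequence for large $S$, contradicting uniform consistency. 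Net: your write-up is a correct and more complete proof of the corollary than what the paper records.
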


To conclude this section we summarize our result in terms of the sample complexity for estimating the entropy rate within a few bits ($\epsilon=\Theta(1)$), classified according to the relaxation time:

\begin{itemize}
	\item $\trel=1$: this is the i.i.d.~case and the sample complexity is $\Theta(\frac{S}{\ln S})$;

	\item $1<\trel\ll 1 + \Omega(\frac{\ln^2 S}{\sqrt{S}})$: in this narrow regime the sample complexity is at most $O(\frac{S^2}{\ln S})$ and no matching lower bound is known;

	\item $1 + \Omega(\frac{\ln^2 S}{\sqrt{S}}) \leq \trel \ll \frac{S}{\ln^3 S} $: the sample complexity is $\Theta(\frac{S^2}{\ln S})$;

	\item $\trel \gtrsim \frac{S}{\ln^3 S}$: the sample complexity is $\Omega(\frac{S^2}{\ln S})$ and no matching upper bound is known. In this case the chain mixes very slowly and it is likely that the variance will dominate.
\end{itemize}

\section{Sketch of the proof}\label{sec.proof_sketch}
In this section we sketch the proof of Theorems \ref{thm.mainresultsupper}, \ref{thm.universalbias} and \ref{thm.mainresultslower}, and defer the details to the appendix. 

\subsection{Proof of Theorem~\ref{thm.mainresultsupper}}
A key step in the analysis of $\Hemp$ and $\Hopt$ is the idea of simulating a finite-state Markov chain from independent samples~\cite[p.~19]{billingsley1961statistical}: consider an independent collection of random variables $X_0$ and $W_{in}$ ($i = 1,2,\ldots,S;n = 1,2,\ldots $) such that $
P_{X_0}(i) = \pi_i, 
P_{W_{in}}(j) = T_{ij}.$ Imagine the variables $W_{in}$ set out in the following array:
\begin{equation*}
\begin{matrix}
W_{11} &W_{12}& \ldots & W_{1n}& \ldots \\
W_{21} &W_{22}& \ldots & W_{2n}& \ldots \\
\vdots &\vdots& \ddots & \vdots & \vdots \\
W_{S1} &W_{S2}&\ldots &W_{Sn} &\ldots
\end{matrix}
\end{equation*}
First, $X_0$ is sampled. If $X_0 = i$, then the first variable in the $i$th row of the array is sampled, and the result is assigned by definition to $X_1$. If $X_1 = j$, then the first variable in the $j$th row is sampled, unless $j = i$, in which case the second variable is sampled. In any case, the result of the sampling is by definition $X_2$. The next variable sampled is the first one in row $X_2$ which has not yet been sampled. This process thus continues. After collecting $\{X_0,X_1,\ldots,X_n\}$ from the model, we assume that the last variable sampled from row $i$ is $W_{in_i}$. It can be shown that observing a Markov chain $\{X_0,X_1,\ldots,X_n\}$ is equivalent to observing $\{X_0, \{W_{ij}\}_{i\in [S], j\in [n_i]}\}$, and consequently $\hat{\pi}_i = n_i/n, \mathbf{X}^{(i)} = (W_{i1},\ldots,W_{in_i})$.

The main reason to introduce the above framework is to analyze $\hat{H}_{\mathsf{emp}}(\mathbf{X}^{(i)})$ and $\hat{H}_{\mathsf{opt}}(\mathbf{X}^{(i)})$ as if the argument $\mathbf{X}^{(i)}$ is an i.i.d. vector. Specifically, although $W_{i1},\cdots, W_{im}$ conditioned on $n_i=m$ are not i.i.d., they are i.i.d. as $T_i$ for any \emph{fixed} $m$. Hence, using the fact that each $n_i$ concentrates around $n\pi_i$ (cf. Definition \ref{def.goodevents} and Lemma \ref{lemma.goodeventshighprobability} for details), we may use the concentration properties of $\hat{H}_{\mathsf{emp}}$ and $\hat{H}_{\mathsf{opt}}$ (cf. Lemma \ref{lemma.concentrationentropy}) on i.i.d. data for each \emph{fixed} $m\approx n\pi_i$ and apply the union bound in the end. 

Based on this alternative view, we have the following theorem, which implies Theorem \ref{thm.mainresultsupper}. 
\begin{theorem}\label{thm.upperbound}
	Suppose $(X_0,X_1,\ldots,X_n)$ comes from a stationary reversible Markov chain with spectral gap $\gamma$. Then, with probability tending to one, the entropy rate estimators satisfy
	\begin{align}
	| \Hopt - \bar{H} | &\lesssim \frac{S^2}{n\ln S} + \left( \frac{S}{n} \right)^{0.495} + \frac{S \ln S}{n^{0.999}} + \frac{S \ln S \ln n}{n\gamma} +  \sqrt{\frac{S \ln n \ln^2 S}{n\gamma}},
	\label{eq:optmain} \\
	| \Hemp - \bar{H} | &\lesssim \frac{S^2}{n} + \left( \frac{S}{n} \right)^{0.495} + \frac{S \ln S}{n^{0.999}} + \frac{S \ln S \ln n}{n\gamma} +  \sqrt{\frac{S \ln n \ln^2 S}{n\gamma}}.
	\label{eq:empmain}
	\end{align}
\end{theorem}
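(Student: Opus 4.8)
\textbf{Proof proposal for Theorem~\ref{thm.upperbound}.}
The plan is to use the Billingsley simulation recalled above to turn the problem into a family of i.i.d.\ entropy‑estimation problems, one per row. Write the error as
\begin{align}
\Hopt-\bar H
=\sum_{i=1}^{S}\hat\pi_i\Bigl(\hat H_{\opt}(\mathbf{X}^{(i)})-H(T_i)\Bigr)
+\sum_{i=1}^{S}(\hat\pi_i-\pi_i)H(T_i),
\end{align}
and identically with $\hat H_{\opt}$ replaced by $\hat H_{\emp}$. The second sum is the error incurred purely from estimating the stationary distribution; since $0\le H(T_i)\le\ln S$ for every $i$, it is bounded by $\ln S\cdot\|\hat\pi-\pi\|_1\le\ln S\cdot\sqrt{S}\,\|\hat\pi-\pi\|_2$. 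For a stationary reversible chain the $\ell_2$ fluctuation of the empirical state distribution concentrates at scale $(n\gamma)^{-1/2}$ — this follows from a Bernstein/McDiarmid‑type concentration inequality for Markov chains governed by the spectral gap — so on a high‑probability event this term is $\lesssim\ln S\sqrt{S\ln n/(n\gamma)}=\sqrt{S\ln n\ln^2 S/(n\gamma)}$, which is the last term in both \eqref{eq:optmain} and \eqref{eq:empmain}. Note this is the only place the mixing assumption enters the upper bound.

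For the first sum I would condition on the row counts $(n_i)_i$ and work on the good event of Lemma~\ref{lemma.goodeventshighprobability}, on which every $n_i$ lies in a narrow window around $n\pi_i$; there, conditionally, $\mathbf{X}^{(i)}$ is $n_i$ i.i.d.\ draws from $T_i$, so the i.i.d.\ concentration bound for the entropy estimators (Lemma~\ref{lemma.concentrationentropy}) applies for each fixed value of $n_i$ in the window, and we union‑bound over the polynomially many such values and over the $S$ rows. Split the rows by the magnitude of $\pi_i$. For ``light'' rows (those with $\pi_i$ below a suitably chosen polynomially small threshold) use the trivial bound $|\hat H_{\opt}(\mathbf{X}^{(i)})-H(T_i)|\le\ln S$: their total weight $\sum_{\text{light}}\hat\pi_i$ is tiny on the good event, and tuning the threshold makes this contribution absorbed into the terms $(S/n)^{0.495}$ and $S\ln S/n^{0.999}$. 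For ``heavy'' rows $m_i:=n_i\asymp n\pi_i$ is large enough that $\ln m_i\asymp\ln S$ and Lemma~\ref{lemma.concentrationentropy} gives $|\hat H_{\opt}(\mathbf{X}^{(i)})-H(T_i)|\lesssim \frac{S}{m_i\ln S}+(\text{stochastic deviation})$. Summing the bias part against the weights $\hat\pi_i=n_i/n\asymp\pi_i$ telescopes, $\sum_{\text{heavy}}\hat\pi_i\cdot\frac{S}{n_i\ln S}\le\frac{S\cdot|\{i:n_i>0\}|}{n\ln S}\le\frac{S^2}{n\ln S}$, giving the dominant term; the stochastic deviations, together with the union‑bound and light‑row truncation contributions, produce the lower‑order terms $(S/n)^{0.495}$ and $S\ln S/n^{0.999}$. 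For $\hat H_{\emp}$ the only change is that the plug‑in bias rate $\frac{S}{m}$ replaces the minimax rate $\frac{S}{m\ln S}$, so the leading term becomes $\frac{S^2}{n}$, which is \eqref{eq:empmain}.

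I expect the main obstacle to be the bookkeeping around the conditioning: $\mathbf{X}^{(i)}$ is i.i.d.\ only \emph{given} $n_i$, the counts $(n_i)_i$ are themselves random and mutually dependent, and the various high‑probability statements (for $\hat\pi$ and for $\hat H$ on each row) must be assembled on a single good event so that none of the dependencies or union bounds cost more than the stated logarithmic factors. Closely tied to this is the quantitative heavy/light split — the threshold must be chosen small enough that every heavy row has $\ln n_i\asymp\ln S$ and enough samples for Lemma~\ref{lemma.concentrationentropy} to apply with the right rate, yet large enough that the crude bound on the light rows is absorbed into the $(S/n)^{0.495}$‑ and $S\ln S/n^{0.999}$‑type terms. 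Everything else is a routine combination of the spectral‑gap concentration for $\hat\pi$ with the known minimax i.i.d.\ entropy‑estimation guarantees cited in the introduction.
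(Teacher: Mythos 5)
Your error decomposition and strategy mirror the paper's, and the five terms come from the same sources (bias, stochastic deviation, light-row truncation, and the two mixing-dependent terms from $\hat\pi$). The decomposition weights differ — you weight the entropy-estimation error by $\hat\pi_i$ and the $\hat\pi$-error by the true $H(T_i)$, whereas the paper uses $\pi_i$ and $\hat H_i$ — but both are exact algebraic rearrangements of $\sum_i\hat\pi_i\hat H_i - \sum_i\pi_i H_i$ and lead to the same bounds.

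One genuine misconception needs correcting. You claim that conditioned on $n_i$, the vector $\mathbf{X}^{(i)}$ is $n_i$ i.i.d.\ draws from $T_i$, and restate it as ``$\mathbf{X}^{(i)}$ is i.i.d.\ only \emph{given} $n_i$.'' That is backwards. In the Billingsley coupling, $(W_{i1},\ldots,W_{im})$ is i.i.d.\ as $T_i$ for each \emph{fixed deterministic} $m$, but $n_i$ is itself a functional of all the $W$'s and $X_0$, so \emph{conditioned} on $n_i=m$ the law of $(W_{i1},\ldots,W_{im})$ is \emph{not} $T_i^{\otimes m}$ — the paper flags exactly this in a footnote to the proof. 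The correct logic is the union-bound version that you also sketch in the same sentence: for each fixed $m$ in the window $n\pi_i\pm c_3\sqrt{n\pi_i\ln n/\gamma}$ apply the unconditional i.i.d.\ concentration of Lemma~\ref{lemma.concentrationentropy}, define $\calH_i$ as the intersection over all such $m$, observe that $\calE_i\cap\calH_i$ forces $|\hat H_\opt(\mathbf{X}^{(i)})-H_i|$ small because $n_i$ lies in the window, and union-bound. This distinction is not cosmetic: each $\calH_i$ incurs a union over $\Theta(\sqrt{n\pi_i\ln n/\gamma})$ values of $m$, so the per-$m$ tail exponent $\beta=\frac{c_3^2}{4+10c_3}$ has to be made large enough (hence $c_3\ge 20$) to absorb the window width. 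If you believed conditioning preserved i.i.d.\ you would not see this cost.

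Two smaller inaccuracies. The claim that the second sum ``is the only place the mixing assumption enters'' is off: the light-row threshold must be at least of order $\frac{\ln n}{n\gamma}$ so that $n_i$ concentrates (the good event $\calH_i$ is only defined for $\pi_i\ge n^{c_4-1}\vee 100c_3^2\frac{\ln n}{n\gamma}$), and that truncation contributes the term $\frac{S\ln S\ln n}{n\gamma}$ inside the \emph{first} sum as well. Relatedly, $\|\hat\pi-\pi\|_1\le\sqrt{S}\|\hat\pi-\pi\|_2\lesssim\sqrt{S\ln n/(n\gamma)}$ captures only the square-root term; coordinates with $\pi_i\lesssim\frac{\ln n}{n\gamma}$ fluctuate at scale $\frac{\ln n}{n\gamma}$ in absolute terms rather than proportionally to $\sqrt{\pi_i}$, and summing those gives the extra $\frac{S\ln S\ln n}{n\gamma}$. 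Both are handled cleanly by the componentwise Bernstein bound $|\hat\pi_i-\pi_i|\lesssim\max\{\frac{\ln n}{n\gamma},\sqrt{\frac{\pi_i\ln n}{n\gamma}}\}$ of Lemma~\ref{lemma.confidencepi}, summed directly in $\ell_1$.
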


\subsection{Proof of Theorem~\ref{thm.universalbias}}
By the concavity of entropy, the empirical entropy rate $\Hemp$ underestimates the truth $\bar{H}$ in expectation. On the other hand, the average codelength $\bar{L}$ of any lossless source code is at least $\bar{H}$ by Shannon's source coding theorem. As a result, $\bar{H} - \bE[\Hemp]\le \bar{L} - \bE[\Hemp]$, and we may find a good lossless code to make the RHS small. 

Since the conditional empirical distributions maximizes the likelihood for Markov chains (Lemma~\ref{lemma.representationofempiricalentropy}), we have
\begin{align}
\mathbb{E}_P \left[ \frac{1}{n} \ln \frac{1}{Q_{X_1^n|X_0}(X_1^n|X_0)} \right] & \geq \mathbb{E}_P \left[  \frac{1}{n} \ln \frac{1}{P_{X_1^n|X_0}(X_1^n|X_0)} \right] = \bar{H} \\
& \geq \mathbb{E}_P \left[ \min_{P\in \mathcal{M}_2(S)} \frac{1}{n} \ln \frac{1}{P_{X_1^n|X_0}(X_1^n|X_0)} \right] = \mathbb{E}[\Hemp]
\end{align}
where $\cM_2(S)$ denotes the space of all first-order Markov chains with state $[S]$. Hence,
\begin{align}\label{eqn.mlebiasboundbycompression}
|\bar{H} -\mathbb{E}[\Hemp]| & \leq \inf_Q \sup_{P\in \mathcal{M}_2(S), x_0^n} \frac{1}{n}\ln \frac{P(x_1^n|x_0)}{Q(x_1^n|x_0)} .
\end{align}

The following lemma provides a non-asymptotic upper bound on the RHS of~(\ref{eqn.mlebiasboundbycompression}) and completes the proof of Theorem \ref{thm.universalbias}.
\begin{lemma}\cite{Tatwawadi--Jiao--Weissman17}\label{lemma.worstcaseredundancymarkov}
Let $\mathcal{M}_{2}(S)$ denote the space of Markov chains with alphabet size $S$ for each symbol. Then, the worst case minimax redundancy is bounded as
\begin{align}
\inf_Q \sup_{P \in \mathcal{M}_2(S),x_0^n} \frac{1}{n} \ln \frac{P(x_1^n|x_0)}{Q(x_1^n|x_0)} & \leq \frac{2S^2}{n}\ln\left( \frac{n}{S^2} +1 \right) + \frac{(S^2+2)\ln 2}{n}.
\end{align}
\end{lemma}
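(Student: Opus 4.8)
The quantity to bound is the normalized worst-case minimax redundancy for the class of $S$-state Markov chains, conditioned on the initial symbol $x_0$. The plan is to exhibit a single universal coding distribution $Q$ and bound the worst-case per-symbol log-loss overhead against every $P \in \mathcal{M}_2(S)$ and every initial condition $x_0$ uniformly. The natural choice is the Krichevsky--Trofimov (add-$1/2$) mixture applied row by row: since a Markov chain's likelihood $P(x_1^n\mid x_0) = \prod_{i=1}^{S}\prod_{j=1}^{S} T_{ij}^{N_{ij}}$ factorizes over source states $i$, where $N_{ij}$ counts transitions $i\to j$ in the sample path, the problem decouples into $S$ independent i.i.d.\ redundancy problems, the $i$-th over an alphabet of size $S$ with $n_i = \sum_j N_{ij}$ symbols. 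Defining $Q(x_1^n\mid x_0) = \prod_{i=1}^{S} Q_{\mathrm{KT}}(x^{(i)})$ where $Q_{\mathrm{KT}}$ is the KT estimator on an $S$-ary alphabet, the total redundancy splits as $\sum_{i=1}^{S}\bigl[\text{redundancy of } Q_{\mathrm{KT}} \text{ with } n_i \text{ samples}\bigr]$.

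First I would invoke the classical pointwise redundancy bound for the KT estimator: for any i.i.d.\ sequence of length $m$ over an alphabet of size $S$, $\ln \frac{P(y^m)}{Q_{\mathrm{KT}}(y^m)} \le \frac{S-1}{2}\ln m + O(S)$, or more precisely a bound of the form $\frac{S-1}{2}\ln\bigl(\frac{m}{S-1}+1\bigr) + c\,S$ with an explicit constant $c$ (this is where the $\frac{(S^2+2)\ln 2}{n}$-type term will come from after summing). Applying this with $m = n_i$ for each row and summing over $i \in [S]$ gives a bound of the form $\frac{S-1}{2}\sum_{i=1}^{S} \ln\bigl(\frac{n_i}{S-1}+1\bigr) + c\,S^2$. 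Second, I would apply Jensen's inequality (concavity of $t\mapsto \ln(t/(S-1)+1)$) together with the constraint $\sum_i n_i = n$ to conclude $\sum_{i=1}^{S}\ln\bigl(\frac{n_i}{S-1}+1\bigr) \le S\ln\bigl(\frac{n}{S(S-1)}+1\bigr) \le S\ln\bigl(\frac{n}{S^2}+1\bigr)$ (using $S(S-1) \le S^2$ in the right direction for an upper bound — I'd need to be slightly careful here, since $S(S-1) < S^2$ pushes the logarithm up, so I may instead carry $S(S-1)$ or simply bound $n_i \le n$ crudely inside the log and absorb the slack). Combining, the leading term becomes $\frac{S(S-1)}{2}\ln\bigl(\frac{n}{S^2}+1\bigr) \le \frac{S^2}{2}\ln\bigl(\frac{n}{S^2}+1\bigr)$, which after dividing by $n$ and doubling (to be safe with constants) yields the claimed $\frac{2S^2}{n}\ln\bigl(\frac{n}{S^2}+1\bigr)$, and the accumulated $O(S^2)$ additive constant divided by $n$ gives the $\frac{(S^2+2)\ln 2}{n}$ term.

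The main obstacle is purely bookkeeping with the constants: the pointwise KT redundancy bound comes in several slightly different forms in the literature (depending on whether one uses add-$1/2$, add-$1$, or a two-stage code), and the exact constants in the statement — the factor $2$ in front and the $(S^2+2)\ln 2$ — suggest the authors use a specific argument, quite possibly a two-part code (describe the type/counts, then the sequence given the type) rather than the KT mixture, since the $\ln 2$'s point to a bit-counting argument for encoding the $S^2$ transition counts, each in roughly $\log_2(n/S^2+1)$ bits. If that is the route, I would instead bound the redundancy by the description length of the empirical transition counts: encoding the vector $(N_{ij})$ costs at most $S^2 \log_2(\lceil n/S^2 \rceil + 1)$ bits in the worst case by a balanced prefix code, plus $O(1)$ bits of overhead per the $S$ rows and the normalization, and the conditional code given the counts is optimal. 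Either way the argument is standard (it is attributed to \cite{Tatwawadi--Jiao--Weissman17}); the only real work is choosing the coding scheme that produces exactly these constants, and I expect the two-part/type-counting code to be the cleanest path to the stated bound.
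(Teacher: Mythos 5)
The paper does not prove this lemma --- it is imported verbatim from the cited reference \cite{Tatwawadi--Jiao--Weissman17}, so there is no in-paper argument to compare your sketch against. That said, your proposal is directionally right and your reading of the constants is sharp: the $\ln 2$'s in the additive term do point at a two-part bit-counting code rather than a Bayes mixture, and decoupling the redundancy across the $S$ source states via $N_{ij}$ is exactly the correct decomposition.

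The one concrete gap is the factor of $2$ in the leading term. You write that one encodes each of the $S^2$ transition counts in ``roughly $\log_2\bigl(\frac{n}{S^2}+1\bigr)$ bits,'' but that accounting only produces $S^2\log_2\bigl(\frac{n}{S^2}+1\bigr)$ bits, i.e.\ a leading term of $\frac{S^2}{n}\ln\bigl(\frac{n}{S^2}+1\bigr)$, not $\frac{2S^2}{n}\ln\bigl(\frac{n}{S^2}+1\bigr)$. The decoder does not know the magnitudes of the individual $N_{ij}$ in advance (only that they sum to $n$), so each count must be sent with a \emph{self-delimiting} universal integer code; the standard choice (Elias gamma) spends about $2\log_2(k+1)+1$ bits on an integer $k$, and the concavity/Jensen step then yields $2S^2\log_2\bigl(\frac{n}{S^2}+1\bigr)$ bits plus an $O(S^2)$ overhead --- which is where both the factor $2$ and the $(S^2+2)\ln 2$ come from. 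Your earlier KT-mixture route, by contrast, actually gives a \emph{tighter} leading constant (roughly $\frac{S^2}{n}\ln\bigl(\frac{n}{S^2}+1\bigr)$ via $\ln\binom{n_i+S-1}{S-1}$ summed over rows), so ``doubling to be safe'' is not the right way to land on the claimed bound; the claimed bound is simply the loose-but-clean constant that the two-part code with Elias integers produces. Once you replace the hand-wave with the explicit self-delimiting encoding of the $S^2$ counts, your second route closes.
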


\subsection{Proof of Theorem \ref{thm.mainresultslower}}
To prove the lower bound for Markov chains, we first introduce an auxiliary model, namely, the \emph{independent Poisson} model and show that the sample complexity of the Markov chain model is lower bounded by that of the independent Poisson model. Then we apply the so-called method of fuzzy hypotheses \cite[Theorem 2.15]{Tsybakov2008} (see also \cite[Lemma 11]{HJWW17}) to prove a lower bound for the independent Poisson model. 
We introduce the independent Poisson model below, which is parametrized by an $S\times S$ symmetric matrix $R$, an integer $n$ and a parameter $\lambda> 0$.

\begin{definition}[Independent Poisson model]
	Given an $S\times S$ symmetric matrix $R=(R_{ij})$ with $R_{ij} \geq 0$ and a parameter $\lambda >0$, under the independent Poisson model, we observe $X_0\sim \pi=\pi(R)$, and an $S\times S$ matrix $C=(C_{ij})$ with independent entries distributed as $C_{ij} \sim \mathsf{Poi}\left( \lambda R_{ij} \right)$, where
	\begin{align}
	\pi_i=\pi_i(R) = \frac{r_i}{r}, \quad r_i = \sum_{j =1}^S R_{ij},\quad r=\sum_{i=1}^S r_i.
	\label{eq:piR}
	\end{align}
\end{definition}
For each symmetric matrix $R$, by normalizing the rows we can define a transition matrix $T=T(R)$ of a \emph{reversible} Markov chain with stationary distribution $\pi=\pi(R)$. Upon observing the Poisson matrix $C$, the functional to be estimated is the entropy rate $\bar{H}$ of $T(R)$. Given $\tau>0$ and $\gamma,q\in (0,1)$,  define the following collection of  symmetric matrices:
%parametrized by $S$ and $\gamma$ and $u$ as follows:
%\begin{align} \label{eqn.ipuncertaintysetdef}
%\calR(S,\gamma)& = \Bigg \{ R \in [0,1]^{S\times S}: R = R^{\top}, \gamma^*(T(R))
%%\gamma^*\left(  \left \{  R_{ij}/\left(\sum_{j = 1}^S R_{ij} \right) \right \}  \right )
%\geq \gamma, \nonumber \\
%& \quad \quad \sup_{1\leq i\leq S} \left| \sum_{j = 1}^S R_{ij} - 1 \right| \leq u, \left| \sum_{1\leq i,j\leq S} R_{ij} - S \right| \leq \sqrt{S}u  \Bigg  \},
%\end{align}
\begin{align} \label{eqn.ipuncertaintysetdef}
%\calR(S,\gamma)
\calR(S,\gamma, \tau,q)
& = \Bigg \{ R \in \reals_+^{S\times S}: R = R^{\top}, \gamma^*(T) \geq \gamma, \sum_{i,j} R_{ij} \geq \tau, \pi_{\min} \geq q \Bigg  \},
\end{align}
where $\pi_{\min} = \min_i \pi_i$. The reduction to independent Poisson model is summarized below: 
\begin{lemma}\label{lemma.poissontomc}
	If there exists an estimator $\hat{H}_1$ for the Markov chain model with parameter $n$ such that $\bP(|\hat{H}_1-\bar{H}|\ge \epsilon)\le \delta$ under any $T\in\cM_{2,{\rm rev}}(S,\gamma)$, 
	then there exists another estimator $\hat{H}_2$ for the independent Poisson model with parameter $\lambda=\frac{4 n}{\tau}$ such that
	\begin{align}
	\sup_{R \in \calR(S,\gamma,\tau,q) } \bP\left( |\hat{H}_2 - \bar{H}(T(R))| \geq  \epsilon \right) & \leq  \delta +  2Sn^{-\frac{c_3^2}{4+10c_3}} + S n^{-c_3/2},
	\end{align}
	provided $q \geq \frac{ c_3 \ln n}{n \gamma}$, where $c_3\geq 20$ is a universal constant. 
\end{lemma}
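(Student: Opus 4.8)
The plan is to run the Markov-chain simulation of~\cite{billingsley1961statistical} described in Section~\ref{sec.proof_sketch} \emph{in reverse}: from the independent Poisson observation I would synthesize a genuine sample path of the stationary reversible Markov chain with transition matrix $T(R)$, hand it to the hypothetical estimator $\hat H_1$, and separately control the small probability that the synthesis fails for want of samples.

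Concretely, given the Poisson data $(X_0, C)$ I would build $\hat H_2$ as follows. For each row $i$ put $N_i = \sum_{j} C_{ij}$ and form a sequence $W_{i1},\dots,W_{iN_i}$ by listing $C_{ij}$ copies of the symbol $j$ (over all $j$) and applying an independent uniformly random permutation. Starting from $X_0$, run Billingsley's scheme on the arrays $\{W_{ik}\}$: at each step, if the current state is $i$, reveal the next as-yet-unused entry of row $i$. If this produces $X_1,\dots,X_n$ --- equivalently, if $n_i\le N_i$ for every $i$, where $n_i \triangleq \#\{0\le t\le n-1: X_t = i\}$ counts transitions out of state $i$ --- output $\hat H_2 = \hat H_1(X_0,\dots,X_n)$; otherwise output any fixed value. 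Write $A$ for the success event. The distributional fact that makes this work is that $N_i \sim \Poisson(\lambda r_i)$ with $r_i = \sum_j R_{ij}$, independently across $i$, and that conditionally on $N_i = m$ the counts $(C_{ij})_j$ are $\mathrm{Multinomial}(m, T_i(R))$, so the permuted sequence $W_{i1},\dots,W_{im}$ is i.i.d.\ $\sim T_i(R)$, the $i$th row of $T(R)$. Appending to each row further i.i.d.\ $T_i(R)$ draws and running Billingsley's scheme on the resulting infinite arrays produces an honest stationary Markov chain $(X_0, X_1, \dots)$ with transition matrix $T(R)$, which is reversible with absolute spectral gap $\gamma^*(T(R)) \ge \gamma$, hence lies in $\mathcal{M}_{2,\text{rev}}(S,\gamma)$. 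Moreover $A$ is exactly the event that the length-$n$ synthesis never runs out of samples, and on $A$ the output $\hat H_2$ equals $\hat H_1$ evaluated on the first $n+1$ coordinates of this chain.

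Granting this, writing $X_0^n = (X_0,\dots,X_n)$ for the synthesized path, the error decomposition is
\begin{align*}
\Prob_R\big(|\hat H_2 - \bar H(T(R))| \ge \epsilon\big) \ \le\ \Prob(A^c) + \Prob\big(A,\ |\hat H_1(X_0^n) - \bar H| \ge \epsilon\big) \ \le\ \Prob(A^c) + \delta,
\end{align*}
where the last step uses the assumed guarantee on $\hat H_1$ --- crucially, in that term I do \emph{not} condition on $A$, so $X_0^n$ carries the unconditional (stationary Markov) law and $T(R) \in \mathcal{M}_{2,\text{rev}}(S,\gamma)$. It then remains to bound $\Prob(A^c) \le \sum_i \Prob(n_i > N_i)$, and this is where the hypotheses enter. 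Since $\sum_{ij} R_{ij} = r \ge \tau$ and $\pi_i = r_i/r \ge q$, one gets $\lambda r_i = \tfrac{4n}{\tau} r_i \ge 4nq \ge \tfrac{4 c_3 \ln n}{\gamma} \ge 4 c_3 \ln n$, while $\mathbb{E}[n_i] = n\pi_i \le \tfrac14 \lambda r_i$. Splitting $\{n_i > N_i\}$ at $\tfrac12 \lambda r_i$ reduces matters to (i) a Poisson lower-tail estimate $\Prob(N_i \le \tfrac12 \mathbb{E} N_i) \le n^{-c_3/2}$, valid because $\mathbb{E} N_i \ge 4 c_3 \ln n$, and (ii) an upper-tail concentration bound for the additive functional $n_i = \sum_{t=0}^{n-1} \indc{X_t = i}$ of the stationary chain, deviating by a constant factor above its mean $n\pi_i$; the latter is controlled by the absolute spectral gap through a Bernstein-type inequality for Markov chains (of the kind behind Lemma~\ref{lemma.concentrationentropy}), whose exponent is of order $\gamma n \pi_i \ge \gamma n q \ge c_3 \ln n$. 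Tracking the constants in that inequality gives the factor $2S$ and exponent $c_3^2/(4+10c_3)$ after a union bound over $i \in [S]$, i.e.\ $\Prob(A^c) \le 2S n^{-c_3^2/(4+10c_3)} + S n^{-c_3/2}$.

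I expect the main obstacle to be twofold. First, the distributional bookkeeping: one must recognize that the synthesized path is identified with a \emph{true} Markov chain only after the infinite i.i.d.\ extension, and that the guarantee on $\hat H_1$ must be invoked \emph{without} conditioning on $A$ (conditioning would bias the path law, since $A$ depends on the visit counts $n_i$). Second, one needs a Markov-chain concentration inequality sharp enough, and with the spectral gap in the right place, to produce exactly the exponents $c_3^2/(4+10c_3)$ and $c_3/2$; this is the only genuinely quantitative step, everything else being a verification of the simulation identity.
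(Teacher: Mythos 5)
Your reduction is conceptually the same as the paper's, just carried out in one direct step (IP $\to$ MC) instead of the paper's two-step chain through the explicit \emph{independent multinomial} model (Lemma~\ref{lemma.poissontomultinomial} takes IP $\to$ IM by conditioning on row sums, Lemma~\ref{lemma.reductiontoim} takes IM $\to$ MC by Billingsley simulation). The ingredients — Poisson-to-multinomial conditioning, Billingsley's sampling scheme extended by fresh i.i.d.\ draws, a Poisson lower-tail bound, Paulin's Bernstein inequality for the visit counts, and a union bound over $i\in[S]$ — are all identical; you've correctly flagged the subtle point that the guarantee on $\hat H_1$ must be invoked on the unconditional stationary path law rather than conditionally on $A$. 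The merged presentation is a mild simplification over the paper's.

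There is, however, one quantitative slip. Your split of $\{n_i > N_i\}$ at $\tfrac12\lambda r_i$ forces the Bernstein side to control $\Prob(n_i > \tfrac12\lambda r_i)$, and in the worst case $r=\tau$ this is exactly $\Prob(\hat\pi_i - \pi_i > \pi_i)$. Plugging $t=n\pi_i$ into Paulin's inequality
\[
\Prob\left(|\hat\pi_i - \pi_i|\geq \tfrac{t}{n}\right)\leq 2\exp\!\left(-\frac{t^2\gamma}{4n\pi_i(1-\pi_i)+10t}\right)
\]
gives exponent $n\pi_i\gamma/14 \ge c_3\ln n/14$, i.e.\ probability $\le 2n^{-c_3/14}$; this is strictly weaker than the claimed $2n^{-\beta}$ with $\beta=c_3^2/(4+10c_3)$, since $c_3/14 < \beta$ whenever $c_3>1$. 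So ``tracking the constants'' with your split does \emph{not} deliver the stated exponent. The fix is to split instead at the threshold $m_i = n\pi_i + c_3\max\{\tfrac{\ln n}{\gamma},\sqrt{\tfrac{n\pi_i\ln n}{\gamma}}\}$: the Bernstein side then coincides exactly with the ``bad'' event $\calE_i^c$ from Definition~\ref{def.goodevents}, to which Lemma~\ref{lemma.confidencepi} gives probability $\le 2n^{-\beta}$ with the correct $\beta$, while the Poisson side $\Prob(N_i < m_i)\le\Prob(N_i<2n\pi_i)$ still gives $n^{-c_3/2}$ using $\mathbb E N_i = \lambda r_i\geq 4n\pi_i$. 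This is precisely the $m_i$ from the paper's Definition~\ref{def.immodel}, which your one-shot argument has implicitly rediscovered.
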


To prove the lower bound for the independent Poisson model, the goal is to construct two symmetric random matrices (whose distributions serve as the priors), such that 
(a) they are sufficiently concentrated near the desired parameter space $\calR(S,\gamma,\tau,q)$ for properly chosen parameters $\gamma,\tau,q$;
(b) their entropy rates are separated;
	(c) the induced marginal laws of the sufficient statistic $\bfC=X_0\cup \{ C_{ij} + C_{ji}: i\neq j, 1\leq i\leq j\leq S\} \cup \{C_{ii}: 1\leq i\leq S\}$ are statistically indistinguishable.
The prior construction in Definition \ref{con.priorconstruction} satisfies all these three properties (cf. Lemmas \ref{lemma.indistinguishableipmodel}, \ref{lemma.functionalseperation}, \ref{lemma.spectralgapcontrol}), and thereby lead to the following theorem:
\begin{theorem}\label{thm.lowerboundpoisson}
	If $n \geq \frac{S^2}{\ln S}, \ln n\ll \frac{S}{(\ln S)^2}, \gamma^* \leq  1 - C_2\sqrt{\frac{S\ln^3 S}{n}}$, we have
	\begin{align}
	\liminf_{S\to\infty} \inf_{\hat{H}} \sup_{R \in \calR(S,\gamma^*,\tau,q)} \bP \left( | \hat{H} - \bar{H}| \ge C_1 \frac{S^2}{n\ln S} \right) \geq \frac{1}{2}
	\end{align}
	where $\tau=S, q=\frac{1}{5\sqrt{n\ln S}}$, and $C_1, C_2>0$ are two universal constants. 
\end{theorem}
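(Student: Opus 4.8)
The plan is to prove a minimax lower bound for the independent Poisson model by the \emph{method of fuzzy hypotheses}: I will exhibit two priors on the symmetric matrix $R$ — i.e.\ two distributions supported (with high probability) on $\calR(S,\gamma^*,\tau,q)$ with $\tau=S$ and $q=\frac{1}{5\sqrt{n\ln S}}$ — under which (a) the realized $R$ lies in $\calR$ whp, (b) the entropy rates $\bar H(T(R))$ are separated by $\Theta(\tfrac{S^2}{n\ln S})$, and (c) the laws of the sufficient statistic $\bfC$ are asymptotically indistinguishable in total variation. Once these are in place, \cite[Theorem~2.15]{Tsybakov2008} (equivalently \cite[Lemma~11]{HJWW17}) yields $\inf_{\hat H}\sup_{R\in\calR}\bP(|\hat H-\bar H|\ge C_1\tfrac{S^2}{n\ln S})\ge \tfrac12-o(1)$, which is Theorem~\ref{thm.lowerboundpoisson}. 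The whole argument reduces Markov entropy-rate estimation, via the identity $\bar H(T(R))=H(\tilde R)-H(\pi(R))$ with $\tilde R=R/\sum_{ij}R_{ij}$, to an instance of Shannon entropy estimation on $k=S^2$ cells observed under Poissonized sampling of size $\asymp n$ — precisely the regime in which $\Theta(\tfrac{k}{m\ln k})=\Theta(\tfrac{S^2}{n\ln S})$ is the known optimal accuracy \cite{Valiant--Valiant2011,Wu--Yang2014minimax,Jiao--Venkat--Han--Weissman2015minimax} — subject to the extra structural constraints of symmetry, a spectral-gap bound, and $\pi_{\min}\ge q$.

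First I would build the priors (Definition~\ref{con.priorconstruction}). Following the best-polynomial-approximation paradigm, I take a pair of scalar probability measures $\mu_0,\mu_1$ on an interval of order $\eta$ (with $\eta$ and the degree $L\asymp\ln n$ chosen so the rescaled Poisson means $\lambda\eta$, $\lambda=4n/\tau$, sit in the critical range) whose moments agree up to order $L$ but for which $|\bE_{\mu_0}g(Y)-\bE_{\mu_1}g(Y)|$ is maximal for the entropy nonlinearity $g(y)=y\ln(1/y)$; by LP duality this maximal gap equals, up to constants, the degree-$L$ best uniform approximation error of $g$, which reassembled over the $S^2$ entries produces the target separation $\Theta(\tfrac{S^2}{n\ln S})$. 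The random $R$ is then obtained by filling the upper triangle with independent rescaled draws from $\mu_0$ (resp.\ $\mu_1$), reflecting to make $R$ symmetric (hence $T(R)$ reversible), and adding a small deterministic baseline on all entries; the baseline plus concentration of the i.i.d.\ part forces every row sum $r_i$ close to $\tau/S=1$, so that $\pi(R)$ is nearly uniform and deterministic, $\sum_{ij}R_{ij}=\sum_i r_i$ is close to $\tau=S$, and $\pi_{\min}\ge q$, all whp.

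Next I would verify properties (a)--(c). For (a), row-sum concentration gives the mass and $\pi_{\min}$ constraints, while the spectral-gap constraint $\gamma^*(T(R))\ge\gamma^*$ follows from viewing $T(R)$ as a small perturbation of the uniform chain: the nontrivial eigenvalues of the perturbation are $O(\sqrt{S\ln^3 S/n})$ whp by the random-Markov-matrix estimates of \cite{bordenave2010spectrum}, so $\gamma^*(T(R))\ge 1-C_2\sqrt{S\ln^3 S/n}\ge\gamma^*$ under the hypothesis — this is Lemma~\ref{lemma.spectralgapcontrol}, and the condition $\gamma^*\le 1-C_2\sqrt{S\ln^3 S/n}$ is imposed exactly so the construction lands in the parameter set. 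For (b), Taylor-expanding $\bar H=H(\tilde R)-H(\pi)$ around $\pi_i=1/S$, $r_i=1$ and using row-sum concentration, $\bar H$ is up to lower-order error an average of $S^2$ nearly independent terms $g(\eta^{-1}R_{ij})$ (up to affine shifts), hence concentrates around its prior mean; the two priors' means differ by the approximation gap from the previous step — this is Lemma~\ref{lemma.functionalseperation}. For (c), I condition on $R$ and note that the symmetrized count of each pair $i\le j$ is, marginally, a Poisson mixture $\int\Poi(2\lambda\eta y)\,d\mu_b(y)$; matching $L\asymp\ln n$ moments makes the per-pair $\TV$ (or $\chi^2$) between the $b=0$ and $b=1$ mixtures decay faster than $S^{-2}$, and tensorizing over the $\binom{S}{2}+S$ entries — together with absorbing the $X_0$ coordinate, whose law $\pi(R)$ is near-uniform under both priors — keeps the total variation between the $\bfC$-marginals $o(1)$; this is Lemma~\ref{lemma.indistinguishableipmodel}. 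Finally, restricting both priors to the event that $R\in\calR(S,\gamma^*,\tau,q)$ and $\bar H$ is within $o(\tfrac{S^2}{n\ln S})$ of its mean, and feeding the separation $2C_1\tfrac{S^2}{n\ln S}$ and the $\TV$ bound into the fuzzy-hypotheses theorem, gives the claimed $\liminf\ge\tfrac12$.

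The \textbf{main obstacle} is making a single construction satisfy all three properties simultaneously. The moment-matching and functional-separation requirements rigidly pin down $\mu_0,\mu_1$, the scale $\eta$, and the degree $L$; the hard part is then showing that random symmetric matrices built from these still have a controlled absolute spectral gap — this is where the random-matrix input of \cite{bordenave2010spectrum} is needed, and where the precise condition on $\gamma^*$ and the value $q=\frac{1}{5\sqrt{n\ln S}}$ are forced — and that the Poisson observations remain genuinely indistinguishable despite the dependence introduced by normalizing rows and by observing $X_0$, which couples the otherwise independent entries and must be shown not to spoil the entrywise tensorization in (c). By comparison, the fuzzy-hypotheses reduction itself is essentially bookkeeping.
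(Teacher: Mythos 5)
Your high-level strategy --- fuzzy hypotheses via two moment-matched priors, verified by (a) concentration in $\calR$, (b) functional separation, and (c) total-variation indistinguishability --- is exactly the paper's (Lemmas~\ref{lemma.indistinguishableipmodel}, \ref{lemma.functionalseperation}, \ref{lemma.spectralgapcontrol} fed into Lemma~\ref{lemma.tsybakov}), and you have correctly identified the roles of all three lemmas. However, your concrete prior construction departs from the paper's, and that departure opens a quantitative gap.

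You propose an additive deterministic baseline on all entries so that every row sum is $\approx \tau/S=1$ and $\pi(R)$ is ``nearly uniform.'' The paper's Definition~\ref{con.priorconstruction} does neither: it augments the state space with one extra state $0$, sets the deterministic border $R_{0j}=R_{j0}=a=\sqrt{\alpha S}$, $R_{00}=b=S$, and leaves the random block $\bfU$ centered at $\alpha=\frac{S}{n\ln S}$ with \emph{no} baseline. Hence the row sums for $i\ge1$ are $\approx\sqrt{\alpha S}\le 1$ (often $\ll1$), and $\pi$ is highly skewed, with $\pi_0\approx\frac{1}{1+\sqrt{\alpha S}}$ and $\pi_i\approx\frac{1}{\sqrt{n\ln S}}$ for $i\ge1$ --- nowhere near uniform. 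This asymmetry is load-bearing. In your version the random entries ride on top of $\beta_0\approx \frac{1}{S}-\alpha$; when $n\gg S^2\ln S$ one has $\beta_0\gg M:=\alpha\eta^{-1}\asymp\frac{S\ln S}{n}$, so the entropy nonlinearity $\phi(x)=x\ln\frac1x$ is evaluated on $[\beta_0,\beta_0+M]$, an interval far from the singularity at $0$. There $\phi$ is analytic and its degree-$L$ best-approximation error --- which by LP duality caps $|\bE\phi(\beta_0+U)-\bE\phi(\beta_0+U')|$ under $L$-moment matching --- decays geometrically like $(M/\beta_0)^L$ with $L\asymp\ln S$, superpolynomially below the target $\Theta(\alpha)$. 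The paper sidesteps this by placing the random entries directly on $[0,\alpha\eta^{-1}]$, where $\phi$'s approximation error is genuinely $\Theta(\alpha)$ (Lemma~\ref{lemma.wuyangprior}), and parks the mass needed to make $r\approx\tau=S$ and $\pi_{\min}\ge q$ on the extra state's deterministic row/column, which is identical under both priors and hence cancels in the functional gap. That border also has a second job: $a,b$ satisfy $b\alpha=a^2$, making $\Expect[\bfR]$ a rank-one PSD matrix, which is what lets Weyl's inequality push \emph{all} of $\lambda_2,\dots,\lambda_{S+1}(T)$ down to the Wigner-norm fluctuation; note the spectral-gap bound uses Weyl plus a standard Wigner tail estimate (\cite[Corollary 2.3.6]{tao2012topics}), not the random-Markov-chain spectral theory of \cite{bordenave2010spectrum}, which the paper cites only as inspiration for the shape of the prior.
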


\section{Application: Fundamental limits of language modeling}\label{sec.languagemodels}

In this section, we apply entropy rate estimators to estimate the fundamental limits of language modeling. %There has been a lot of recent interest and progress in developing probabilistic models of natural languages, for applications such as machine translation, spell-checking and search query completion, mostly using recurrent neural networks \cite{DBLP:journals/corr/ZophL16,DBLP:journals/corr/XieWLLNJN17,DBLP:journals/corr/ZillySKS16,DBLP:journals/corr/MerityXBS16,NIPS2016_6241,DBLP:journals/corr/JozefowiczVSSW16,DBLP:journals/corr/KuchaievG17,DBLP:journals/corr/ShazeerMMDLHD17}.
A language model specifies the joint probability distribution of a sequence of words, $Q_{X^n}(x^n)$. It is common to use a $(k-1)$th-order Markov assumption to train these models, using sequences of $k$ words (also known as $k$-grams,\footnote{In the language modeling literature these are typically known as $n$-grams, but we use $k$ to avoid conflict with the sample size.} sometimes with Latin prefixes \textit{unigrams}, \textit{bigrams}, \textit{etc.}), with values of $k$ of up to 5 \cite{Jurafsky:2009:SLP:1214993}. 
A commonly used metric to measure the efficacy of a model $Q_{X^n}$ is the \emph{perplexity} (whose logarithm is called the \emph{cross-entropy rate}): 
\[
    \mathrm{perplexity}_Q\left(X^n\right) = \sqrt[n]{\frac{1}{Q_{X^n}(X^n)}}.
\]
%The logarithm of perplexity (also known as the \emph{cross-entropy rate}) can be seen as a logarithmic loss function,
%\[
%    \log \left[\mathrm{perplexity}_Q\left(X^n\right)\right]
%    = \frac1n \log \frac{1}{Q_{X^n}(X^n)},
%    % = \frac1N \log \sum_{i=1}^N P_\cM(X_i|X_1^i)
%    % = \frac1N \log \sum_{i=1}^N P_\cM(X_i|X_{i-n+1}^i)
%\]
If a language is modeled as a stationary and ergodic stochastic process with entropy rate $\bar H$, and $X^n$ is drawn from the language with true distribution $P_{X^n}$, then \cite{kieffer1991sample}
\[
    \bar{H} \le \liminf_{n \rightarrow \infty} \frac1n \log \frac{1}{Q_{X^n}(X^n)} = \liminf_{n \rightarrow \infty} \log \left[\mathrm{perplexity}_Q\left(X^n\right)\right],
\]
with equality when $Q = P$. In this section, all logarithms are with respect to base $2$ and all entropy are measured in bits.

The entropy rate of the English language is of significant interest to language model researchers: since $2^{\bar H}$ is a tight lower bound on perplexity, this quantity indicates how close a given language model is to the optimum. Several researchers have presented estimates in bits per character \cite{Shannon1951prediction,cover1978convergent,Brown:1992:EUB:146680.146685}; because language models are trained on words, these estimates are not directly relevant to the present task. In one of the earliest papers on this topic, Claude Shannon \cite{Shannon1951prediction} gave an estimate of 11.82 bits per word. This latter figure has been comprehensively beaten by recent models; for example, \cite{DBLP:journals/corr/KuchaievG17} achieved a perplexity corresponding to a cross-entropy rate of 4.55 bits per word.

To produce an estimate of the entropy rate of English, we used two well-known linguistic corpora: the Penn Treebank (PTB) and Google's One Billion Words (1BW) benchmark. Results based on these corpora are particularly relevant because of their widespread use in training models. We used the conditional approach proposed in this paper with the JVHW estimator describe in Section \ref{sec.experiments}. The PTB corpus contains about $n \approx 1.2$ million words, of which $S \approx 47,000$ are unique. The 1BW corpus contains about $n \approx 740$ million words, of which $S \approx 2.4$ million are unique.

%We found the VV estimator to be insufficiently stable to perform the computations on such large datasets.

% Obviously, the English language is not a first-order Markov process. However, since for stationary stochastic processes (not necessarily Markov), the entropy rate is the limit \[\bar{H} = \lim_{k\rightarrow\infty} H(X_k|X^{k-1}),\] we can estimate the entropy rate using estimates of the conditional entropy $H(X_k|X^{k-1})$ for successively increasing $k$ (\textit{i.e.}, successively longer $k$-grams). Equivalently, we can augment the state space to all $(k-1)$-grams and use the estimator \eqref{eqn.optimalentropyrateestimator}, relaxing the first-order Markov assumption to a $k$th order Markov assumption. We did so for $k = 1, \dots, 4$. It is worth noting that both of these corpora comprise individual sentences, disconnected from each other, so the corpora in effect have a Markov order of about the length of a sentence.

We estimate the conditional entropy $H(X_k|X^{k-1})$ for $k=1,2,3,4$, and our results are shown in Figure~\ref{fig:language-entropy}. The estimated conditional entropy $\hat H(X_k|X^{k-1})$ provides us with a refined analysis of the intrinsic uncertainty in language prediction with context length of only $k-1$. For 4-grams, using the JVHW estimator on the 1BW corpus, our estimate is 3.46 bits per word. With current state-of-the-art models trained on the 1BW corpus having an cross-entropy rate of about 4.55 bits per word \cite{DBLP:journals/corr/KuchaievG17}, this indicates that language models are still at least 0.89 bits per word away from the fundamental limit. (Note that since $H(X_k|X^{k-1})$ is decreasing in $k$, $H(X_4|X^3) > \bar H$.) Similarly, for the much smaller PTB corpus, we estimate an entropy rate of 1.50 bits per word, compared to state-of-the-art models that achieve a cross-entropy rate of about 5.96 bits per word \cite{DBLP:journals/corr/ZophL16}, at least 4.4 bits away from the fundamental limit.

More detailed analysis, e.g., the accuracy of the JVHW estimates, is shown in the Appendix \ref{sec.languagemodels_appendix}.

% We thus used the PTB and 1BW corpora as samples to estimate the entropy rate of the English language they represent, using the \textit{conditional} approach proposed in this paper. Owing to resource constraints, the computations for 1BW use the first 14\% of the corpus. Our results are plotted in Figure~\ref{fig:language-entropy}. We used the MLE, JVHW and PML estimators for the PTB corpus, and the MLE and JVHW estimators for the 1BW corpus. We found the VV estimator to be insufficiently stable to perform the computations on such large datasets.

% For large $n$, the MLE and PML estimators both converge towards zero. [INSERT discussion of this phenomenon here] On the other hand, using our proposed approach with the JVHW estimator yields results that converge to about 0.72 bits per word. Interestingly, the estimates for the PTB and 1BW converge to similar values, despite one dataset being over 80 times larger than the other (1BW: 100 million, PTB: 1.2 million). For comparison, current state-of-the-art models trained on these corpora achieve perplexities that imply an entropy of about 5.96 bits per word for PTB \cite{DBLP:journals/corr/ZophL16} and 4.55 bits per word for 1BW \cite{DBLP:journals/corr/KuchaievG17}.

\pgfplotsset{entropyrateplot/.style={%
        width=\columnwidth,
        xticklabel style={font=\footnotesize},
        yticklabel style={font=\footnotesize},
        xtick={1,2,3,4},
        xlabel style={font=\footnotesize},
        ylabel style={font=\footnotesize},
        ymajorgrids=true,
        legend style={at={(0.98,0.98),font=\footnotesize},anchor=north east},
        xlabel={memory length $k$},
        ylabel={estimated cond.\ entropy $\hat H(X_k|X^{k-1}_1)$},
        xmin=0, xmax=7, ymin=0,
    }
}
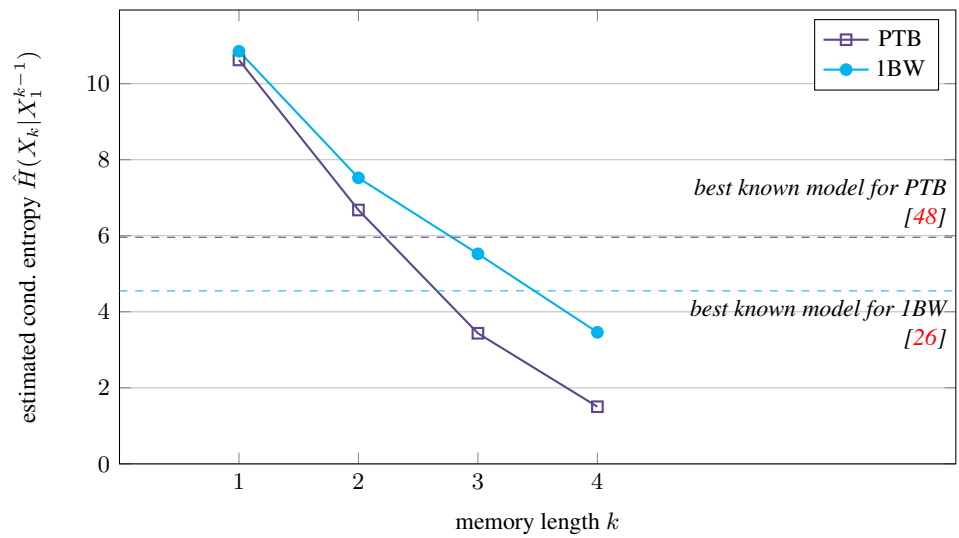
\begin{figure}[h]
\centering
\begin{tikzpicture}
\begin{axis}[entropyrateplot, height=3in, width=5in]
    \addplot[thick, color=MediumPurple4, mark=square]
        table[x=N,y=estHrate] { % PTB JVHW
            N    estHrate
            1    10.6244158412
            2    6.67734824421
            3    3.43519634366
            4    1.50354321094
    };
    \addplot[thick, color=DeepSkyBlue2, mark=*]
        table[x=N,y=estHrate] { % 1BW JVHW
            N    estHrate
            1    10.8521979168
            2    7.52302384907
            3    5.52436031624
            4    3.46099824872
    };
    \addplot[domain=0:7,dashed,color=MediumPurple4]{5.96};
    \node[anchor=south east, align=right] at (axis cs:7,5.96) {\itshape\footnotesize best known model for PTB \\ \itshape\footnotesize \cite{DBLP:journals/corr/ZophL16}};
    \addplot[domain=0:7,dashed,color=DeepSkyBlue2]{4.55};
    \node[anchor=north east, align=right] at (axis cs:7,4.55) {\itshape\footnotesize best known model for 1BW \\ \itshape\footnotesize \cite{DBLP:journals/corr/KuchaievG17}};
    % \legend{{PTB MLE},{PTB JVHW},{PTB PML},{1BW MLE (100M sample)},{1BW JVHW (100M sample)},{1BW MLE (as much as possible)},{1BW JVHW (as much as possible)}}
    % \legend{{PTB MLE},{PTB JVHW},{PTB PML},{1BW MLE},{1BW JVHW}}
    %\legend{{PTB MLE},{PTB JVHW},{1BW MLE},{1BW JVHW}}
    \legend{{PTB JVHW},{1BW JVHW}}
    \legend{{PTB},{1BW}}
\end{axis}
\end{tikzpicture}
\caption{Estimates of conditional entropy based on linguistic corpora}
\label{fig:language-entropy}
\end{figure}

  \clearpage
	
	\bibliographystyle{plain}
	\bibliography{di}
	\clearpage
	
	\appendix
	\section{Proof of Theorem \ref{thm.mainresultsupper}}

\subsection{Concentration of $\hat{H}_{\opt}$ and $\hat{H}_{\emp}$}

The performance of $\hat{H}_{\opt}$ and $\hat{H}_{\emp}$ in terms of Shannon entropy estimation is collected in the following lemma.

\begin{lemma}\label{lemma.concentrationentropy}
	Suppose $\alpha = 0.001, \alpha' = 0.01$ and one observes $n$ i.i.d.\ samples $X_1,X_2,\ldots,X_n \stackrel{\mathrm{i.i.d.}}{\sim} P$. Then, there exists an entropy estimator $\hat{H}_{\opt} =\hat{H}_{\opt}(X_1,\ldots,X_n) \in [0, \ln S]$ such that for any $t > 0$,
	\begin{align}\label{eqn.concentrationoptimalestimatoriid}
	P\left( |\hat{H}_{\opt} - H(P)|\geq t + c_2 \frac{S}{n \ln S} \right) & \leq 2 \exp\left( - c_1 t^2 n^{1-\alpha} \right),
	\end{align}
	where $c_1,c_2>0$ are universal constants, and $H(P)$ is the Shannon entropy.
	Moreover, the empirical entropy $\hat{H}_{\emp} = \hat{H}_{\emp}(X_1,X_2,\ldots,X_n) \in [0, \ln S]$ satisfies, for any $t>0$,
	\begin{align}
	P\left( |\hat{H}_{\emp} - H(P)|\geq t + c_2 \frac{S}{n } \right) & \leq 2 \exp\left( - c_1 t^2 n^{1-\alpha} \right).
	\end{align}
	Consequently, for any $\beta >0$,
	\begin{align}\label{eqn.alphapidef}
	P\left( |\hat{H}_{\opt} - H(P)| \geq \frac{c_2 S}{n \ln S} + \sqrt{\frac{\beta}{c_1 n^{1-\alpha'}}} \right) & \leq \frac{2}{n^\beta},
	\end{align}
	and
	\begin{align}
	P\left( |\bar{H}_{\emp}) - H(P)| \geq \frac{c_2 S}{n } + \sqrt{\frac{\beta}{c_1 n^{1-\alpha'}}} \right) & \leq \frac{2}{n^\beta}.
	\end{align}
	%where
	%%$c_1,c_2$ are constants in Lemma~\ref{lemma.concentrationentropy},
	%$\alpha'>\alpha$ is a universal constant.
\end{lemma}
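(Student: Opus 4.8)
The plan is the standard bias-plus-fluctuation decomposition, specialised to the two estimators. Take $\hat H_{\opt}$ to be the best-polynomial-approximation estimator of \cite{Wu--Yang2014minimax,Jiao--Venkat--Han--Weissman2015minimax}, de-Poissonised to a fixed sample size by the usual argument and then passed through the projection $\Pi_{[0,\ln S]}$, and $\hat H_{\emp}=H(\hat P)$ the empirical plug-in, likewise projected onto $[0,\ln S]$. Since $H(P)\in[0,\ln S]$ and projection onto an interval is a $1$-Lipschitz contraction fixing that interval, $|\hat H_\bullet-H(P)|\le|\tilde H_\bullet-H(P)|$ where $\tilde H_\bullet$ is the pre-projection estimator, and the projection cannot enlarge a bounded-differences constant; so it suffices to (i) bound the bias $|\bE\tilde H_\bullet-H(P)|$ and (ii) prove a sub-Gaussian tail for $\tilde H_\bullet$ about its mean — except in the trivial regime $n\lesssim S/\ln^2 S$, where $c_2\tfrac{S}{n\ln S}\ge\ln S\ge|\hat H_\bullet-H(P)|$ already makes the claimed event empty (take $c_2\ge1$).

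For (i): the plug-in obeys the classical bound $0\le H(P)-\bE[H(\hat P)]\le\ln(1+\tfrac{S-1}{n})\le\tfrac{S-1}{n}$, so its bias is $\le c_2\tfrac Sn$; for $\tilde H_{\opt}$, \cite{Wu--Yang2014minimax,Jiao--Venkat--Han--Weissman2015minimax} give bias $\lesssim\tfrac{S}{n\ln n}$, which in the remaining regime $n\gtrsim S/\ln^2 S$ satisfies $\ln n\gtrsim\ln S$ and hence is $\lesssim\tfrac{S}{n\ln S}$. Enlarging $c_2$ to a common universal constant handles both estimators.

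For (ii) I would apply McDiarmid's bounded-differences inequality to $\tilde H_\bullet$ as a function of the i.i.d.\ sample; swapping one coordinate changes exactly two category counts by $\pm1$. For $\hat H_{\emp}$ this moves the estimator by at most $2\sup_{1\le k\le n}|\phi(k{+}1)-\phi(k)|$ with $\phi(k)=\tfrac kn\ln\tfrac nk$, which is $O(\tfrac{\ln n}{n})$, so the bounded-differences constant is $c=O(\tfrac{\ln n}{n})$ and McDiarmid gives a tail $2\exp(-\Omega(t^2 n/\ln^2 n))$; since $n/\ln^2 n\ge n^{1-\alpha}$ once $n$ exceeds a universal constant, this is $\le 2\exp(-c_1 t^2 n^{1-\alpha})$ (for smaller $n$ the claimed inequality holds after inflating $c_1$). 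For $\hat H_{\opt}$ the delicate point is that the degree-$L=c_0\ln n$ polynomial used on the small-count range has monomial coefficients as large as $n^{\Theta(c_0)}\cdot\mathrm{polylog}(n)/n$, so — propagating through the finite difference $(N{+}1)^{\underline k}-N^{\underline k}=kN^{\underline{k-1}}$ on counts of size $O(\ln n)$ — a unit change of a small count moves that contribution by at most $n^{-\kappa}\ln n$ with $\kappa=1-\Theta(c_0)<1$; a smaller $c_0$ pushes $\kappa$ arbitrarily close to $1$ while only worsening the bias by a constant factor, so we may assume $2\kappa-1\ge 1-\alpha$, and McDiarmid then yields $2\exp(-\Omega(t^2 n^{2\kappa-1}/\ln^2 n))\le 2\exp(-c_1 t^2 n^{1-\alpha})$ for $n$ past a universal threshold. (Equivalently, one may simply invoke the concentration inequalities already proved for these estimators in \cite{Wu--Yang2014minimax,Jiao--Venkat--Han--Weissman2015minimax}.) Combining (i) and (ii) through the projection inequality gives the two displayed tail bounds.

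Finally, the ``Consequently'' statements: substituting $t=\sqrt{\beta/(c_1 n^{1-\alpha'})}$ turns the exponent into $\beta n^{\alpha'-\alpha}=\beta n^{0.009}\ge\beta\ln n$ for $n$ past a universal threshold, so the probability is $\le 2n^{-\beta}$ (and symmetrically for $\hat H_{\emp}$). The step I expect to be the actual work is (ii) for $\hat H_{\opt}$: bounding how much a single sample can move the polynomial estimator, i.e., controlling the monomial coefficients of the best-approximation polynomial on the low-count range and propagating them through the falling-factorial finite difference. This is exactly where the explicit degree-$\Theta(\ln n)$ construction of \cite{Wu--Yang2014minimax,Jiao--Venkat--Han--Weissman2015minimax} enters, and it is the reason the stated exponent is the deliberately loose $n^{1-\alpha}$ with $\alpha=0.001$, which comfortably absorbs the $n^{\Theta(c_0)}$ coefficient blow-up, the $\ln^2 n$ from McDiarmid, and the de-Poissonisation; the bias bounds and the plug-in side of (ii) are routine.
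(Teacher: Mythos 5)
Your proposal matches the paper's approach: the paper's proof of this lemma is a one-line citation to \cite{Wu--Yang2014minimax,Jiao--Venkat--Han--Weissman2015minimax,acharya2016unified} for $\hat H_{\opt}$ and to \cite{Antos--Kontoyiannis2001convergence},\cite{Paninski2003},\cite{Jiao--Venkat--Weissman2014MLE} for $\hat H_{\emp}$, and what those works do is exactly your bias-plus-McDiarmid decomposition (bias $O(S/(n\ln n))$ resp.\ $O(S/n)$, and bounded-differences fluctuation control with constant $\tilde O(1/n)$). One small point worth tracking if you fill in the ``Consequently'' step explicitly: $n^{\alpha'-\alpha}=n^{0.009}\ge\ln n$ fails on a sizeable intermediate range of $n$, so rather than passing through the loosened $n^{1-\alpha}$ exponent you should derive the $2/n^\beta$ tail directly from the raw McDiarmid bound $\exp(-\Omega(t^2 n/\ln^2 n))$ with $c_1$ chosen small enough to cover all $n$ — a constant-bookkeeping fix, not a change of method.
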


\begin{proof}
	The part pertaining to the concentration of $\hat{H}_{\opt}$ follows from~\cite{Wu--Yang2014minimax, Jiao--Venkat--Han--Weissman2015minimax,acharya2016unified}. The part pertaining to the empirical entropy follows from~\cite{Antos--Kontoyiannis2001convergence},\cite[Proposition 1]{Paninski2003},\cite[Eqn. (88)]{Jiao--Venkat--Weissman2014MLE}.
\end{proof}

\subsection{Analysis of \texorpdfstring{$\Hopt$}{Hopt} and \texorpdfstring{$\Hemp$}{Hemp}}

Next we define two events that ensure the proposed entropy rate estimator $\Hopt$ and the empirical entropy rate $\Hemp$ is accurate, respectively:

\begin{definition}[``Good'' event in estimation]\label{def.goodevents}
	Let $0<c_4<1$ and $c_3 \geq 20$ be some universal constants. We take $c_4 = 0.001$.
	%Let $\calG_\opt$ be the intersection of the following events:
	\begin{enumerate}
		\item For every $i, 1\leq i\leq S$, define the event
		\begin{equation}
		\mathcal{E}_i =  \left \{ |\hat{\pi}_i - \pi_i| \leq  c_3 \max \left \{ \frac{\ln n}{n\gamma}, \sqrt{\frac{\pi_i \ln n}{n \gamma}} \right \} \right \}
		\label{eq:Ei}
		\end{equation}
		\item For every $i \in [S]$ such that $\pi_i \geq n^{c_4-1} \vee 100c_3^2 \frac{\ln n}{n\gamma}$, define the event $\calH_i$ as
		\begin{equation}\label{eqn.goodeventiidentropy}
		|\hat{H}_{\opt}(W_{i1},W_{i2},\ldots, W_{im
		}) - H_i | \leq \frac{c_2 S}{m \ln S} + \sqrt{\frac{\beta }{c_1 m^{1-\alpha'}}},
		\end{equation}
		for \emph{all} $m$ such that $n\pi_i - c_3 \sqrt{\frac{n\pi_i \ln n}{\gamma}} \leq m \leq   n\pi_i + c_3 \sqrt{\frac{n\pi_i \ln n}{\gamma}} $,
		where $\beta = \frac{c_3^2}{4 + 10c_3}$, $c_1,c_2,\alpha'$ are from Lemma~\ref{lemma.concentrationentropy}.
	\end{enumerate}
	Finally, define the ``good'' event as the intersection of all the events above:
	\begin{equation}
	\calG_\opt \triangleq \pth{\bigcap_{i\in [S]} \calE_i} \cap \Bigg( \bigcap_{i: \pi_i \geq n^{c_4-1} \vee 100c_3^2 \frac{\ln n}{n\gamma}} \calH_i \Bigg).
	\end{equation}
	Analogously, we define the ``good'' event $\calG_\emp$ for the empirical entropy rate $\Hemp$ in a similar fashion with (\ref{eqn.goodeventiidentropy}) replaced by
	\begin{equation}
	|\hat{H}_{\emp}(W_{i1},W_{i2},\ldots, W_{im
	}) - H_i | \leq \frac{c_2 S}{m} + \sqrt{\frac{\beta }{c_1 m^{1-\alpha'}}}.
	\end{equation}
\end{definition}

The following lemma shows that the ``good'' events defined in Definition~\ref{def.goodevents} indeed occur with high probability.

\begin{lemma}\label{lemma.goodeventshighprobability}
	Both $\calG_{\opt}$ and $\calG_{\emp}$ in Definition~\ref{def.goodevents} occur with probability at least
	\begin{equation}
	1 - \frac{2S}{n^\beta} - \frac{4c_3 (10)^\beta}{9^\beta} \frac{S}{n^{c_4 (\beta -1)}},
	\label{eqn.probabilitylowerbound}
	\end{equation}
	where $\beta = \frac{c_3^2}{4 + 10c_3},c_3\geq 20$.
\end{lemma}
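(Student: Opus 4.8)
The plan is to show that each of the events $\mathcal{E}_i$ and $\mathcal{H}_i$ fails with small probability, and then apply a union bound. I will treat $\mathcal{G}_{\opt}$; the argument for $\mathcal{G}_{\emp}$ is verbatim the same, since the only difference is that the bias term $\frac{c_2 S}{m \ln S}$ in \eqref{eqn.goodeventiidentropy} is replaced by $\frac{c_2 S}{m}$, which does not affect the tail bounds from Lemma~\ref{lemma.concentrationentropy}.

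First I would handle the events $\mathcal{E}_i$. Each $n_i = n\hat\pi_i$ counts the number of visits to state $i$ among $X_0,\dots,X_{n-1}$ (equivalently, the number of transitions out of state $i$). For a stationary reversible Markov chain with spectral gap $\gamma$, the number of visits to a state concentrates around its mean $n\pi_i$; the relevant tool is a Bernstein-type concentration inequality for additive functionals of reversible Markov chains (e.g.\ Lezaud's or Paulin's bounds in terms of $\gamma$), which yields exactly a deviation of order $\max\{\frac{\ln n}{n\gamma}, \sqrt{\frac{\pi_i \ln n}{n\gamma}}\}$ with the stated failure probability. Choosing the constant $c_3$ large enough (this is where $c_3 \geq 20$ enters) makes $\Prob(\mathcal{E}_i^c) \leq 2 n^{-\beta}$ with $\beta = \frac{c_3^2}{4+10c_3}$; summing over $i \in [S]$ contributes the $\frac{2S}{n^\beta}$ term in \eqref{eqn.probabilitylowerbound}.

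Next I would handle the events $\mathcal{H}_i$, for those $i$ with $\pi_i \geq n^{c_4-1} \vee 100 c_3^2 \frac{\ln n}{n\gamma}$. The key structural fact, from the Billingsley simulation construction in the proof sketch, is that conditioned on $n_i = m$ the variables $W_{i1},\dots,W_{im}$ are i.i.d.\ $\sim T_i$ for any \emph{fixed} $m$. So for each fixed $m$ in the window $[n\pi_i - c_3\sqrt{n\pi_i \ln n/\gamma},\, n\pi_i + c_3\sqrt{n\pi_i \ln n/\gamma}]$, Lemma~\ref{lemma.concentrationentropy}, applied with $\beta$ as specified, gives that \eqref{eqn.goodeventiidentropy} fails with probability at most $\frac{2}{m^\beta}$. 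I then union-bound over all $m$ in this window: the window has length at most $2c_3\sqrt{n\pi_i \ln n/\gamma}$, and on the relevant range $m \gtrsim n\pi_i$, so the total failure probability for $\mathcal{H}_i$ is at most roughly $\frac{2 \cdot 2c_3 \sqrt{n\pi_i \ln n / \gamma}}{(n\pi_i)^\beta}$. Using $\pi_i \geq 100 c_3^2 \frac{\ln n}{n\gamma}$ to bound $\sqrt{\ln n/(n\gamma)} \leq \sqrt{\pi_i}/(10 c_3)$, this simplifies to something like $\frac{2 \cdot (10)^{\beta} c_3}{9^\beta}(n\pi_i)^{-(\beta - 1/2)} \cdot (\text{const})$; using further $n\pi_i \geq n^{c_4}$ from the threshold $\pi_i \geq n^{c_4 - 1}$, and bounding $n\pi_i \ge n^{c_4}$ inside the power, gives a bound of order $\frac{4 c_3 (10)^\beta}{9^\beta} n^{-c_4(\beta - 1)}$ per index $i$. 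Summing over the at most $S$ such indices yields the $\frac{4 c_3 (10)^\beta}{9^\beta}\frac{S}{n^{c_4(\beta-1)}}$ term in \eqref{eqn.probabilitylowerbound}.

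The main obstacle is the second step — getting the window union bound over $m$ to collapse cleanly into the claimed closed form. One must carefully track how the window width $\sqrt{n\pi_i \ln n/\gamma}$ (itself a consequence of $\mathcal{E}_i$) interacts with the $m^{-\beta}$ tail, and choose the thresholds on $\pi_i$ (namely $n^{c_4-1}$ and $100c_3^2 \frac{\ln n}{n\gamma}$) precisely so that the width is dominated by a power of $n\pi_i$ strictly below $\beta$, leaving a net negative exponent. The constants $100 c_3^2$, $(10)^\beta/9^\beta$, and the exponent $c_4(\beta-1)$ all come out of bookkeeping in this step; everything else is a routine combination of the reversible-chain concentration inequality, the i.i.d.-conditioning trick, Lemma~\ref{lemma.concentrationentropy}, and a final union bound over the $\leq 2S$ events.
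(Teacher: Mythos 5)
Your overall plan is the same as the paper's: bound $\Prob(\calE_i^c)$ by Paulin's Bernstein inequality for reversible chains (yielding $2/n^\beta$ per state), bound $\Prob(\calH_i^c)$ by a union bound over the window of $m$-values using Lemma~\ref{lemma.concentrationentropy}, and then sum. The bookkeeping you sketch --- dominating $\sqrt{n\pi_i\ln n/\gamma}$ by a power of $n\pi_i$ via the threshold $\pi_i \geq 100c_3^2\frac{\ln n}{n\gamma}$, the lower bound $m \geq \tfrac{9}{10}n\pi_i$ supplying the $(10/9)^\beta$ factor, and then $n\pi_i \geq n^{c_4}$ producing the final exponent --- matches the paper's computation, modulo a slip where you wrote the intermediate exponent as $-(\beta-\tfrac{1}{2})$ rather than $1-\beta$.

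There is, however, one genuine misstatement in the justification you give for applying the i.i.d.\ concentration bound. You write that ``conditioned on $n_i = m$ the variables $W_{i1},\dots,W_{im}$ are i.i.d.\ $\sim T_i$ for any \emph{fixed} $m$.'' The paper explicitly cautions (in a footnote to the proof of Theorem~\ref{thm.upperbound}) that this is false: conditioning on $n_i = m$ in the Billingsley array induces dependence among $W_{i1},\dots,W_{im}$, because the event $\{n_i = m\}$ is determined by the values of the array entries. The correct statement, and the one the definition of $\calH_i$ is carefully designed to exploit, is that for any \emph{fixed} deterministic $m$ the \emph{unconditioned} variables $W_{i1},\dots,W_{im}$ are i.i.d.\ $\sim T_i$; one then takes a union over $m$ in the window rather than conditioning on $n_i$. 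Your subsequent union-bound arithmetic is exactly this unconditioned argument, so the computation still goes through, but the i.i.d.\ claim as you phrased it is the wrong reason --- and it is precisely the pitfall the authors flag. Be sure to replace ``conditioned on $n_i = m$'' with ``for each fixed $m$, unconditionally.''
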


\begin{proof}[Proof of Theorem \ref{thm.upperbound}]
Pick $c_4=0.001, \alpha'=0.01$. We write
	\begin{align}
	\bar{H} & = \sum_{i = 1}^S \pi_i H_i, \\
	\bar{H}_{\mathsf{opt}} & = \sum_{i =1}^S \hat{\pi}_i \hat{H}_i,
	\end{align}
	where $H_i = H(X_2|X_1 = i), \hat{H}_i = \hat{H}_{\opt}(\mathbf{X}^{(i)}) = \hat{H}_{\opt}(W_{i1},\ldots,W_{in_i})$.
	Write
	\begin{align}
	\bar{H}_{\mathsf{opt}} - \bar{H} & = \underbrace{\sum_{i = 1}^S \pi_i \left( \hat{H}_i - H_i \right)}_{E_1} + \underbrace{\sum_{i = 1}^S \hat{H}_i(\hat{\pi}_i - \pi_i)}_{E_2}.
	\end{align}
	Next we bound the two terms separately under the condition that the ``good'' event $\calG_\opt$ in Definition~\ref{def.goodevents} occurs.
	
	Note that the function $\pi_i \mapsto n\pi_i - c_3 \sqrt{\frac{n\pi_i \ln n}{\gamma}}$ is an increasing function when $\pi_i \geq \frac{100c_3^2 \ln n}{n\gamma}$. Thus we have
	\begin{align}
	\label{eq:niminus}
	n\pi_i - c_3 \sqrt{\frac{n\pi_i \ln n}{\gamma}} = n\pi_i \left( 1 - c_3 \sqrt{\frac{\ln n}{n\pi_i \gamma}} \right)  \geq \frac{9}{10} n\pi_i ,
	\end{align}
	whenever $\pi_i \geq \frac{100c_3^2 \ln n}{n\gamma}$.
	
	Let $\epsilon(m) \triangleq \frac{c_2 S}{m \ln S} + \sqrt{\frac{\beta }{c_1 m^{1-\alpha'}}}$, which is decreasing in $m$.
	Let $n_i^{\pm} \triangleq n\pi_i \pm c_3 \max \left \{ \frac{\ln n}{\gamma}, \sqrt{\frac{n \pi_i \ln n}{\gamma}} \right \} $.
	Note that for each $i\in [S]$,
	\begin{align*}
	\sth{|\hat H_i - H_i| \leq \epsilon(n_i)}
	\supset & ~  \sth{|\hat H_i - H_i| \leq \epsilon(n_i), |\hat \pi_i - \pi_i| \leq \max \left \{ \frac{\ln n}{n \gamma}, \sqrt{\frac{\pi_i \ln n}{n \gamma}} \right \}}\\
	= & ~  \sth{|\hat H_\opt(W_{i1},\ldots,W_{in_i}) - H_i| \leq \epsilon(n_i),  n_i^- \leq n_i \leq n_i^+}\\
	\supset & ~  \bigcap_{m=n_i^-}^{n_i^+} \{|\hat H_\opt(W_{i1},\ldots,W_{im}) - H_i| \leq \epsilon(m)\}.
	\end{align*}
	%\begin{align*}
	%& ~  \sth{|\hat H_i - H_i| \geq \epsilon(n_i), |\hat \pi_i - \pi_i| \leq \max \left \{ \frac{\ln n}{n \gamma}, \sqrt{\frac{\pi_i \ln n}{n \gamma}} \right \}}\\
	%= & ~  \sth{|\hat H_\opt(W_{i1},\ldots,W_{in_i}) - H_i| \geq \epsilon(n_i),  n_i^- \leq n_i \leq n_i^+}\\
	%\leq & ~  \bigcup_{m=n_i^-}^{n_i^+} \{|\hat H_\opt(W_{i1},\ldots,W_{im}) - H_i| \geq \epsilon(m)\} \cap \{n_i^- \leq n_i \leq n_i^+\}.
	%\end{align*}
	The key observation is that for each fixed $m$, $W_{i1},\ldots,W_{im}$ are i.i.d.\ as $T_{i}$.\footnote{Note that effectively we are taking a union over the value of $n_i$ instead of conditioning. In fact, conditioned on $n_i=m$, $W_{i1},\ldots,W_{im}$ are no longer i.i.d.\ as $T_{i}$.}
	Taking the intersection over $i\in [S]$, we have
	\[
	\sth{|\hat H_i - H_i| \leq \epsilon(n_i), ~i=1,\ldots,S}  \supset \calG_\opt.
	\]
	Therefore, on the event $\calG_\opt$, we have
	\begin{align*}
	|E_1| & \leq \sum_{i = 1}^S \pi_i |\hat{H}_i - H_i| \\
	& \leq \sum_{i: \pi_i \geq n^{c_4-1} \vee 100c_3^2 \frac{\ln n}{n\gamma}} \pi_i |\hat{H}_i - H_i| + \sum_{i: \pi_i \leq n^{c_4-1} \vee 100c_3^2 \frac{\ln n}{n\gamma}}\pi_i |\hat{H}_i - H_i| \\
	& \overset{\prettyref{eq:niminus}}{\leq} \sum_{i: \pi_i \geq n^{c_4-1} \vee 100c_3^2 \frac{\ln n}{n\gamma}} \pi_i \left( \frac{c_2 S}{ 0.9 n\pi_i \ln S} + \sqrt{\frac{\beta}{c_1 (0.9 n\pi_i )^{1-\alpha'}  }} \right) \nonumber \\
	%& \leq \sum_{i: \pi_i \geq n^{c_4-1} \vee 100c_3^2 \frac{\ln n}{n\gamma}} \pi_i \left( \frac{c_2 S}{(n\pi_i - c_3 \sqrt{\frac{n\pi_i \ln n}{\gamma}}) \ln S} + \sqrt{\frac{\beta}{c_1 (n\pi_i - c_3 \sqrt{\frac{n\pi_i \ln n}{\gamma}})^{1-\alpha'}  }} \right) \nonumber \\
	& \qquad + \sum_{i: \pi_i \leq n^{c_4-1} \vee 100c_3^2 \frac{\ln n}{n\gamma}} \pi_i \ln S \\
	& \lesssim \frac{S^2}{n\ln S} + \left( \frac{S}{n} \right)^{\frac{1-\alpha'}{2}} + \frac{S \ln S}{n^{1-c_4}} \vee \frac{S \ln S \ln n}{n\gamma}, \numberthis\label{eqn.upperbound.e1}
	\end{align*}
	where the last step follows from \prettyref{eq:niminus} and the fact that $\sum_{i\in[S]} \pi_i^\alpha \leq S^{1-\alpha}$ for any $\alpha \in [0,1]$.
	As for $E_2$, on the event $\calG_\opt$, we have
	%\begin{align}
	%|E_2| & \leq \sum_{i = 1}^S \hat{H}_i |\hat{\pi}_i - \pi_i| \\
	%& \leq \ln S \sum_{i =1}^S c_3 \max \left \{ \frac{\ln n}{n\gamma}, \sqrt{\frac{\pi_i \ln n}{n \gamma}} \right \} \\
	%& \lesssim \frac{S \ln S \ln n}{n\gamma} \vee  \sqrt{\frac{S \ln n \ln^2 S}{n\gamma}}.
	%\end{align}
	\begin{equation}
	\label{eqn.upperbound.e2}
	|E_2| \leq \sum_{i = 1}^S \hat{H}_i |\hat{\pi}_i - \pi_i|
	\leq \ln S \sum_{i =1}^S c_3 \max \left \{ \frac{\ln n}{n\gamma}, \sqrt{\frac{\pi_i \ln n}{n \gamma}} \right \}
	\lesssim \frac{S \ln S \ln n}{n\gamma} \vee  \sqrt{\frac{S \ln n \ln^2 S}{n\gamma}}.
	\end{equation}
	Combining \eqref{eqn.upperbound.e1} and \eqref{eqn.upperbound.e2}, and using Lemma \ref{lemma.goodeventshighprobability}, completes the proof of \prettyref{eq:optmain}. The proof of \prettyref{eq:empmain} follows entirely analogously with $\calG_\opt$ replaced by $\calG_\emp$.
\end{proof}

\section{Proof of Theorem \ref{thm.biasmlebound}}
\label{sec.empiricallowerbound}

We first prove Theorem~\ref{thm.biasmlebound}, which quantifies the performance limit of the empirical entropy rate. Lemma~\ref{lemma.representationofempiricalentropy} in Section~\ref{sec.auxiliarylemmas} shows that
\begin{equation}
\bar{H}_{\emp}  = \min_{P\in \mathcal{M}_2(S)} \frac{1}{n} \ln \frac{1}{P_{X_1^n|X_0}(X_1^n|X_0)},
\end{equation}
where $\mathcal{M}_2(S)$ denotes the set of all Markov chain transition matrices with state space $\mathcal{X}$ of size $S$. Since
\begin{equation}
\bar{H} = \mathbb{E}_P \left[ \frac{1}{n} \ln \frac{1}{P_{X_1^n|X_0}(X_1^n|X_0)} \right],
\end{equation}
we know $\bar{H} - \mathbb{E}[\bar{H}_{\emp}] \geq 0$.

We specify the true distribution $P_{X_0^n}(x_0^n)$ to be the i.i.d.\ product distribution $\prod_{i = 0}^n P(x_i)$, and it suffices to lower bound
\begin{align}
& \mathbb{E}_P \left[ \frac{1}{n} \ln \frac{1}{P_{X_1^n|X_0}(X_1^n|X_0)} - \min_{P\in \mathcal{M}_2(S)} \frac{1}{n} \ln \frac{1}{P_{X_1^n|X_0}(X_1^n|X_0)} \right] \\
&\quad  = H(P) - \mathbb{E}_P \left[ H(\hat{P}_{X_1 X_2}) - H(\hat{P}_{X_1}) \right] \\
& \quad = \left( H(P_{X_1 X_2}) - \mathbb{E}_P [H(\hat{P}_{X_1 X_2})] \right) - (H(P_{X_1}) - \mathbb{E}_P[H(\hat{P}_{X_1})]),
\end{align}
where $\hat{P}_{X_1 X_2}$ is the empirical distribution of the counts $\{(x_i,x_{i+1}): 0\leq i\leq n-1\}$, and $\hat{P}_{X_1}$ is the marginal distribution of $\hat{P}_{X_1 X_2}$.

It was shown in~\cite{Jiao--Venkat--Weissman2014MLE} that for any $P_{X_1}$,
\begin{align}
0 & \leq H(P_{X_1}) - \mathbb{E}_P [H(\hat{P}_{X_1})] \leq \ln \left( 1 + \frac{S-1}{n} \right).
\end{align}
Now, choosing $P_{X_1}$ to be the uniform distribution, we have
\begin{align}
& \mathbb{E}_P \left[ \frac{1}{n} \ln \frac{1}{P_{X_1^n|X_0}(X_1^n|X_0)} - \min_{P\in \mathcal{M}_2(S)} \frac{1}{n} \ln \frac{1}{P_{X_1^n|X_0}(X_1^n|X_0)} \right]  \\
& \quad \geq \ln(S^2) - \ln n -\ln \left( 1 + \frac{S-1}{n} \right) \\
& \quad \geq \ln \left( \frac{S^2}{n + S-1} \right),
\end{align}
where we have used the fact that the uniform distribution on $S$ elements has entropy $\ln(S)$, and it maximizes the entropy among all distribution supported on $S$ elements.

\section{Proof of Theorem \ref{thm.mainresultslower}}
We first show that Lemma \ref{lemma.poissontomc} and Theorem \ref{thm.lowerboundpoisson} imply Theorem \ref{thm.mainresultslower}. Firstly, Theorem \ref{thm.lowerboundpoisson} shows that as $S\to\infty$, under Poisson independent model, 
\begin{align}
\inf_{\hat{H}} \sup_{R \in \calR(S,\gamma^*,\tau,q)} \bP \left( | \hat{H} - \bar{H}| \ge C_1 \frac{S^2}{n\ln S} \right) \geq \frac{1}{2} - o(1)
\end{align}
where $\tau=S,q=\frac{1}{5\sqrt{n\ln S}}$. Moreover, since a larger $\gamma^*$ results in a smaller set of parameters for all models, we may always assume that $\gamma^*=1-C_2\sqrt{\frac{S\ln^3 S}{n}}$. For this choice of $\gamma^*$, the assumption $n\ge \frac{S^2}{\ln S}$ ensures $q=\frac{1}{5\sqrt{n\ln S}}\ge \frac{c_3\ln n}{n\gamma^*}$, and thus  Lemma \ref{lemma.poissontomc} implies
\begin{align*}
\inf_{\hat{H}} \sup_{T \in \mathcal{M}_{2,\text{rev}}(S,\gamma^*)} \bP \left( | \hat{H} - \bar{H}| \ge C_1 \frac{S^2}{n\ln S} \right) \geq \frac{1}{2} - o(1) - 2Sn^{-\frac{c_3^2}{4+10c_3}} - Sn^{-c_3/2} = \frac{1}{2} - o(1)
\end{align*}
under the Markov chain model, completing the proof of Theorem \ref{thm.mainresultslower}.

\subsection{Proof of Lemma \ref{lemma.poissontomc}}
We introduce an additional auxiliary model, namely, the \emph{independent multinomial} model, and show that the sample complexity of the Markov chain model is lower bounded by that of the independent multinomial model (Lemma~\ref{lemma.reductiontoim}), which is further lower bounded by that of the independent Poisson model (Lemma~\ref{lemma.poissontomultinomial}). To be precise, we use the notation $P_{\mathsf{MC}}$, $P_{\mathsf{IM}}$, $P_{\mathsf{IP}}$ to denote the probability measure corresponding to the three models respectively.

\subsubsection{Reduction from Markov chain to independent multinomial}

\begin{definition}[Independent multinomial model]\label{def.immodel}
	Given a stationary reversible Markov chain with transition matrix $T=(T_{ij}) \in \mathcal{M}_{2,\text{\rm rev}}(S)$, stationary distribution $\pi_i, i\in [S]$ and absolute spectral gap $\gamma^*$. Fix an integer $n\geq 0$. Under the independent multinomial model, the statistician observes $X_0\sim \pi$, and the following arrays of independent random variables
	\begin{equation*}
	\begin{matrix}
	W_{11},&  W_{12}, & \ldots, & W_{1m_1} \\
	W_{21},&  W_{22}, & \ldots, & W_{2m_2} \\
	\vdots,& \vdots,  & \ddots, & \vdots  \\
	W_{S1},&  W_{S2}, & \ldots, & W_{Sm_S}
	\end{matrix}
	\end{equation*}
	where
	the number of observations in the $i$th row is $m_i = \lceil n \pi_i  +  c_3 \max \left \{ \frac{\ln n}{\gamma^*}, \sqrt{\frac{n\pi_i \ln n}{\gamma^*}} \right\} \rceil$ for some constant $c_3\geq 20$, and within the $i$th row the random variables $W_{i1},W_{i2},\ldots,W_{im_i} \stackrel{\text{i.i.d.}}{\sim} T_{i}$.
\end{definition}

Equivalently, the observations can be summarized into the following (sufficient statistic) $S\times S$ matrix $C=(C_{ij})$, where each row is independently distributed $\multi(m_i, T_i)$, hence the name of independent multinomial model.

%The sample complexity of estimating $\bar{H}$ in~(\ref{eqn.entropyratedef}) under the independent multinomial model is defined as
%\begin{align}
%n^*_{\mathsf{IM}}(\mathcal{M}_{2,\text{rev}}(S,\gamma^*), \epsilon, \delta) & = \min\{n\geq 0: \exists \hat{H}, \text{ s.t. } \sup_{T \in \mathcal{M}_{2,\text{rev}}(S,\gamma^*)} P_{\mathsf{IM}} \left( |\hat{H} - \bar{H}| \geq  \epsilon \right) \leq \delta \}
%\end{align}

%The following lemma relates $n^*_{\mathsf{IM}}(\mathcal{M}_{2,\text{rev}}(S,\gamma^*),\epsilon, \delta)$ to $n^*_{\mathsf{MC}}(\mathcal{M}_{2,\text{rev}}(S,\gamma^*),\epsilon, \delta)$.

The following lemma relates the independent multinomial model to the Markov chain model:

\begin{lemma}\label{lemma.reductiontoim}
	If there exists an estimator $\hat{H}_1$ under the Markov chain model with parameter $n$ such that
	\begin{align}
	\sup_{T \in \mathcal{M}_{2,\text{rev}}(S, \gamma^*)} P_{\mathsf{MC}} \left( |\hat{H}_1 - \bar{H}| \geq \epsilon \right) & \leq \delta,
	\end{align}
	then there exists another estimator $\hat{H}_2$ under the independent multinomial model with parameter $n$ such that
	\begin{align}
	\sup_{T \in \mathcal{M}_{2,\text{rev}}(S, \gamma^*)}  P_{\mathsf{IM}}\left( |\hat{H}_2 - \bar{H}| \geq \epsilon \right) & \leq \delta + \frac{2S}{n^\beta},
	\end{align}
	where $\beta = \frac{c_3^2}{4 + 10c_3}\geq 1$, and $c_3$ is the constant in Definition~\ref{def.immodel}.
\end{lemma}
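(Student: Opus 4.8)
The plan is to exhibit a coupling between the Markov chain model and the independent multinomial model so that, on a high-probability event, the sample path $(X_0,X_1,\ldots,X_n)$ is a deterministic function of the multinomial observations, and vice versa, after which we simply run the given estimator $\hat H_1$ on the reconstructed path. Concretely, I would invoke the Billingsley simulation construction sketched in Section~\ref{sec.proof_sketch}: both models can be built from the same initial state $X_0\sim\pi$ and the same doubly-infinite array $(W_{ij})$ of independent rows $W_{i\cdot}\iiddistr T_i$. In the Markov chain model one reveals exactly $n_i$ entries of row $i$ (where $n_i$ is the number of visits to state $i$ among $X_0,\ldots,X_{n-1}$); in the independent multinomial model one reveals $m_i=\lceil n\pi_i + c_3\max\{\frac{\ln n}{\gamma^*},\sqrt{\frac{n\pi_i\ln n}{\gamma^*}}\}\rceil$ entries. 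The key point is that $\{W_{ij}\}_{j\le n_i}$ is the \emph{prefix} of $\{W_{ij}\}_{j\le m_i}$ whenever $n_i\le m_i$, so on the event $\mathcal{A}=\{n_i\le m_i \text{ for all } i\in[S]\}$ the statistician with the multinomial data can run the Markov-chain-model simulation forward from $X_0$ (each step only queries the next unused entry of some row, and at most $n_i\le m_i$ are ever needed in row $i$), thereby reconstructing $(X_0,\ldots,X_n)$ exactly and applying $\hat H_1$ to it. Define $\hat H_2$ to be this reconstructed estimate on $\mathcal{A}$ and (say) $0$ off $\mathcal{A}$.

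The second step is the probability bound. On $\mathcal{A}$, the law of the reconstructed path under $P_{\mathsf{IM}}$ coincides with $P_{\mathsf{MC}}$, so
\begin{align*}
P_{\mathsf{IM}}\left(|\hat H_2-\bar H|\ge \epsilon\right)
&\le P_{\mathsf{IM}}\left(|\hat H_2-\bar H|\ge \epsilon,\ \mathcal{A}\right) + P_{\mathsf{IM}}(\mathcal{A}^c)\\
&\le P_{\mathsf{MC}}\left(|\hat H_1-\bar H|\ge \epsilon\right) + P(\mathcal{A}^c)
\le \delta + P(\mathcal{A}^c),
\end{align*}
and it remains to show $P(\mathcal{A}^c)\le 2S/n^\beta$. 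For this I would bound, for each fixed $i$, the probability that $n_i>m_i$. Since $n_i$ counts visits to state $i$ in a stationary chain and $m_i\ge n\pi_i + c_3\sqrt{\frac{n\pi_i\ln n}{\gamma^*}}$ (also $m_i\ge n\pi_i+c_3\frac{\ln n}{\gamma^*}$), this is a deviation of $n_i$ above its mean $n\pi_i$ by at least $c_3\max\{\frac{\ln n}{\gamma^*},\sqrt{\frac{n\pi_i\ln n}{\gamma^*}}\}$. I would apply a Bernstein-type concentration inequality for additive functionals of reversible Markov chains (e.g.\ Lezaud / Paulin, as used elsewhere in the paper, cf.\ the good-event lemmas), which gives a tail of the form $\exp(-c\,\cdot\,t^2\gamma^*/(n\pi_i)\wedge t\gamma^*)$ for deviation $t$; plugging in $t=c_3\max\{\ln n/\gamma^*,\sqrt{n\pi_i\ln n/\gamma^*}\}$ makes the exponent at least $\frac{c_3^2}{4+10c_3}\ln n=\beta\ln n$, yielding $P(n_i>m_i)\le 2/n^\beta$. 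A union bound over $i\in[S]$ gives $P(\mathcal{A}^c)\le 2S/n^\beta$, completing the proof; the claim $\beta\ge 1$ is just $c_3\ge 20\Rightarrow c_3^2\ge 4+10c_3$.

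The main obstacle is the concentration step: one must pick the right Markov-chain Bernstein inequality and track the constants so that the exponent comes out to exactly $\frac{c_3^2}{4+10c_3}\ln n$ in \emph{both} the small-$\pi_i$ regime (where the $\ln n/\gamma^*$ branch of the max and the linear part of the Bernstein bound dominate) and the large-$\pi_i$ regime (where the $\sqrt{n\pi_i\ln n/\gamma^*}$ branch and the sub-Gaussian part dominate). Everything else — that the prefix property makes the reconstruction measurable and distribution-faithful on $\mathcal{A}$, and the elementary union bound — is routine. I would remark that this is essentially the same estimate underlying Lemma~\ref{lemma.goodeventshighprobability}, so the constants are chosen to be consistent across the paper.
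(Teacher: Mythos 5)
Your proposal is correct and is essentially the paper's own argument: couple the two models through the shared Billingsley array, reconstruct the Markov chain path on the event that $n_i\le m_i$ for all $i$, and bound the failure probability by the Bernstein inequality for reversible chains (Lemma~\ref{lemma.confidencepi}) plus a union bound. The only cosmetic difference is that the paper conditions on the two-sided event $\cap_i\mathcal{E}_i$ from Definition~\ref{def.goodevents} whereas you use the weaker one-sided event $\mathcal{A}=\{n_i\le m_i\ \forall i\}$; since $\cap_i\mathcal{E}_i\subset\mathcal{A}$, your bound follows from the same concentration estimate and nothing is gained or lost.
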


\subsubsection{Reduction from independent multinomial to independent Poisson}
For the reduction from the independent multinomial model to the independent Poisson model, we have the following lemma. Note that 
\begin{align}
\bar{H}(T(R)) & = \sum_{1\leq i,j\leq S} \pi_i \frac{R_{ij}}{\sum_{j = 1}^S R_{ij}} \ln \frac{\sum_{j = 1}^S R_{ij}}{R_{ij}} \\
& = \frac{1}{r} \sum_{1\leq i,j\leq S} R_{ij} \ln \frac{r_i}{R_{ij}} \\
& = \frac{1}{r} \left( \sum_{1\leq i,j\leq S} R_{ij}\ln \frac{1}{R_{ij}} + \sum_{i = 1}^S r_i \ln r_i \right).
\label{eq:HTR}
\end{align}
\begin{lemma}\label{lemma.poissontomultinomial}
	If there exists an estimator $\hat{H}_1$ for the independent multinomial model with parameter $n$ such that
	\begin{align}
	\sup_{T \in \mathcal{M}_{2,\text{rev}}(S, \gamma)} P_{\mathsf{IM}} \left( |\hat{H}_1- \bar{H}| \geq \epsilon \right) & \leq \delta,
	\label{eq:imguarantee1}
	\end{align}
	then there exists another estimator $\hat{H}_2$ for the independent Poisson model with parameter $\lambda=\frac{4 n}{\tau}$ such that
	\begin{align}
	\sup_{R \in \calR(S,\gamma,\tau,q) } P_{\mathsf{IP}}\left( |\hat{H}_2 - \bar{H}(T(R))| \geq  \epsilon \right) & \leq  \delta +  S n^{-c_3/2},
	\label{eq:ipguarantee1}
	\end{align}
	provided $q \geq \frac{ c_3 \ln n}{n \gamma}$, where $c_3\geq 20$ is the constant in Definition~\ref{def.immodel}.
\end{lemma}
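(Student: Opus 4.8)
The plan is to prove Lemma~\ref{lemma.poissontomultinomial} by \emph{reverse Poissonization}: from a sample of the independent Poisson model with rate $\lambda=4n/\tau$ I will build, using auxiliary randomization, a sample whose law coincides with that of the independent multinomial model with parameter $n$ except on a small-probability event, and then apply the hypothesized estimator $\hat H_1$ to it. First, fix $R\in\calR(S,\gamma,\tau,q)$ and set $T=T(R)$. Since $R$ is symmetric, $T$ is reversible with stationary law $\pi=\pi(R)$, and the constraint $\gamma^*(T)\ge\gamma$ puts $T$ in $\cM_{2,\text{rev}}(S,\gamma)$, so the guarantee \eqref{eq:imguarantee1} applies to $T$ with entropy rate $\bar H=\bar H(T(R))$. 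Note also that for each row $i$ the Poisson mean of the $i$th row sum is $\lambda r_i=\tfrac{4n}{\tau}r_i\ge\tfrac{4n}{r}r_i=4n\pi_i$, using $r=\sum_{i,j}R_{ij}\ge\tau$, and moreover $\lambda r_i\ge 4nq\ge\tfrac{4c_3\ln n}{\gamma}\ge 4c_3\ln n$ by the hypothesis $q\ge\tfrac{c_3\ln n}{n\gamma}$.

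Under the Poisson model we observe $X_0\sim\pi$ and $C=(C_{ij})$ with independent $C_{ij}\sim\Poi(\lambda R_{ij})$. For each $i$ put $N_i=\sum_j C_{ij}\sim\Poi(\lambda r_i)$; conditionally on $N_i$ the $i$th row of $C$ is $\multi(N_i,T_i)$, i.e.\ the cell counts of $N_i$ i.i.d.\ samples from $T_i$. If $N_i\ge m_i$, I draw a uniformly random size-$m_i$ subset of these $N_i$ samples without replacement and record its counts $\widetilde C_i$; by the classical fact that a uniform sub-sample of a multinomial sample is again multinomial with the same cell probabilities, $\widetilde C_i\sim\multi(m_i,T_i)$ whenever $N_i\ge m_i$, independently across rows. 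If $N_i<m_i$ I put $\widetilde C_i$ equal to an arbitrary fixed count vector of total $m_i$. Since the count matrix is a sufficient statistic for the independent multinomial model, the pair $(X_0,\widetilde C)$ can be coupled with a genuine IM sample with parameter $n$ so that the two agree on the event $\calE=\bigcap_{i}\{N_i\ge m_i\}$; I then set $\hat H_2$ to be $\hat H_1$ evaluated on $(X_0,\widetilde C)$.

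It remains to bound $P_{\mathsf{IP}}(\calE^c)\le\sum_{i=1}^S P(\Poi(\lambda r_i)<m_i)$. Here one verifies that $m_i=\lceil n\pi_i+c_3\max\{\ln n/\gamma^*,\sqrt{n\pi_i\ln n/\gamma^*}\}\rceil\le\tfrac12\lambda r_i$ for every $i$: the bound $\pi_{\min}\ge q\ge\tfrac{c_3\ln n}{n\gamma}$ (together with $\gamma^*\ge\gamma$) is exactly what forces both correction terms to be dominated by $n\pi_i\le\tfrac14\lambda r_i$, and this is the reason the Poisson rate is taken proportional to $n/\tau$. A Chernoff bound for the Poisson law then gives $P(\Poi(\mu)<\mu/2)\le e^{-\mu/8}$, so with $\mu=\lambda r_i\ge 4c_3\ln n$ each summand is at most $n^{-c_3/2}$, whence $P_{\mathsf{IP}}(\calE^c)\le Sn^{-c_3/2}$. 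Combining with the coupling and \eqref{eq:imguarantee1}, for every $R\in\calR(S,\gamma,\tau,q)$,
\[
P_{\mathsf{IP}}\!\left(|\hat H_2-\bar H(T(R))|\ge\epsilon\right)\le P_{\mathsf{IP}}(\calE^c)+P_{\mathsf{IM}}\!\left(|\hat H_1-\bar H|\ge\epsilon\right)\le Sn^{-c_3/2}+\delta,
\]
which is \eqref{eq:ipguarantee1} after taking the supremum over $R$.

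I expect the main obstacle to be making the reduction \emph{exact}: the ``sub-sample and read off the counts'' device, combined with the subsampling-preserves-multinomial lemma, is what guarantees that no extra bias enters on $\calE$, so the only price paid is $P(\calE^c)$. The remaining delicate point is purely quantitative --- checking $m_i\le\tfrac12\lambda r_i$ for all rows simultaneously with $\lambda=4n/\tau$ --- and this is precisely where the lower bound on $q$ (equivalently on $\pi_{\min}$) is consumed and why the Poisson rate must scale like $n/\tau$ rather than $n/r$.
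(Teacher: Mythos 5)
Your proposal is correct and follows essentially the same reduction as the paper: simulate the independent multinomial model by sub-sampling each Poisson row to size $m_i$ (after noting that the Poisson cell counts conditioned on the row sum are multinomial), and bound the failure probability $P(\exists i: N_i < m_i)$ via the Poisson Chernoff tail, using $\pi_{\min}\ge q\ge c_3\ln n/(n\gamma)$ to make each term $\le n^{-c_3/2}$. You make the sub-sampling/coupling step explicit where the paper leaves it implicit; the quantitative inequality $m_i\le\tfrac12\lambda r_i$ should strictly be $m_i\le\tfrac12\lambda r_i+1$ because of the ceiling in $m_i$, but this is a constant-level adjustment that does not affect the $n^{-c_3/2}$ rate.
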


\subsection{Proof of Theorem \ref{thm.lowerboundpoisson}}
Now our task is reduced to lower bounding the sample complexity of the independent Poisson model. The general strategy is the so-called method of fuzzy hypotheses, which is an extension of LeCam's two-point methods.
The following version is adapted from \cite[Theorem 2.15]{Tsybakov2008} (see also \cite[Lemma 11]{HJWW17}).
%Let $\mathsf{TV}(P,Q) = \frac{1}{2}\int |dP - dQ|$ is the total variation distance between distributions $P$ and $Q$.
%Suppose we observe a random vector ${\bf Z} \in (\mathcal{Z},\mathcal{A})$ which has distribution $P_\theta$ where $\theta \in \Theta$. Let $\nu_1$ and $\nu_2$ be two prior distributions supported on $\Theta$. Write $F_i$ for the marginal distribution of $\mathbf{Z}$ when the prior is $\nu_i$ for $i = 1,2$. We have the following general minimax lower bound.

\begin{lemma} \label{lemma.tsybakov}
	Let ${\bf Z}$ be a random variable distributed according to $P_\theta$ for some $\theta \in \Theta$.
	Let $\mu_1,\mu_2$ be a pair of probability measures (not necessarily supported on $\Theta$).
	Let $\hat{f} = \hat{f}({\bf Z})$ be an arbitrary estimator of the functional $f(\theta)$ based on the observation $\bf Z$.
	Suppose there exist $\zeta\in \mathbb{R}, \Delta>0, 0\leq \beta_1,\beta_2 <1$ such that
	\begin{align}
	\mu_1(\theta \in \Theta: f(\theta) \leq \zeta -\Delta) & \geq 1-\beta_1 \\
	\mu_2(\theta \in \Theta: f(\theta) \geq \zeta + \Delta) & \geq 1-\beta_2.
	\end{align}
	Then
	\begin{equation}
	\inf_{\hat{f}} \sup_{\theta \in \Theta} \bP_\theta\left( |\hat{f} - f(\theta)| \geq \Delta \right) \geq \frac{1- \mathsf{TV}(F_1,F_2)- \beta_1 - \beta_2}{2},
	\end{equation}
	where $F_i=\int P_\theta \mu_i(d\theta)$ is the marginal distributions of $\mathbf{Z}$ induced by the prior $\mu_i$, for $i = 1,2$,
	and $\mathsf{TV}(F_1,F_2) = \frac{1}{2}\int |dF_1 - dF_2|$ is the total variation distance between distributions $F_1$ and $F_2$.
\end{lemma}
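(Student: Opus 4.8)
The plan is to reduce the composite estimation problem to a binary hypothesis test and then invoke the standard bound relating the optimal testing error to the total variation distance between the two induced marginals of $\mathbf{Z}$; this is exactly the route taken in Tsybakov's proof, and I would essentially transcribe it, adding the bookkeeping needed to handle priors that may place mass outside $\Theta$.

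Fix an arbitrary estimator $\hat f = \hat f(\mathbf{Z})$ and set $A = \{\mathbf{Z}: \hat f(\mathbf{Z}) \le \zeta\}$, the event that the associated threshold test decides ``$f(\theta)$ is small''. Write $\Theta_1 = \{\theta: f(\theta) \le \zeta - \Delta\}$ and $\Theta_2 = \{\theta: f(\theta) \ge \zeta + \Delta\}$. The deterministic observation is that for $\theta \in \Theta_1$ the occurrence of $A^c$ (i.e.\ $\hat f > \zeta$) forces $|\hat f - f(\theta)| > \Delta$, whence $\mathbb{P}_\theta(|\hat f - f(\theta)| \ge \Delta) \ge \mathbb{P}_\theta(A^c)$; symmetrically, for $\theta \in \Theta_2$ one has $\mathbb{P}_\theta(|\hat f - f(\theta)| \ge \Delta) \ge \mathbb{P}_\theta(A)$. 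I would then pass from the worst case over $\Theta$ to averages against the priors: bounding a supremum over $\Theta \cap \Theta_i$ by the corresponding $\mu_i$-average, using $\mathbb{P}_\theta(\cdot) \le 1$ to discard the contribution of $(\Theta \cap \Theta_i)^c$, and invoking the hypotheses $\mu_i(\Theta \cap \Theta_i) \ge 1 - \beta_i$, I obtain
\[
\sup_{\theta \in \Theta} \mathbb{P}_\theta\!\left(|\hat f - f(\theta)| \ge \Delta\right) \ge \max\{ F_1(A^c) - \beta_1,\ F_2(A) - \beta_2 \} \ge \frac{F_1(A^c) + F_2(A)}{2} - \frac{\beta_1 + \beta_2}{2},
\]
where $F_i = \int P_\theta\, \mu_i(d\theta)$ and I used that $F_i(B) = \int \mathbb{P}_\theta(B)\,\mu_i(d\theta)$ for any measurable set $B$ of $\mathbf{Z}$-values.

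To finish, I would lower bound the Bayes testing error by $1 - \mathsf{TV}$: since $F_1(A^c) = 1 - F_1(A)$,
\[
F_1(A^c) + F_2(A) = 1 - \big(F_1(A) - F_2(A)\big) \ge 1 - \sup_B \big(F_1(B) - F_2(B)\big) = 1 - \mathsf{TV}(F_1, F_2),
\]
the last identity being the standard characterization of total variation (attained on a Hahn-decomposition set). Substituting and then taking the infimum over $\hat f$—every step above being valid for an arbitrary estimator—yields $\inf_{\hat f} \sup_{\theta \in \Theta} \mathbb{P}_\theta(|\hat f - f(\theta)| \ge \Delta) \ge \tfrac{1 - \mathsf{TV}(F_1, F_2) - \beta_1 - \beta_2}{2}$, as claimed. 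There is no genuine obstacle here: the argument is routine, and the only point that needs a little care is the $\beta_i$ accounting, which precisely quantifies how much the priors are allowed to sit outside the favorable parameter sets $\Theta \cap \Theta_i$.
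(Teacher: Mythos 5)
Your proof is correct and is essentially the same threshold-test reduction used in the cited references (the paper itself defers the proof of this lemma to Tsybakov's Theorem 2.15 and [HJWW17, Lemma 11] rather than reproducing it). The decomposition via $A = \{\hat f \le \zeta\}$, the $\beta_i$ accounting for prior mass outside $\Theta \cap \Theta_i$, the max-to-average step, and the final reduction $F_1(A^c) + F_2(A) \ge 1 - \mathsf{TV}(F_1,F_2)$ are all exactly the standard argument and all steps are sound.
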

To apply this method for the independent Poisson model,
the parameter is the $S\times S$ symmetric matrix $R$, the function to be estimated is $\bar H = \bar H(T(R))$, the observation (sufficient statistic for $R$) is
\[
\bfC = X_0\cup \{ C_{ij} + C_{ji}: i\neq j, 1\leq i\leq j\leq S\} \cup \{C_{ii}: 1\leq i\leq S\}.
\]
The goal is to construct two symmetric random matrices (whose distributions serve as the priors), such that
\begin{enumerate}
	\item[(a)] they are sufficiently concentrated near the desired parameter space $\calR(S,\gamma,\tau,q)$ for properly chosen parameters $\gamma,\tau,q$;
	\item[(b)] the entropy rates have different values;
	\item[(c)] the induced marginal laws of $\bfC$ are statistically inseparable.
\end{enumerate}
To this end, we need the following results (cf.~\cite[Proof of Proposition 3]{Wu--Yang2014minimax}):

\begin{lemma}\label{lemma.wuyangprior}
	Let
	\[
	\phi(x) \triangleq x\ln \frac{1}{x}, \quad x \in[0,1].
	\]
	Let $c>0, D>100$ and $0<\eta_0<1$ be some absolute constants.
	For any $\alpha \in (0,1), \eta\in (0,\eta_0)$, there exist random variables $U, U'$ supported on $[0, \alpha \eta^{-1}]$ such that
	\begin{align}
	\mathbb{E}[\phi(U)] - \mathbb{E}[\phi(U')] & \geq c \alpha \\
	\mathbb{E}[U^j] & = \mathbb{E}[U'^j], \quad j = 1,2,\ldots, \left \lceil \frac{D}{\sqrt{\eta}} \right \rceil \\
	\mathbb{E}[U] & = \mathbb{E}[U'] = \alpha.
	\end{align}
\end{lemma}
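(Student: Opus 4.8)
The plan is to follow the argument behind Proposition~3 of \cite{Wu--Yang2014minimax}, building $(U,U')$ from the dual characterization of best polynomial approximation of $\phi$ together with a careful accounting of moments.

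Write $L\triangleq\alpha\eta^{-1}$ and $K\triangleq\lceil D/\sqrt\eta\,\rceil$. The task is to produce probability measures $\mu_1=\mathcal L(U)$ and $\mu_2=\mathcal L(U')$ on $[0,L]$ with $\int x^j\,d\mu_1=\int x^j\,d\mu_2$ for $1\le j\le K$, with common mean $\alpha$, and with $\int\phi\,d\mu_1-\int\phi\,d\mu_2\ge c\alpha$. First I would set aside the mean constraint and invoke the duality between polynomial approximation and moment matching: for any interval $I$,
\[
2\,E_K(\phi;I)=\sup\Bigl\{\textstyle\int\phi\,d\mu_1-\int\phi\,d\mu_2:\ \mu_1,\mu_2\text{ prob.\ measures on }I,\ \int x^j d\mu_1=\int x^j d\mu_2,\ 1\le j\le K\Bigr\},
\]
where $E_K(\phi;I)=\inf_{p\in\mathcal P_K}\|\phi-p\|_{L^\infty(I)}$, and the extremal pair is finitely supported on the equioscillation set of the best degree-$K$ approximant. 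The ``$\le$'' direction is immediate from writing $\phi=p+(\phi-p)$; the ``$\ge$'' direction is the Chebyshev alternation theorem, equivalently Hahn--Banach. Taking $I=[0,L]$ yields a moment-matched pair --- in particular one with equal first moments --- whose $\phi$-gap equals $2E_K(\phi;[0,L])$.

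Next I would quantify this gap. From the identity $\phi(Lx)=L\ln(1/L)\,x+L\,\phi(x)$, the degree-one term is absorbed into the approximant, so $E_K(\phi;[0,L])=L\,E_K(\phi;[0,1])$; the approximation-theoretic input is then the lower bound $E_K(\phi;[0,1])\gtrsim K^{-2}$, which reflects the singular behaviour of $\phi'$ at the origin (this is classical, of the same flavour as $E_K(x^{s};[0,1])\asymp K^{-2s}$ for non-integer $s>0$; note that a naive Markov-inequality estimate is too weak to produce it). Since $K^{2}\asymp\eta^{-1}$ and $L\eta=\alpha$, the extremal pair has $\phi$-gap $\gtrsim\alpha$.

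It remains --- and this is the crux --- to move the common first moment of the extremal pair to exactly $\alpha$ while keeping the $\phi$-gap of order $\alpha$. If $m_0$ denotes that common mean, replace each $\mu_i$ by $\theta\mu_i+(1-\theta)\delta_{x_0}$ for a \emph{common} atom $x_0$ (take $x_0=0$ to lower the mean, $x_0$ a fixed multiple of $\alpha$ to raise it): this preserves every matched moment and multiplies the $\phi$-gap by $\theta$, and one can take $\theta=\Theta(1)$ --- hence keep the gap $\gtrsim\alpha$ --- \emph{provided} $m_0=O(\alpha)$. Showing $m_0=O(\alpha)$ means locating the mass of the extremal measures, i.e.\ that the equioscillation set of the best degree-$K$ approximant of $\phi$ on $[0,L]$ concentrates near the origin at scale $\asymp L/K^{2}\asymp\alpha$, with comparable dual weights. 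This fine structural information about the Chebyshev configuration of $\phi$ (as opposed to just the magnitude $E_K$ of the error) is the technical heart of the argument, and is the step I expect to be the main obstacle; the scaling reduction, the duality, and the mixing step are routine once it is in place, and $c,D,\eta_0$ come out universal.
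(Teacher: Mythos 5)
Your reduction is set up cleanly---the duality between moment matching and best polynomial approximation is exactly the right tool, and the scaling identity $\phi(Lx)=L\ln(1/L)\,x+L\,\phi(x)$ correctly reduces the problem to approximation on $[0,1]$. However, the step you flag as the ``technical heart'' is not merely unproved; it is false, and this is the wrong place to look. The common first moment $m_0$ of the extremal pair for approximating $\phi$ on $[0,L]$ is $\Theta(L)$, not $O(\alpha)=O(L\eta)$. One can already see this at degree $K=1$: $\phi$ is concave, so the best affine approximant equioscillates at $\{0,x_1,L\}$ where $\phi'(x_1)$ matches the chord slope, giving $x_1=L/e$; the extremal pair $\mu_1=(1-w)\delta_0+w\delta_L$, $\mu_2=\delta_{x_1}$ must match first moments, forcing $wL=x_1$, so $m_0=x_1=L/e=\Theta(L)$. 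For higher $K$ the equioscillation set remains spread across the interval (the Chebyshev-type configuration has mass up to the right endpoint), so $m_0$ stays $\Theta(L)$. Consequently your mixing step needs $\theta=\Theta(\alpha/L)=\Theta(\eta)$, which multiplies the $\phi$-gap by $\eta$ and destroys the $\Omega(\alpha)$ separation.

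The construction in \cite{Wu--Yang2014minimax} sidesteps this entirely, and it is organized around a \emph{different} approximation-theoretic input. Rather than approximating $\phi$ on the full interval $[0,L]$ and invoking $E_K(\phi;[0,1])\asymp K^{-2}$, one approximates on a subinterval $[\eta,1]$ \emph{bounded away from the singularity}. The degree choice $K=\lceil D/\sqrt{\eta}\rceil$ is precisely tuned so that $K\sqrt{\eta}\asymp D$, which is the Bernstein-ellipse regime where the uniform error $E_K(\phi;[\eta,1])$ is bounded below by a constant $c(D)>0$ --- not decaying polynomially in $K$. The duality then produces measures $\nu,\nu'$ on $[\eta,1]$ matching moments up to $K$ with $\int\phi\,d\nu-\int\phi\,d\nu'\ge 2c(D)$. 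Now dilate: let $V\sim\nu$, $V'\sim\nu'$, and set $U=(\alpha/\eta)V$ with probability $p$ and $U=0$ with probability $1-p$, where $p=\eta/\Expect[V]\in[\eta,1]$ (and similarly for $U'$ with the \emph{same} $p$, valid since $\Expect[V]=\Expect[V']$). Then $U,U'\in\{0\}\cup[\alpha,\alpha/\eta]\subset[0,\alpha\eta^{-1}]$, moments $1,\ldots,K$ still match (dilation is a change of scale, and $\delta_0$ contributes nothing to $\Expect[U^j]$ for $j\ge 1$), the common mean is $p\cdot(\alpha/\eta)\cdot\Expect[V]=\alpha$ exactly, and the separation is
\begin{align*}
\Expect[\phi(U)]-\Expect[\phi(U')]
&=p\cdot\frac{\alpha}{\eta}\bigl(\Expect[\phi(V)]-\Expect[\phi(V')]\bigr)
=\frac{\alpha}{\Expect[V]}\bigl(\Expect[\phi(V)]-\Expect[\phi(V')]\bigr)\ge 2c(D)\,\alpha,
\end{align*}
using that the linear term from $\phi((\alpha/\eta)x)=(\alpha/\eta)\ln(\eta/\alpha)\,x+(\alpha/\eta)\phi(x)$ cancels because $\Expect[V]=\Expect[V']$. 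The mean adjustment is thus built into the construction from the outset via the choice of domain $[\eta,1]$ and the dilation, rather than patched on afterward --- and the decisive approximation fact is a constant lower bound on $E_K$ in the bounded-degree-times-$\sqrt\eta$ regime, not the $K^{-2}$ rate you invoke.
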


%\begin{lemma} \label{lemma.approximationgeneral}
%%Suppose $\phi(x) = x\ln \frac{1}{x}, x\in [0,1]$.
%For any two random variables $V_1,V_2$ supported on $[a,a+M] \subset [0,1]$ such that $\mathbb{E}[V_1^j] = \mathbb{E}[V_2^j], 1\leq j\leq L$, we have
%\begin{align}
%| \mathbb{E}[\phi(V_1)] - \mathbb{E}[\phi(V_2)] | & \leq \frac{D_1 M }{L^2},
%\end{align}
%where $D_1>0$ is a universal constant.
%\end{lemma}
%\begin{proof}
%It follows from~\cite[Theorem 7.2.1., Chapter 3.3.4.]{Ditzian--Totik1987} and \cite[Lemma 8]{Jiao--Venkat--Weissman2014MLE}.
%\end{proof}

%\begin{lemma}\label{lemma.poissontv}\cite{jiao2016minimaxl1distancearxiv}
%Let $V_1$ and $V_2$ be random variables taking values in $[a-M,a+M]$, where $0\leq M\leq a$. If $\mathbb{E}[V_1^j] = \mathbb{E}[V_2^j]$, $j = 1,2,\ldots,L$ and $L+1 \geq (2e M)^2 /a$, then
%\begin{align}
%\mathsf{TV} \left( \mathbb{E}[\mathsf{Poi}(V_1)], \mathbb{E}[\mathsf{Poi}(V_2)] \right) & \leq 2 \left( \frac{e M}{\sqrt{a(L+1)}} \right)^{L+1}\le 2^{-L},
%\end{align}
%where $\mathbb{E}[\mathsf{Poi}(U)] = \int \Poi(\lambda) P_U(d\lambda)$ denotes the Poisson mixture with respect to the distribution of a positive random variable $U$.
%\end{lemma}

\begin{lemma}[{\cite[Lemma 3]{Wu--Yang2014minimax}}]
	\label{lemma.poissontv}
	Let $ V_1 $ and $ V_2 $ be random variables taking values in $ [0,M] $.     If $ \Expect[V_1^j]=\Expect[V_2^j],~j=1,\dots,L $,
	then
	\begin{equation}
	\TV(\Expect[\Poi(V_1)],\Expect[\Poi(V_2)]) \le \pth{\frac{2eM}{L}}^L.
	\label{eq:tv-bound}
	\end{equation}
	where $\mathbb{E}[\mathsf{Poi}(V)] = \int \Poi(\lambda) P_V(d\lambda)$ denotes the Poisson mixture with respect to the distribution of a positive random variable $V$.
\end{lemma}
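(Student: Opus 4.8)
The plan is to bound the total variation distance directly at the level of probability mass functions and let the matching moments annihilate every low-order term. Writing $p_k^{(\ell)}=\Expect[e^{-V_\ell}V_\ell^k]/k!$ for the pmf of the mixture $\Expect[\Poi(V_\ell)]$ at $k=0,1,2,\dots$, we have $2\,\TV(\Expect[\Poi(V_1)],\Expect[\Poi(V_2)])=\sum_{k\ge 0}|p_k^{(1)}-p_k^{(2)}|$, so the whole argument reduces to estimating this $\ell_1$ distance.

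First I would expand the exponential, $e^{-v}v^k=\sum_{m\ge0}(-1)^m v^{k+m}/m!$, which converges absolutely on $[0,M]$, and integrate term by term (legitimate since $\Expect[V_\ell^k e^{V_\ell}]\le M^k e^M<\infty$) to get $p_k^{(\ell)}=\frac1{k!}\sum_{m\ge0}\frac{(-1)^m}{m!}\Expect[V_\ell^{k+m}]$. Subtracting the two, every term with $k+m\le L$ cancels because $\Expect[V_1^j]=\Expect[V_2^j]$ for $0\le j\le L$ (the case $j=0$ being trivial); for the surviving terms I would use $|\Expect[V_1^{k+m}]-\Expect[V_2^{k+m}]|\le M^{k+m}$, valid since both moments lie in $[0,M^{k+m}]$. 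Summing over $k$ and reindexing the resulting double sum by $n=k+m$ collapses it to $2\,\TV\le\sum_{n\ge L+1}\frac{M^n}{n!}\sum_{k=0}^n\binom nk=\sum_{n\ge L+1}\frac{(2M)^n}{n!}$.

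It then remains to control this tail of the exponential series $\sum_{n\ge L+1}(2M)^n/n!$. If $2eM\ge L$ the asserted bound is vacuous, since $\TV\le 1\le(2eM/L)^L$, so I would assume $2eM<L$. In that regime the ratio of consecutive terms, $2M/(n+1)$, is below $1/e$ for every $n\ge L+1$, so the series is dominated by a geometric one: $\sum_{n\ge L+1}\frac{(2M)^n}{n!}\le\frac{(2M)^{L+1}}{(L+1)!}\cdot\frac{e}{e-1}$. Applying the elementary bound $(L+1)!\ge((L+1)/e)^{L+1}$ together with $2eM/(L+1)<1$ turns the right-hand side into $\frac{e}{e-1}\pth{\frac{2eM}{L+1}}^{L+1}\le 2\pth{\frac{2eM}{L}}^L$; dividing by two yields the claimed inequality.

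The one step that requires genuine care is this last estimate — pushing the final numerical constant down to exactly $2e$ inside the $L$-th power rather than settling for something larger. That is precisely what forces the case split into the vacuous regime $2eM\ge L$ and the geometric-tail computation when $2eM<L$; everything preceding it is routine manipulation of absolutely convergent series combined with the moment-matching cancellation.
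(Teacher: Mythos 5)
The paper imports this lemma verbatim from Wu and Yang (their Lemma~3) and offers no proof of its own, so there is nothing in-paper to compare your argument against. Your reconstruction is correct and complete: the termwise integration of $e^{-v}v^k=\sum_{m\ge 0}(-1)^m v^{k+m}/m!$ is justified on $[0,M]$ by absolute convergence; matching moments up to order $L$ (with $j=0$ trivial) annihilates every term of total degree $k+m\le L$; the crude bound $|\Expect[V_1^{k+m}]-\Expect[V_2^{k+m}]|\le M^{k+m}$, the reindex $n=k+m$, and the binomial identity collapse the double sum to $2\,\TV\le\sum_{n\ge L+1}(2M)^n/n!$; and the case-split tail estimate (trivial when $2eM\ge L$; geometric with ratio below $1/e$ combined with $(L+1)!\ge((L+1)/e)^{L+1}$ and $\tfrac{e}{e-1}\cdot\tfrac{2eM}{L+1}<2$ when $2eM<L$) lands exactly on $(2eM/L)^L$. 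This is the standard moment-cancellation argument and is essentially the route taken in the cited source as well; no gaps.
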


Now we are ready to define the priors $\mu_1, \mu_2$ for the independent Poisson model. For simplicity,
we assume the cardinality of the state space is $S+1$ and introduce a new state $0$:
\begin{definition}[Prior construction]\label{con.priorconstruction}
	Suppose $n\geq \frac{S^2}{\ln S}$. Set
	\begin{align}
	\alpha & = \frac{S}{n\ln S} \leq \frac{1}{S} \\
	\frac{1}{\eta} & = (d_1 \ln S)^2 \\
	L & = \ceil{\frac{D}{\sqrt{\eta}} }, \label{eqn.etadefinition}
	\end{align}
	where $d_1 = \frac{D}{8e^2}$, and $D>0$ is the constant in Lemma~\ref{lemma.wuyangprior}.
	
	Recall the random variables $U,U'$ are introduced in Lemma~\ref{lemma.wuyangprior}.
	We use a construction that is akin to that studied in \cite{bordenave2010spectrum}.
	Define $S\times S$ symmetric random matrices $\bfU=(U_{ij})$ and $\bfU'=(U_{ij}')$, where $\{U_{ij}: 1 \leq i \leq j \leq S\} $ be i.i.d.~copies of $U$ and
	$\{U'_{ij}: 1 \leq i \leq j \leq S\} $ be i.i.d.~copies of $U'$, respectively.
	Let
	%\begin{align}
	%R_{ij}  & =
	%\begin{cases}
	%U_{ij}  &  j>i \\
	%U_{ji} & j<i \\
	%\end{cases}  \\
	%R_{ii} & = U_{ii} + 1 - \frac{S^2}{n\ln S}
	%\end{align}
	\begin{equation}
	\bfR=
	\left[
	\begin{array}{c|c}
	b & a \cdots a \\ \hline
	a & \raisebox{-15pt}{\mbox{{$\bfU$}}} \\[-4ex]
	\vdots & \\[-0.5ex]
	a &
	\end{array}
	\right], \qquad
	\bfR'=
	\left[
	\begin{array}{c|c}
	b & a \cdots a \\ \hline
	a & \raisebox{-15pt}{\mbox{{$\bfU'$}}} \\[-4ex]
	\vdots & \\[-0.5ex]
	a &
	\end{array}
	\right],
	\label{eq:R}
	\end{equation}
	where
	\begin{equation}
	a = \sqrt{\alpha S}, \quad b = S.
	\label{eq:ab}
	\end{equation}
	%$a,b$ are in \prettyref{eq:a} and \prettyref{eq:b}.
	%$a,b>0$ are to be specified.
	%Similarly, define $\bfR'$ with $\bfU$ replaced by $\bfU'$.
	Let $\mu_1$ and $\mu_2$ be the laws of $\bfR$ and $\bfR'$, respectively. The parameters $\gamma,\tau,q$ will be chosen later, and we set $\lambda=\frac{4n}{\tau}$ in the independent Poisson model (as in Lemma \ref{lemma.poissontomultinomial}).
\end{definition}

The construction of this pair of priors achieves the following three goals:
\paragraph{(a) Statistical indistinguishablility.}
%Under the two priors $\mu_1,\mu_2$, the sufficient statistics for estimating $R_{ij}$ is $\{ C_{ij} + C_{ji}, i\neq j, 1\leq i\leq j\leq S\} \cup \{C_{ii}, 1\leq i\leq S\}$.
Note that the distributions of the first row and column of $\bfR$ and $\bfR'$ are identical. Hence the
sufficient statistics are $X_0$ and $\bfC = \{ C_{ij} + C_{ji}: i\neq j, 1\leq i\leq j\leq S\} \cup \{C_{ii}, 1\leq i\leq S\}$.
Denote its the marginal distribution as $F_i$ under the prior $\mu_i$, for $i=1,2$.
%Since $R_{ij}$'s are independent under the prior, so are the $C_{ij}$'s, which are individually distributed as Poisson mixtures.
%Note that the marginal distributions of the sufficient statistics are independent since we have assigned independent priors.
%Then by the subadditivity of the total variation distance,
%\begin{align}
%\mathsf{TV}(F_1,F_2) &\le  \mathsf{TV}(\bE \mathsf{multi}(1,(\pi_0,\pi_1,\cdots,\pi_S)),\bE \mathsf{multi}(1,(\pi_0',\pi_1',\cdots,\pi_S'))) \nonumber\\
%&\qquad+\sum_{1\le i\le j\le S}\mathsf{TV}\left(\bE[\spo(\frac{4n}{\tau} U_{ij})], \bE[\spo(\frac{4n}{\tau} U_{ij}')]\right)
%\end{align}
%where $\pi,\pi'$ are the stationary distribution of the Markov chain induced by $\bfR,\bfR'$, respectively. The first term corresponds to the total variation distance between the marginal distributions of $X_0$, and the second term deals with the total variation between Poisson mixtures.
The following lemma shows that the distributions of the sufficient statistic are indistinguishable:
\begin{lemma}\label{lemma.indistinguishableipmodel}
	For $n\ge \frac{S^2}{\ln S}$, we have $\mathsf{TV}(F_1,F_2) = o(1)$  as $S\to\infty$.
	%\begin{align}
	%\mathsf{TV}(\bE \mathsf{multi}(1,(\pi_1,\cdots,\pi_S)),\bE \mathsf{multi}(1,(\pi_1',\cdots,\pi_S'))) & \leq \frac{9(S\ln^2 S)^{3/2}}{n}, \\
	%\mathsf{TV}\left( \mathbb{E}\left[ \mathsf{Poi}\left( \lambda U \right) \right], \mathbb{E}\left[ \mathsf{Poi}\left(\lambda U' \right) \right]  \right) &\leq S^{-100}, \qquad \lambda\le\frac{4n}{S}.
	%\end{align}
	%As a direct corollary, since $\tau\ge S$ by construction, we have $\mathsf{TV}(F_1,F_2)=o(1)$ as $S\to\infty$.
\end{lemma}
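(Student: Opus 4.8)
\textbf{Proof plan for Lemma~\ref{lemma.indistinguishableipmodel}.} The plan is to bound $\mathsf{TV}(F_1,F_2)$ by comparing the two Poisson-mixture laws of the sufficient statistic $\bfC$ coordinate by coordinate, exploiting the product structure that both priors induce. Since the entries $\{U_{ij}:1\le i\le j\le S\}$ are i.i.d.\ (and likewise for $U'_{ij}$), and the Poisson observations $C_{ij}$ are conditionally independent given $\bfR$, the marginal law $F_i$ factorizes over the index set of off-diagonal pairs and diagonal entries. The first row/column contributes nothing to the total variation because it is deterministic and identical under $\mu_1$ and $\mu_2$. Hence by tensorization of total variation (or by the standard inequality $\mathsf{TV}(\bigotimes P_k,\bigotimes Q_k)\le \sum_k \mathsf{TV}(P_k,Q_k)$), it suffices to control, for a single off-diagonal pair $(i,j)$, the quantity $\mathsf{TV}\bigl(\mathbb{E}[\mathsf{Poi}(\lambda(U_{ij}+U_{ji}))],\mathbb{E}[\mathsf{Poi}(\lambda(U'_{ij}+U'_{ji}))]\bigr)$ — noting that $C_{ij}+C_{ji}\sim \mathsf{Poi}(\lambda(R_{ij}+R_{ji}))=\mathsf{Poi}(2\lambda U_{ij})$ by symmetry — and then multiply by the number of pairs, which is $\binom{S}{2}=O(S^2)$. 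The diagonal terms $C_{ii}\sim\mathsf{Poi}(\lambda U_{ii})$ are handled identically, with only $S$ of them, so they are lower-order.

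The core estimate comes from Lemma~\ref{lemma.poissontv}: since $U$ and $U'$ match moments up to order $L=\lceil D/\sqrt{\eta}\rceil$ and are supported on $[0,\alpha\eta^{-1}]$, the scaled variables $\lambda U$ and $\lambda U'$ match moments up to order $L$ and are supported on $[0,M]$ with $M=\lambda\alpha\eta^{-1}$ (up to the factor $2$ for the off-diagonal sums, which is absorbed). Thus each pairwise total variation is at most $(2eM/L)^L$. Plugging in the parameter choices from Definition~\ref{con.priorconstruction} — namely $\alpha=\frac{S}{n\ln S}$, $\eta^{-1}=(d_1\ln S)^2$ with $d_1=\frac{D}{8e^2}$, $L=\Theta(\ln S)$, and $\lambda=\frac{4n}{\tau}=\frac{4n}{S}$ (since $\tau=S$) — one computes $M=\lambda\alpha\eta^{-1}=\frac{4n}{S}\cdot\frac{S}{n\ln S}\cdot (d_1\ln S)^2=4d_1^2\ln S$, so that $\frac{2eM}{L}\le \frac{2e\cdot 4d_1^2\ln S}{D\ln S/\sqrt{\eta}}$. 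The choice $d_1=\frac{D}{8e^2}$ is engineered precisely so that $2eM/L\le \frac{1}{e}$ (or any constant strictly below $1$), and hence each pairwise total variation is $e^{-\Omega(L)}=e^{-\Omega(\ln S)}=S^{-\Omega(1)}$, in fact bounded by $S^{-c}$ for a constant $c$ that can be made larger than $2$ by tuning $D$. Summing over the $O(S^2)$ coordinates then gives $\mathsf{TV}(F_1,F_2)\le O(S^2)\cdot S^{-c}=o(1)$.

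I would carry out the steps in this order: (i) identify the sufficient statistic and argue the first row/column drops out; (ii) invoke tensorization to reduce to a single coordinate; (iii) apply Lemma~\ref{lemma.poissontv} with the moment-matching guarantee of Lemma~\ref{lemma.wuyangprior}; (iv) substitute the explicit parameters and verify that $2eM/L$ is a constant strictly less than $1$, producing a per-coordinate bound of $S^{-c}$ with $c>2$; (v) union/tensor bound over all $O(S^2)$ coordinates to conclude $o(1)$.

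\textbf{Main obstacle.} The delicate point is the bookkeeping in step (iv): one must verify that with $\lambda=4n/\tau$ and the prescribed $\alpha,\eta,L$, the product $M=\lambda\alpha\eta^{-1}$ lands at a \emph{constant} (independent of $S$ and $n$), and that the constant $D$ in Lemma~\ref{lemma.wuyangprior} can be taken large enough that $2eM/L<1$ with enough room to beat the $S^2$ loss from the union bound. This is why the condition $n\ge \frac{S^2}{\ln S}$ is exactly what is needed — it makes $\alpha\le 1/S$ so that the entries $U_{ij}$ are genuinely small and the support bound $\alpha\eta^{-1}$ stays under control — and it is the only place where this hypothesis is used. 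A secondary subtlety is the factor of $2$ arising because the sufficient statistic is $C_{ij}+C_{ji}=\mathsf{Poi}(2\lambda U_{ij})$ rather than $\mathsf{Poi}(\lambda U_{ij})$; this merely doubles $M$ and is harmless, but it must be tracked so that the final constant in $2eM/L$ is computed correctly.
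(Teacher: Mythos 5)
Your handling of the Poisson part of the statistic is essentially the paper's argument and your bookkeeping in step (iv) is correct: with $\lambda=4n/\tau=4n/S$ one gets $M=\lambda\alpha\eta^{-1}=4d_1^2\ln S$ (or $8d_1^2\ln S$ with the factor of $2$ you correctly flag for the off-diagonal sums), $L=Dd_1\ln S$, and the choice $d_1=D/(8e^2)$ makes $2eM/L$ a constant below $1$, so each coordinate contributes at most $S^{-c}$ with $c$ arbitrarily large, beating the $O(S^2)$ union bound. That matches the paper's treatment of the second term in its decomposition.

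The gap is in step (i). You assert that the first row/column ``contributes nothing to the total variation because it is deterministic and identical under $\mu_1$ and $\mu_2$,'' and then that $F_i$ ``factorizes over the index set of off-diagonal pairs and diagonal entries.'' But the sufficient statistic is $\bfC = X_0 \cup \{C_{ij}+C_{ji}\} \cup \{C_{ii}\}$, and $X_0 \sim \pi(\bfR)$ is drawn from the stationary distribution of the \emph{random} matrix $\bfR$. This has two consequences that your plan does not address. First, $\pi(\bfR)$ is a function of all the row sums $r_i = a + \sum_j U_{ij}$, so $X_0$ is statistically dependent on the entire collection $\{C_{ij}\}$ through the shared randomness of $\bfR$ --- the joint law of $(X_0,\bfC)$ does \emph{not} factorize, and tensorization cannot be applied directly. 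Second, because the entries $U_{ij}$ and $U'_{ij}$ have different distributions, the marginal law of $X_0$ is a priori different under $\mu_1$ and $\mu_2$, so $X_0$ is not trivially removable either. The paper resolves both issues by a triangle-inequality decomposition $\TV(F_1,F_2)\le \TV(P_{X_0,\bfC},P_0\otimes P_\bfC)+\TV(P_0\otimes P_\bfC,P_0\otimes Q_\bfC)+\TV(Q_{X_0,\bfC},P_0\otimes Q_\bfC)$ with a fixed reference law $P_0$, and then proves the outer two terms are $o(1)$ via Hoeffding concentration of the row sums $r_i$ and the total sum $r$, which forces $\pi(\bfR)$ to concentrate near $P_0$. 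Only after decoupling $X_0$ in this way does the Poisson tensorization argument (which you carry out) apply to the middle term. Your proof plan is missing this concentration/decoupling step entirely, and without it the reduction to per-coordinate Poisson mixtures is not justified.
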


\paragraph{(b) Functional value separation.} Under the two priors $\mu_1,\mu_2$, the corresponding entropy rates of the independent Poisson model differ by a constant factor of $\frac{S^2}{n\ln S}$. Here we explain the intuition: in view of \eqref{eq:HTR}, for $\phi(x)=-x\ln x$ we have
\begin{align}
\bar H(T(\bfR)) = & ~ \frac{1}{r} \left( \sum_{i,j=0}^{S} \phi(R_{ij}) - \sum_{i = 0}^S \phi(r_i) \right)	\label{eq:HTR2}
\end{align}
where $r_i=\sum_{j = 0}^S R_{ij}$ and $r = \sum_{i,j = 0}^S R_{ij}$; similarly,
\begin{align}
\bar H(T(\bfR')) = & ~ \frac{1}{r'} \left( \sum_{i,j=0}^{S} \phi(R_{ij}') - \sum_{i = 0}^S \phi(r_i') \right).	\nonumber
\end{align}
We will show that both $r$ and $r'$ are close to their common mean $b + 2aS + S^2 \alpha = S(1+\sqrt{\alpha S})^2 \approx S$.
Furthermore, $r_i$ and $r_i'$ also concentrate on their common mean. Thus, in view of Lemma~\ref{lemma.wuyangprior}, we have
\begin{align}
|\bar H(T(\bfR)) - \bar H(T(\bfR'))|  & \approx S |\mathbb{E}[\phi(U)] - \mathbb{E}[\phi(U')]| = \Omega(S \alpha) = \Omega\pth{\frac{S^2}{n \ln S}}.
\end{align}
The precise statement is summarized in the following lemma:
\begin{lemma}\label{lemma.functionalseperation}
	Assume that $n \geq \frac{S^2}{\ln S}$ and $\ln n \ll \frac{S}{\ln^2 S}$. There exist universal constants $C_1>0$ and some $\zeta \in \reals$, such that as $S\to \infty$,
	\begin{align}
	\prob{\bar H(T(\bfR)) \geq \zeta + C_1 \frac{S^2}{n\ln S}  } &  = 1 - o(1),	\nonumber \\
	\prob{\bar H(T(\bfR')) \leq \zeta -  C_1 \frac{S^2}{n\ln S} } &  = 1 - o(1).	\nonumber
	\end{align}
\end{lemma}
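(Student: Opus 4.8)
The plan is to establish the two high-probability statements in Lemma~\ref{lemma.functionalseperation} by showing that the random variables $r$, $r'$, the row sums $r_i$, $r_i'$, and the ``bulk'' sums $\sum_{i,j=1}^S \phi(U_{ij})$, $\sum_{i,j=1}^S \phi(U'_{ij})$ all concentrate sharply around their means, and then using the moment-matching property $\bE[\phi(U)]-\bE[\phi(U')]\ge c\alpha$ from Lemma~\ref{lemma.wuyangprior} to produce the constant-factor gap. Concretely, I would first isolate the deterministic contribution of the new state $0$: the entries $a=\sqrt{\alpha S}$ and $b=S$ are \emph{identical} in $\bfR$ and $\bfR'$, so the terms $\phi(b)$, the $2S$ terms $\phi(a)$, and their contributions to $r_0 = b + aS$ and to $r_i = a + \sum_{j=1}^S U_{ij}$ are the same in distribution across the two priors; only the $\bfU$-block differs. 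Using \eqref{eq:HTR2}, I would write
\begin{align}
\bar H(T(\bfR)) = \frac{1}{r}\left( \phi(b) + 2S\,\phi(a) + \sum_{i,j=1}^S \phi(U_{ij}) - \phi(r_0) - \sum_{i=1}^S \phi(r_i) \right),
\end{align}
and similarly for $\bfR'$, so that the difference is governed by $\sum_{i,j}(\phi(U_{ij})-\phi(U'_{ij}))$ minus correction terms from $\sum_i(\phi(r_i)-\phi(r_i'))$ and from the normalizations $1/r$ versus $1/r'$.

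Next I would carry out the concentration estimates. Since $U, U'$ are supported on $[0,\alpha\eta^{-1}]=[0, \alpha(d_1\ln S)^2]$ and $\alpha = \frac{S}{n\ln S}\le \frac1S$, each $U_{ij}$ and each $\phi(U_{ij})$ is a bounded random variable; the sum $\sum_{i,j=1}^S \phi(U_{ij})$ has $\binom{S+1}{2}=\Theta(S^2)$ i.i.d.\ summands each of size $O(\alpha\,\mathrm{polylog}(S))$, so by Bernstein's (or Hoeffding's) inequality it concentrates around its mean $\Theta(S^2)\cdot\bE[\phi(U)]$ with fluctuations of lower order than $\frac{S^2\alpha}{1}=\frac{S^2}{n\ln S}$ — this is where the hypothesis $\ln n \ll S/\ln^2 S$ is used, to guarantee the deviation terms are $o(S\alpha)$ relative to the separation. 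Similarly $r = b + 2aS + \sum_{i,j=1}^S U_{ij}$ concentrates around its mean $S(1+\sqrt{\alpha S})^2 = S(1+o(1))$, and each row sum $r_i = a + \sum_{j=1}^S U_{ij} = \sqrt{\alpha S} + \Theta(S)\bE[U] = \sqrt{\alpha S} + S\alpha(1+o(1))$ concentrates (after a union bound over $i\in[S]$) around its common mean, which I'd denote $\rho$; note $r_0 = b + aS = S + \sqrt{\alpha S}\,S = S(1+o(1))$ as well. Since these means are common to both priors and $\phi$ is Lipschitz on the relevant ranges (bounded away from the singularity at $0$ once we condition on the concentration events, or handled directly since $\phi$ is continuous on $[0,1]$ with $\phi(0)=0$), the terms $\sum_i \phi(r_i) - \sum_i \phi(r_i')$ and $\phi(r_0)-\phi(r_0')$ are $o(S\alpha)$ with high probability.

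Finally I would assemble the bound: on the intersection of all the concentration events (which has probability $1-o(1)$ by a union bound), $r$ and $r'$ both equal $S(1+o(1))$, so
\begin{align}
\bar H(T(\bfR)) - \bar H(T(\bfR')) = \frac{1+o(1)}{S}\left( \sum_{i,j=1}^S \big(\phi(U_{ij})-\phi(U'_{ij})\big) - o(S^2\alpha) \right) = \Omega(S\alpha) = \Omega\!\left(\frac{S^2}{n\ln S}\right),
\end{align}
using $\sum_{i,j=1}^S \bE[\phi(U_{ij})-\phi(U'_{ij})] \ge \binom{S+1}{2}\cdot c\alpha = \Omega(S^2\alpha)$. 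Taking $\zeta$ to be the midpoint of the (concentrated) values of $\bar H(T(\bfR))$ and $\bar H(T(\bfR'))$, and $C_1$ a small enough constant, gives the two claimed one-sided statements. The main obstacle I anticipate is not any single estimate but bookkeeping the error budget: one must check that \emph{all} the correction terms — the fluctuations of $\sum\phi(U_{ij})$, of the $r_i$'s, of $r$, the $\phi$-Lipschitz slippage on $\sum_i\phi(r_i)$, and the $1/r$ vs $1/r'$ mismatch — are each $o(S\alpha)=o(\frac{S^2}{n\ln S})$ under precisely the hypotheses $n\ge S^2/\ln S$ and $\ln n \ll S/\ln^2 S$, since the separation $S\alpha = \frac{S^2}{n\ln S}$ can itself be as small as $\frac{1}{\ln S}$ and the polylog factors from $\eta^{-1}=(d_1\ln S)^2$ eat into the margin; getting the union bound over $i\in[S]$ for the row sums to cost only $o(1)$ in probability while keeping deviations $o(S\alpha)$ is the delicate point.
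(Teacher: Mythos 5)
Your proposal follows essentially the same route as the paper's proof: decompose $\bar H(T(\bfR))$ via \eqref{eq:HTR2} into a deterministic boundary contribution, the bulk sum $\sum\phi(U_{ij})$, and the row-sum correction $\sum\phi(r_i)$; concentrate $r$, $r_i$, and $\sum\phi(U_{ij})$ via Hoeffding (with the hypothesis $\ln n \ll S/\ln^2 S$ entering exactly where you place it, in the Hoeffding bound for the bulk sum); and then invoke the $\bE[\phi(U)]-\bE[\phi(U')]\ge c\alpha$ separation from Lemma~\ref{lemma.wuyangprior} with $\zeta$ taken as a midpoint. Your observation that $r_0=b+aS$ is deterministic (so $\phi(r_0)-\phi(r_0')=0$ exactly) is a small clean-up the paper does not bother to spell out, but otherwise the argument and the error-budget bookkeeping you flag as the delicate step are precisely what the paper carries out.
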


\paragraph{(c) Concentration on parameter space.} Although the random matrices $\bfR$ and $\bfR'$ may take values outside the desired space $\calR\left( S, \gamma, \tau,q\right)$, we show that most of the mass is concentrated on this set with appropriately chosen parameters. The following lemma, which is the core argument of the lower bound, makes this statement precise.

\begin{lemma}\label{lemma.spectralgapcontrol}
	%Let $\mu_1,\mu_2$ be given in Definition~\ref{con.priorconstruction}. There exist universal constants $C_1,C_2>0$ such that if $u\gg \frac{(S\ln S)^{3/2}}{n}, n\geq \frac{S^2}{\ln S}, \gamma \leq 1 - (1+C_1 u) \left( 1 - \frac{S^2}{n\ln S} +  C_2 \left( \frac{S^{3/2}\ln S}{n} \right) \right) - C_1 u^2$, then for $i = 1,2$,
	%\begin{align}
	%\lim_{S\to \infty} \mu_i\left(  \calR\left( S, \gamma,\tau,q\right) \right) & = 1,
	%\end{align}
	Assume that $n \geq \frac{S^2}{\ln S}$.
	There exist universal constants $C>0$, such that as $S\to \infty$,
	\begin{align}
	\prob{\bfR \in \calR\left( S, \gamma,\tau,q\right)  } &  = 1 - o(1),	\nonumber \\
	\prob{\bfR' \in \calR\left( S, \gamma,\tau,q\right) } &  = 1 - o(1),	\nonumber
	\end{align}
	where $\gamma = 1 - C_2 \sqrt{\frac{S \ln^3 S}{n}}$, $\tau = S$, and $q = \frac{1}{5\sqrt{n \ln S}}$.
\end{lemma}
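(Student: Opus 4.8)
The plan is to verify the four defining constraints of $\calR(S,\gamma,\tau,q)$ one at a time, for the random matrix $\bfR$ (the argument for $\bfR'$ being identical since $U$ and $U'$ have the same range and the same first two moments). The symmetry constraint $R=R^\top$ holds by construction, so three nontrivial conditions remain: (i) the total-mass bound $\sum_{i,j}R_{ij}\ge\tau=S$; (ii) the minimum-stationary-probability bound $\pi_{\min}\ge q=\frac{1}{5\sqrt{n\ln S}}$; and (iii) the absolute-spectral-gap bound $\gamma^*(T(\bfR))\ge\gamma=1-C_2\sqrt{S\ln^3 S/n}$. For (i), note $r=\sum_{i,j}R_{ij}=b+2aS+\sum_{i,j\ge1}U_{ij}$; since $b=S$ already, and the remaining terms are nonnegative, $r\ge S$ deterministically — actually I would prove the sharper two-sided concentration $r=S(1+\sqrt{\alpha S})^2(1+o(1))$ since it is needed in Lemma~\ref{lemma.functionalseperation} anyway, via a Bernstein/Chernoff bound on $\sum_{i\le j}U_{ij}$ (a sum of $\binom{S}{2}\approx S^2/2$ i.i.d.\ bounded variables each in $[0,\alpha\eta^{-1}]$ with mean $\alpha$, so mean $\approx \alpha S^2/2$ and the fluctuation is lower-order because $\alpha\eta^{-1}=\alpha(d_1\ln S)^2\le (\ln S)^2/S\to0$).

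For (ii), the stationary distribution is $\pi_i=r_i/r$. The new state $0$ has $r_0=b+aS=S+\sqrt{\alpha S}\,S\approx S$, so $\pi_0\approx S/r=\Theta(1)$, comfortably above $q$. For $i\ge1$, $r_i=a+\sum_{j=1}^S U_{ij}$, a sum of (essentially) $S$ i.i.d.\ copies of $U$ plus $a=\sqrt{\alpha S}$; its mean is $\sqrt{\alpha S}+\alpha S=\sqrt{\alpha S}(1+\sqrt{\alpha S})$. With $\alpha=\frac{S}{n\ln S}$ we get $\sqrt{\alpha S}=\frac{S}{\sqrt{n\ln S}}$, so $\Expect[r_i]\asymp \frac{S}{\sqrt{n\ln S}}$ and hence $\Expect[\pi_i]\asymp\frac{1}{\sqrt{n\ln S}}$, which is of the right order $q$ up to the constant $\frac15$. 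The point is then a uniform lower tail bound: I would show $\prob{\min_{i\ge1} r_i < \tfrac12\sqrt{\alpha S}(1+\sqrt{\alpha S})}\to0$ by a Chernoff bound on each $r_i$ (using $a=\sqrt{\alpha S}\gg$ the per-term range $\alpha\eta^{-1}$, so the sum concentrates relative to its mean) followed by a union bound over the $S$ rows — the logarithmic slack in $q$ versus $\Expect[\pi_i]$ is exactly what absorbs the $\ln S$ cost of the union bound. Dividing by the concentration of $r$ from step (i) then gives $\pi_{\min}\ge q$ w.h.p.

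The main obstacle is (iii), the spectral-gap control, which the paper itself flags as "the core argument." The transition matrix $T(\bfR)$ is $\pi$-reversible, so it is similar to the symmetric matrix $M=D^{-1/2}\bfR D^{-1/2}$ with $D=\diag(r_0,\dots,r_S)$, and I would bound $\max_{i\ge2}|\lambda_i(T)|=\max_{i\ge2}|\lambda_i(M)|$. The dominant eigenvalue is $1$ with eigenvector $\sqrt{\pi}$; the task is to show the second-largest singular value of $M$ restricted to the orthogonal complement is at most $1-\gamma=C_2\sqrt{S\ln^3 S/n}$. Here the block structure helps: write $\bfR=\begin{bmatrix}b & a\mathbf 1^\top\\ a\mathbf 1 & \bfU\end{bmatrix}$; the rank-one "hub" part $\begin{bmatrix}b & a\mathbf 1^\top\\ a\mathbf 1 & aa^{-1}\cdot\cdots\end{bmatrix}$ and the deterministic part of $\bfU$, namely $\Expect\bfU=\alpha\mathbf 1\mathbf 1^\top$, together carry the top eigenvalue, while the interesting bulk is the centered random matrix $\bfU-\Expect\bfU$, a symmetric matrix with i.i.d.\ mean-zero entries bounded by $\alpha\eta^{-1}$ and variance $\le \alpha\cdot\alpha\eta^{-1}$. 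By a standard bound on the spectral norm of such a matrix (e.g.\ the bounded-entry case of the results cited via \cite{bordenave2010spectrum}, or a crude matrix-Bernstein / $\epsilon$-net argument), $\|\bfU-\Expect\bfU\|\lesssim \sqrt{S}\cdot\sqrt{\alpha^2\eta^{-1}}+\alpha\eta^{-1}\ln S$, and after the normalization by $D^{-1/2}$ — which rescales by roughly $(\sqrt{\alpha S})^{-1}$ on the bulk block — this becomes of order $\sqrt{S\alpha\eta^{-1}}/\sqrt{\alpha S}\cdot(\dots)$; plugging in $\alpha=\frac{S}{n\ln S}$ and $\eta^{-1}=(d_1\ln S)^2$ should land exactly on $\sqrt{S\ln^3 S/n}$, fixing $C_2$. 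The delicate points will be (1) disentangling the top eigenvector from the bulk cleanly — i.e.\ a perturbation/interlacing argument showing the hub block and $\Expect\bfU$ do not contaminate $\max_{i\ge2}|\lambda_i|$ beyond lower-order terms — and (2) getting the normalization by the random diagonal $D$ right, for which I would condition on the high-probability event from steps (i)–(ii) that all $r_i$ are within a constant factor of their means, so that $D^{-1/2}$ acts as an essentially deterministic rescaling. Assembling the three high-probability events by a union bound completes the proof.
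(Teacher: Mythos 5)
Your route is the paper's route: pass to the symmetric matrix $T' = D^{-1/2}\bfR D^{-1/2}$, split off the mean, bound the fluctuation by a Wigner spectral-norm estimate, and normalize by $\min_i r_i$. But the step you flag as ``delicate point (1)'' --- disentangling the top eigenvector by a perturbation or interlacing argument --- is not delicate at all, and you should close it explicitly. The parameters $a = \sqrt{\alpha S}$, $b = S$ were chosen precisely so that $b\alpha = a^2$, which makes $\Expect[\bfR]$ an \emph{exactly} rank-one positive-semidefinite matrix:
\[
\Expect[\bfR] \;=\; \begin{bmatrix} b & a\mathbf 1^\top \\ a\mathbf 1 & \alpha\,\mathbf 1\mathbf 1^\top \end{bmatrix} \;=\; \frac{1}{b}\begin{pmatrix} b \\ a\mathbf 1 \end{pmatrix}\begin{pmatrix} b \\ a\mathbf 1 \end{pmatrix}^{\!\top}.
\]
Conjugation by $D^{-1/2}$ preserves rank and positive semidefiniteness, so $L' \triangleq D^{-1/2}\Expect[\bfR]D^{-1/2}$ has $\lambda_2(L') = \cdots = \lambda_{S+1}(L') = 0$, and Weyl's inequality applied to $T' = L' + D^{-1/2} Z D^{-1/2}$ with $Z = \bfR - \Expect[\bfR]$ gives $|\lambda_i(T')| \le \|D^{-1/2} Z D^{-1/2}\|_2$ for every $i \ge 2$, with no residual terms and no interlacing needed. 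This also dispatches your ``delicate point (2)'': since $Z$ has a zero first row and column, $\|Z\|_2 = \|\bfU - \Expect[\bfU]\|_2$ \emph{exactly}, and $\|D^{-1/2}\|_2^2 = 1/\min_i r_i$, so the only thing you need from the random diagonal is the lower-tail control of $\min_i r_i$ you already planned to prove in step (ii). Your variance-based bound on $\|\bfU-\Expect[\bfU]\|_2$ would in fact give the slightly sharper order $\sqrt{S\ln S/n}$; the cruder bounded-entry Wigner estimate $\|\bfU-\Expect[\bfU]\|_2 \lesssim \sqrt{S}\,\alpha\eta^{-1}$ already yields the stated $1-\gamma^*(T) \lesssim \sqrt{S\ln^3 S/n}$, so either works. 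The remainder of your outline --- $r \ge S$ deterministically, Hoeffding concentration of $r_i$ and $r$, a union bound over the $S$ rows --- is sound and is what the paper does.
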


Fitting Lemma \ref{lemma.indistinguishableipmodel}, Lemma \ref{lemma.functionalseperation} and Lemma \ref{lemma.spectralgapcontrol} into the main Lemma \ref{lemma.tsybakov}, the following minimax lower bound holds for the independent Poisson model.
\begin{proof}[Proof of Theorem \ref{thm.lowerboundpoisson}]
	For the choice of $\zeta$ and $\Delta=C_1\frac{S^2}{n\ln S}$ in Lemma \ref{lemma.functionalseperation}, a combination of Lemma \ref{lemma.functionalseperation} and Lemma \ref{lemma.spectralgapcontrol} gives
	\begin{align}
	\prob{ \bfR \in \calR\left( S, \gamma^*,\tau,q\right), \bar H(T(\bfR)) \geq \zeta + C_1 \frac{S^2}{n\ln S} } = 1-o(1)
	\end{align}
	as $S\to\infty$, so that $\beta_1=o(1)$. Similarly, $\beta_2=o(1)$. By Lemma \ref{lemma.indistinguishableipmodel}, we have $\mathsf{TV}(F_1,F_2)=o(1)$. Now Theorem \ref{thm.lowerboundpoisson} follows from Lemma \ref{lemma.tsybakov} directly.
\end{proof}

\section{Experiments}\label{sec.experiments}

The entropy rate estimator we proposed in this paper that achieves the minimax rates can be viewed as a \emph{conditional} approach; in other words, we apply a Shannon entropy estimator for observations corresponding to each state, and then average the estimates using the empirical frequency of the states. More generally, for any estimator $\hat{H}$ of the Shannon entropy from i.i.d.~data, the conditional approach follows the idea of
\begin{align}
\bar{H}_{\text{Cond}} & = \sum_{i = 1}^S \hat{\pi}_i \hat{H}(\mathbf{X}^{(i)}),
\end{align}
where $\hat{\pi}$ is the empirical marginal distribution. We list several choices of $\hat{H}$:
\begin{enumerate}
	\item The empirical entropy estimator, which simply evaluates the Shannon entropy of the empirical distribution of the input sequence. It was shown not to achieve the minimax rates in Shannon entropy estimation~\cite{Jiao--Venkat--Weissman2014MLE}, and also not to achieve the optimal sample complexity in estimating the entropy rate in Theorem~\ref{thm.biasmlebound} and Corollary~\ref{cor.empirical}.
	\item The Jiao--Venkat--Han--Weissman (JVHW) estimator, which is based on best polynomial approximation and proved to be minimax rate-optimal in~\cite{Jiao--Venkat--Han--Weissman2015minimax}. The independent work~\cite{Wu--Yang2014minimax} is based on similar ideas.
	\item The Valiant--Valiant (VV) estimator, which is based on linear programming and proved to achieve the $\frac{S}{\ln S}$ phase transition for Shannon entropy in~\cite{Valiant--Valiant2013estimating}. % In our experiments, we downloaded and used the Matlab implementation of the VV estimator in \cite{Valiant--Valiant2013estimating} with default parameters.
	\item The profile maximum likelihood estimator (PML), which is proved to achieve the $\frac{S}{\ln S}$ phase transition in~\cite{acharya2016unified}. However, there does not exist an efficient algorithm to even approximately compute the PML with provably ganrantees. % In our experiments, we downloaded and used the Matlab implementation of the approximate PML estimator in~\cite{dimitri2017approximate}, which was shown to exhibit superior empirical performance in various experiments in\cite{dimitri2017approximate}.
\end{enumerate}

There is another estimator, \emph{i.e.}, the Lempel--Ziv (LZ) entropy rate estimator \cite{Ziv--Lempel1978compression}, which does not lie in the category of conditional approaches. The LZ estimator estimates the entropy through compression: it is well known that for a universal lossless compression scheme, its codelength per symbol would approach the Shannon entropy rate as length of the sample path grows to infinity. Specifically, for the following random matching length defined by
\begin{align}
L_{i}^{n}=1+\max \left\{ 1\le l\le n:\exists j\le i-1\ s.t.\ \left( X_i,\cdots ,X_{i+l-1} \right) =\left( X_j,\cdots ,X_{j+l-1} \right) \right\},
\end{align}
it is shown in~\cite{Wyner--Ziv1989} that for stationary and ergodic Markov chains,
\begin{align}
\lim_{n\to\infty} \frac{L_{i}^{n}}{\ln n}=\bar{H}\ a.s.
\end{align}

We use alphabet size $S=200$ and vary the sample size $n$ from $100$ to $300000$
%with uniform sampling in between\footnote{Concretely, we choose $n=100,300,1000,3000,10000,30000,100000,300000,1000000,3000000$.}
to demonstrate how the performance varies as the sample size increases. We compare the performance of the estimators by measuring the root mean square error (RMSE) in the following four different scenarios via $10$ Monte Carlo simulations:

\begin{enumerate}
	\item Uniform: The eigenvalue of the transition matrix is uniformly distributed except the largest one and the transition matrix is generated using the method in~\cite{jiang2009construction}. Here we use spectral gap $\gamma=0.1$.
	\item Zipf: The transition probability $T_{ij}\propto \frac{1}{i+j}$.
	\item Geometric: The transition probability $T_{ij}\propto 2^{-\left| i-j \right|}$.
	\item Memoryless: The transition matrix consists of identical rows.
\end{enumerate}

In all of the four cases, the JVHW estimator outperforms the empirical entropy rate. The results of VV~\cite{Valiant--Valiant2013estimating} and LZ~\cite{Wyner--Ziv1989} are not included due to their considerable longer running time. For example, when $S=200$ and $n=300000$ and we try to estimate the entropy rate from a single trajectory of the Markov chain, the empirical entropy and the JVHW estimator were evaluated in less than 30 seconds.  The evaluation of LZ estimator and the conditional VV method did not terminate after a month.\footnote{For LZ, we use the Matlab implementation in \url{https://www.mathworks.com/matlabcentral/fileexchange/51042-entropy-estimator-based-on-the-lempel-zivalgorithm?focused=3881655&tab=function}. For VV, we use the Matlab implementation in \url{http://theory.stanford.edu/~valiant/code.html}. We use 10 cores of a server with CPU frequency 1.9GHz.}
The main reason for the slowness of the VV methods in the context of Markov chains is that for each context it needs to call the original VV entropy estimator ($2000$ times in total in the above experiment), each of which needs to solve a linear programming.

\begin{figure}[htbp]
	\centering
	\subfigure[Uniform]{
		\includegraphics[width={0.45\columnwidth}]{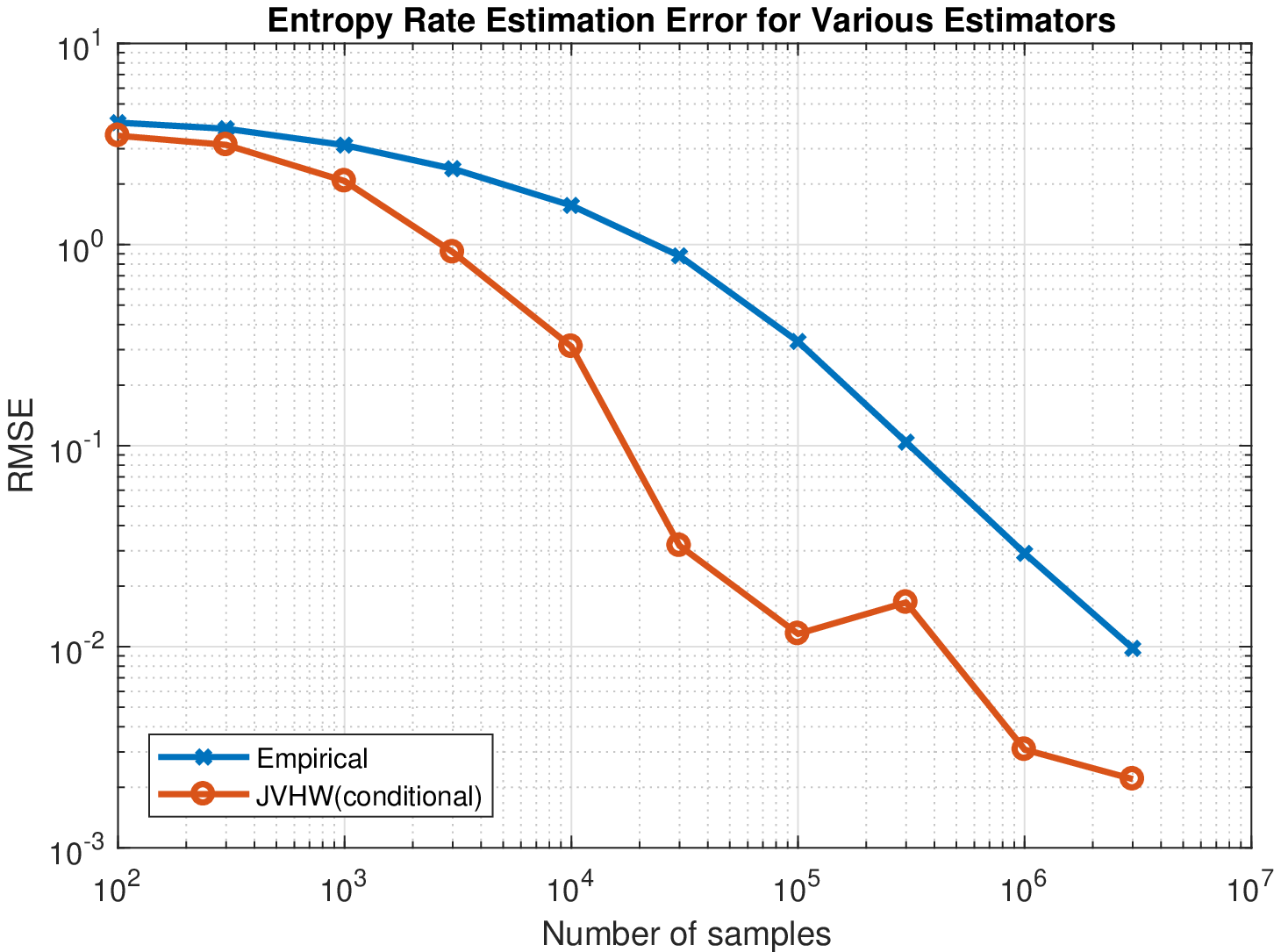}
	}
	\subfigure[Zipf]{
		\includegraphics[width={0.45\columnwidth}]{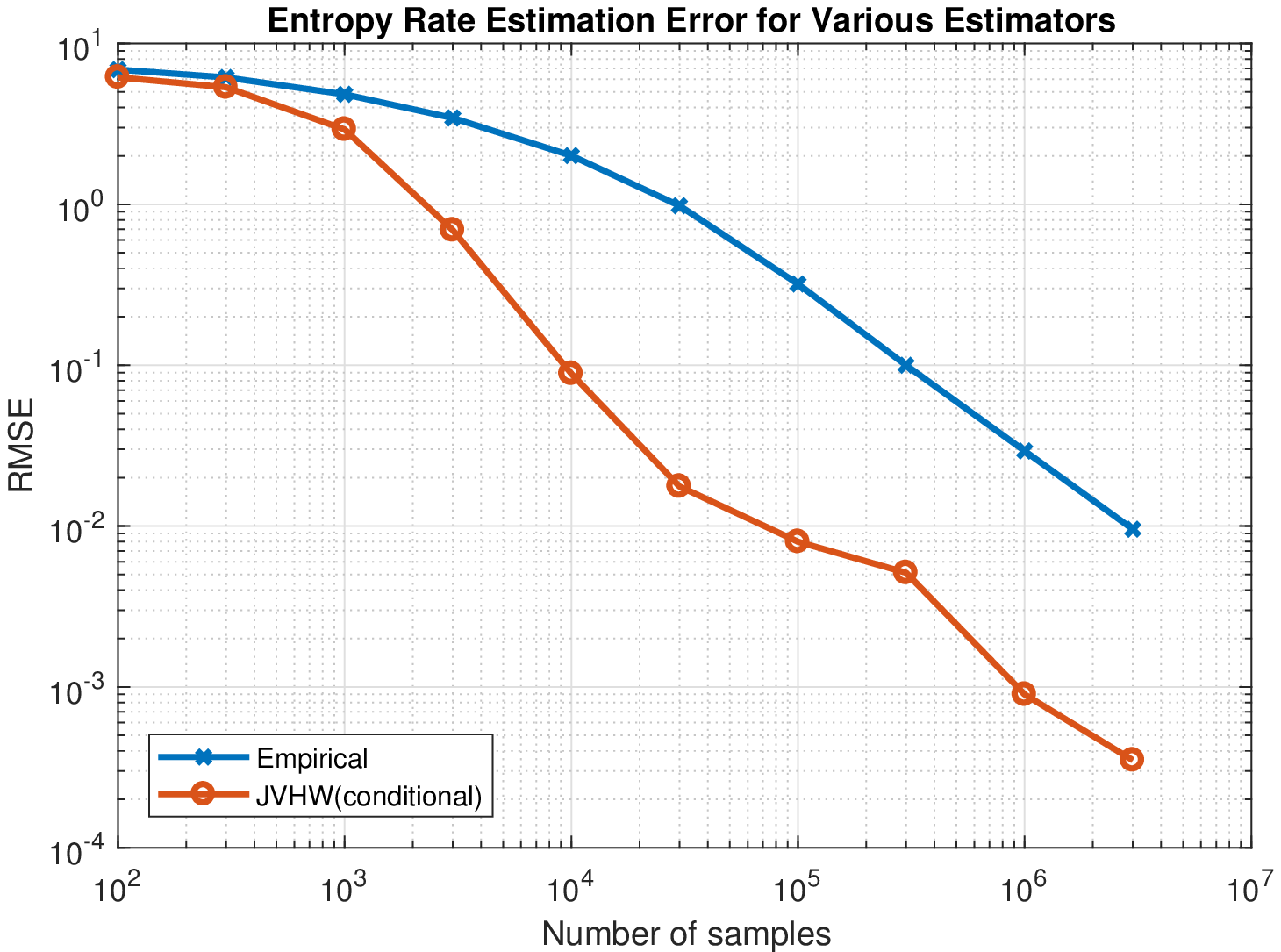}
	}
	
	\subfigure[Geometric]{
		\includegraphics[width={0.45\columnwidth}]{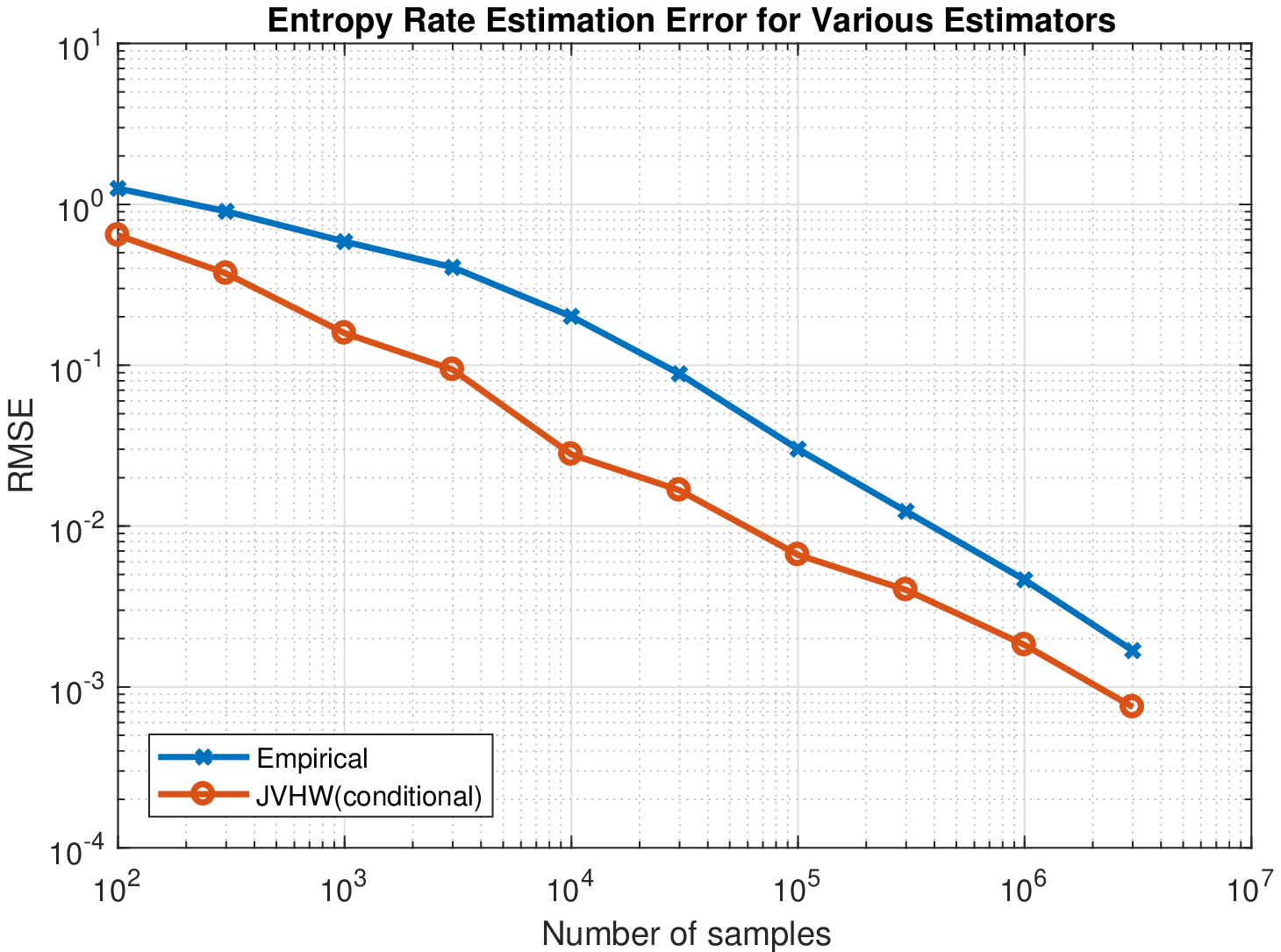}
	}
	\subfigure[Memoryless]{
		\includegraphics[width={0.45\columnwidth}]{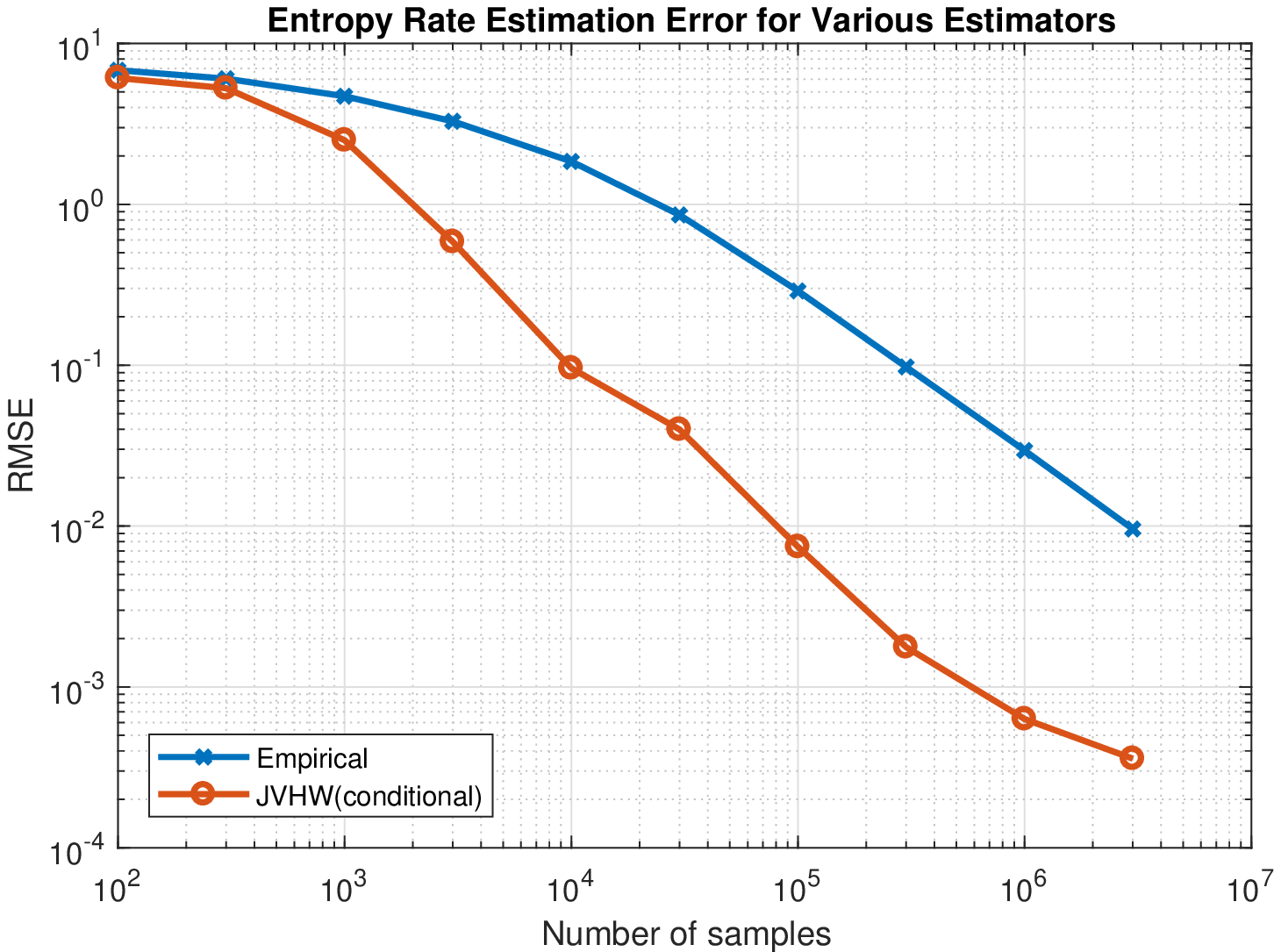}
	}
	\caption{Comparison of the performances of the empirical entropy rate and JVHW estimator in different parameter configurations}
\end{figure}

% Applications: Fundamental limits of language modeling

\section{More on fundamental limits of language modeling}\label{sec.languagemodels_appendix}

Since the number of words in the English language (\textit{i.e.}, our ``alphabet'' size) is huge, in view of the $\frac{S^2}{\log S}$ result we showed in theory, a natural question is whether a corpus as vast as the 1BW corpus is enough to allow reliable estimates of conditional entropy (as in Figure \ref{fig:language-entropy}). A quick answer to this question is that our theory has so far focused on the worst-case analysis and, as demonstrated below, natural language data are much nicer so that the sample complexity for accurate estimation is much lower than what the minimax theory predicts.
Specifically, we computed the conditional entropy estimates of Figure~\ref{fig:language-entropy} but this time restricting the sample to only a subset of the corpus. A plot of the resulting estimate as a function of sample size is shown in Figures~\ref{fig:language-sample-size-1} and \ref{fig:language-sample-size-23}. Because sentences in the corpus are in randomized order, the subset of the corpus taken is randomly chosen.

To interpret these results, first, note the number of distinct unigrams (\textit{i.e.}, words) in the 1BW corpus is about two million.  We recall that in the i.i.d.\ case, $n \gg S/\ln S$ samples are necessary \cite{Valiant--Valiant2011,Wu--Yang2014minimax,Jiao--Venkat--Han--Weissman2015minimax}, even in the worst case a dataset of 800 million words will be more than adequate to provide a reliable estimate of entropy for $S \approx 2$~million. Indeed, the plot for unigrams with the JVHW estimator in Figure~\ref{fig:language-sample-size-1} supports this.  In this case, the entropy estimates for all sample sizes greater than 338\,000 words is within 0.1 bits of the entropy estimate using the entire corpus. That is, it takes just 0.04\% of the corpus to reach an estimate within 0.1 bits of the true value.

We note also that the empirical entropy rate converges to the same value, 10.85, within two decimal places. This is also shown in Figure~\ref{fig:language-sample-size-1}. The dotted lines indicate the final entropy estimate (of each estimator) using the entire corpus of $7.7\times 10^8$ words.

Results for similar experiments with bigrams and trigrams are shown in Figure~\ref{fig:language-sample-size-23} and Table~\ref{tab:language-convergence}. Since the state space for bigrams and trigrams is much larger, convergence is naturally slower, but it nonetheless appears fast enough that our entropy estimate should be within on the order of 0.1 bits of the true value.

\begin{figure}
	\centering
	\includegraphics{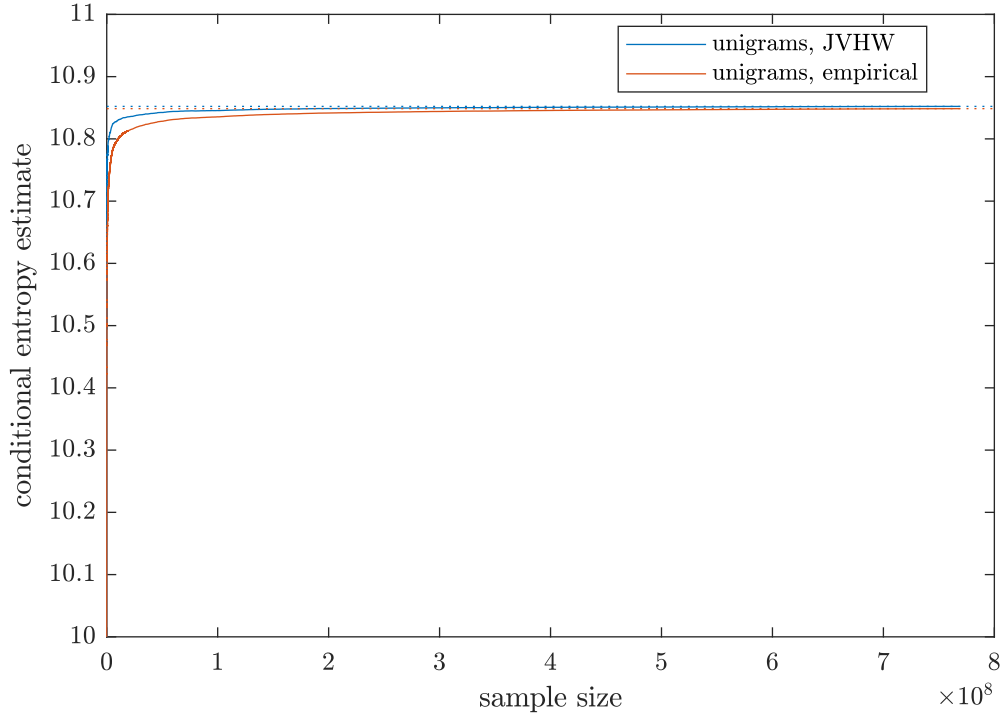}
	\caption{Estimates of conditional entropy versus sample size for 1BW unigrams; dotted lines are the estimate using the entire corpus (\textit{i.e.}, the final estimate). Note the zoomed-in axes.}
	\label{fig:language-sample-size-1}
\end{figure}

\begin{figure}
	\centering
	\includegraphics{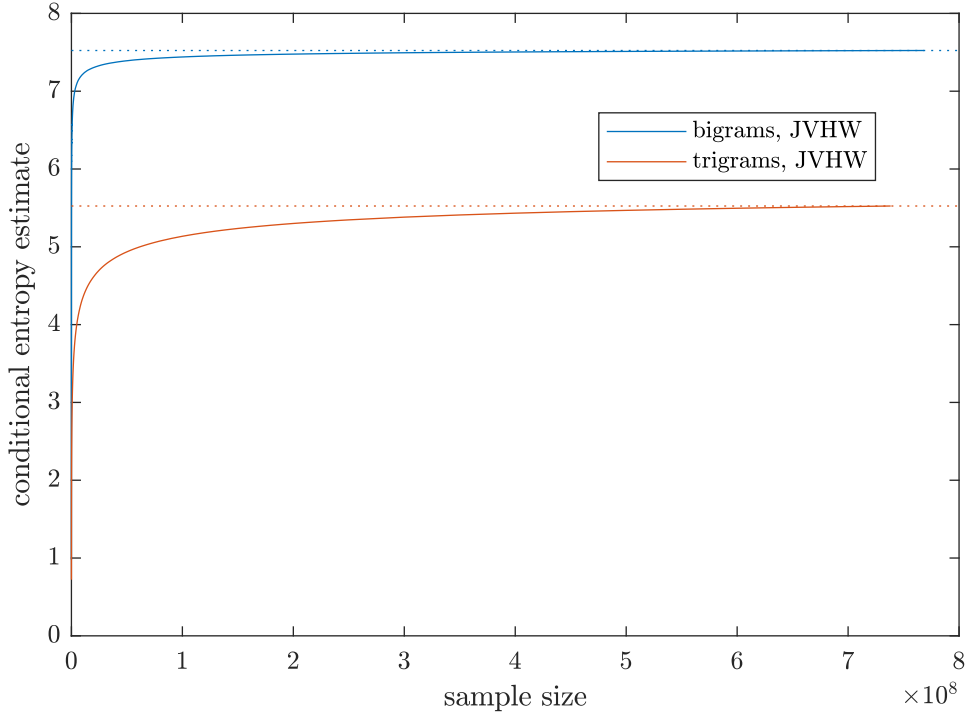}
	\caption{Estimates of conditional entropy versus sample size for 1BW bigrams and trigrams; dotted lines are the estimate using the entire corpus (\textit{i.e.}, the final estimate)}
	\label{fig:language-sample-size-23}
\end{figure}

 \begin{table}
     \centering
     \caption{Convergence points for 1BW conditional entropy estimates (within 0.1 bit of final estimate)}
     \label{tab:language-convergence}
     \begin{tabular}{r||r|r||r|r}
     & \multicolumn{2}{c||}{JVHW estimator} & \multicolumn{2}{c}{empirical entropy} \\
     % & \multicolumn{2}{c|}{within 0.1 bit} & \multicolumn{2}{c|}{within 0.1 bit} \\
     % $k$ & $n$ & $n/n_\mathrm{max}$ & $n$ & $n/n_\mathrm{max}$ \\
     $k$ & sample size & \% of corpus & sample size & \% of corpus \\
     \hline
     1 & 338k & 0.04\% & 2.6M & 0.34\% \\
     2 &  77M & 10.0\% & 230M & 29.9\% \\
     3 & 400M & 54.2\% & 550M & 74.5\% \\
     \end{tabular}
 \end{table}

\begin{table}
	\centering
	\caption{Points at which the 1BW entropy estimates are within 0.1 bit of the final estimate}
	\label{tab:language-convergence}
	\begin{tabular}{r||r|r}
		$k$ & sample size & \% of corpus \\
		\hline
		1 & 338k & 0.04\% \\
		2 &  77M & 10.0\% \\
		3 & 400M & 54.2\% \\
	\end{tabular}
\end{table}

With these observations, we believe that the estimates based on the 1BW corpus should have enough samples to produce reasonably reliable entropy estimates. As one further measure, to approximate the variance of these entropy estimates, we also ran bootstraps for each memory length $k = 1, \dots, 4$, with a bootstrap size of the same size as the original dataset (sampling with replacement). For the 1BW corpus, with 100 bootstraps, the range of estimates (highest less lowest) for each memory length never exceeded 0.001 bit, and the standard deviation of estimates was just 0.0002---that is, the error ranges implied by the bootstraps are too small to show legibly on Figure~\ref{fig:language-entropy}. For the PTB corpus, also with 100 bootstraps, the range never exceeded 0.03 bit. Further details of our bootstrap estimates are given in Table~\ref{tab:language-bootstrap}.

\begin{table}
	\centering
	\caption{Bootstrap estimates of error range}
	\label{tab:language-bootstrap}
	\begin{tabular}{r||r|r|r||r|r|r}
		& \multicolumn{3}{|c||}{PTB} & \multicolumn{3}{|c}{1BW} \\
		$k$ & estimate & st.\ dev. & range & estimate & st. dev. & range \\
		\hline
		1 & 10.62 & 0.00360 & 0.0172 & 10.85 & 0.000201 & 0.00091 \\
		2 &  6.68 & 0.00360 & 0.0183 &  7.52 & 0.000152 & 0.00081 \\
		3 &  3.44 & 0.00384 & 0.0159 &  5.52 & 0.000149 & 0.00078 \\
		4 &  1.50 & 0.00251 & 0.0121 &  3.46 & 0.000173 & 0.00081
	\end{tabular}
\end{table}

\section{Auxiliary lemmas} \label{sec.auxiliarylemmas}

\begin{lemma}\label{lemma.representationofempiricalentropy}
	For an arbitrary sequence $(x_0,x_1,\ldots,x_n) \in \mathcal{X}^{n+1}, \mathcal{X} = \{1,2,\ldots,S\}$,
	define the empirical distribution of the consecutive pairs as
	$\hat{P}_{X_1 X_2} = \frac{1}{n} \sum_{i=0}^{n-1} \delta_{(x_i,x_{i+1})}$.
	Let 	$\hat{P}_{X_1} = \frac{1}{n} \sum_{i=0}^{n-1} \delta_{x_i}$ be the marginal distribution of $\hat{P}_{X_1 X_2}$, and the empirical frequency of state $i$ as
	\begin{align}
	\hat{\pi}_i = \frac{1}{n} \sum_{j = 0}^{n-1} \mathbbm{1}(x_j = i).
	\end{align}
	Denote the empirical conditional distribution as $\hat P_{X_2|X_1} = \frac{\hat P_{X1X_2}}{\hat P_{X_1}}$, \emph{i.e.},
	%conditioned on state $i$ as $P_n^{(i)}$, where
	\begin{align}
	\hat P_{X_2|X_1=i}(j) & = \frac{\sum_{m = 1}^{n} \mathbbm{1}(x_m = j, x_{m-1} = i)}{  n \hat{\pi}_i},
	\end{align}
	whenever $\hat{\pi}_i$.
	%Let $\Hemp = \sum_{i = 1}^S \hat{\pi}_i H(P_n^{(i)})$ be given in \eqref{eqn.empiricalentropyratedef},
	Let $\Hemp = \sum_{i = 1}^S \hat{\pi}_i H(\hat P_{X_2|X_1=i})$
	and $H(\cdot)$ is the Shannon entropy. 
	Then, we have
	\begin{align}
	\Hemp
	& = H(\hat{P}_{X_1 X_2}) -H(\hat{P}_{X_1}), \label{eq:Hemp2}\\
	& = \min_{P \in \mathcal{M}_2(S)} \frac{1}{n} \ln \frac{1}{P_{X_1^n|X_0}(x_1^n|x_0)} \label{eq:Hemp1}
	\end{align}
	where in \prettyref{eq:Hemp1},
	for a given transition matrix $P$, $P_{X_1^n|X_0}(x_1^n|x_0) \triangleq \prod_{t=0}^{n-1} P(x_{t+1},x_t)$.
	%is the empirical distribution of the counts $\{(x_i,x_{i+1}): 0\leq i\leq n-1\}$, and $\hat{P}_{X_1}$ is the marginal distribution of $\hat{P}_{X_1 X_2}$.
\end{lemma}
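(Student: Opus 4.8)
The two identities are, respectively, the chain rule for Shannon entropy applied to empirical distributions, and the fact that the conditional empirical distribution is the maximum-likelihood estimator of a Markov transition matrix. The plan is to set up notation, dispatch \eqref{eq:Hemp2} by the chain rule, and then obtain \eqref{eq:Hemp1} by a row-wise minimization of the log-likelihood.

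First I would fix notation. For the observed string $(x_0,\ldots,x_n)$, let $N_{ij}=\sum_{t=0}^{n-1}\mathbbm{1}(x_t=i,x_{t+1}=j)$ be the number of $i\to j$ transitions, so that $\hat P_{X_1X_2}(i,j)=N_{ij}/n$ and $\hat\pi_i=\hat P_{X_1}(i)=\sum_j N_{ij}/n=\frac1n\sum_{t=0}^{n-1}\mathbbm{1}(x_t=i)$. In particular $\hat P_{X_1}$ is exactly the first marginal of $\hat P_{X_1X_2}$, and the $\hat P_{X_2|X_1=i}$ of the statement is its conditional law given the first coordinate (for $i$ with $\hat\pi_i>0$). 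I adopt the conventions $0\ln 0=0$ and $0\cdot(\text{undefined})=0$, so that states $i$ with $\hat\pi_i=0$ contribute nothing to either side: such $i$ never occur among $x_0,\ldots,x_{n-1}$, hence $N_{ij}=0$ for all $j$ as well.

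For \eqref{eq:Hemp2}, I would simply invoke the chain rule: with $(Z_1,Z_2)\sim\hat P_{X_1X_2}$, $H(\hat P_{X_1X_2})=H(Z_1)+H(Z_2\mid Z_1)=H(\hat P_{X_1})+\sum_{i:\hat\pi_i>0}\hat\pi_i\,H(\hat P_{X_2|X_1=i})=H(\hat P_{X_1})+\Hemp$, which rearranges to \eqref{eq:Hemp2} once one notes $\hat P_{X_2|X_1=i}(j)=\hat P_{X_1X_2}(i,j)/\hat\pi_i$. For \eqref{eq:Hemp1}, write any $P\in\mathcal{M}_2(S)$ as $P(j,i)=T_{ij}$ (probability of moving from $i$ to $j$); then
\begin{align}
\frac1n\ln\frac{1}{P_{X_1^n|X_0}(x_1^n|x_0)}
=-\frac1n\sum_{t=0}^{n-1}\ln T_{x_tx_{t+1}}
=-\sum_{i,j}\frac{N_{ij}}{n}\ln T_{ij}
=\sum_{i:\hat\pi_i>0}\hat\pi_i\Big(-\sum_j\hat P_{X_2|X_1=i}(j)\ln T_{ij}\Big).\nonumber
\end{align}
The minimization over $P$ now decouples across rows; rows $T_i$ with $\hat\pi_i=0$ are unconstrained and irrelevant. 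For each remaining $i$, writing $q=\hat P_{X_2|X_1=i}$, non-negativity of the Kullback--Leibler divergence (Gibbs' inequality) gives $-\sum_j q_j\ln T_{ij}=H(q)+D(q\,\|\,T_i)\ge H(q)$, with equality (and finiteness) iff $T_i=q$. Hence the row-wise minima are $H(\hat P_{X_2|X_1=i})$, attained simultaneously by taking $T_i=\hat P_{X_2|X_1=i}$ on the support of $\hat\pi$; summing against $\hat\pi_i$ yields exactly $\Hemp$, which also identifies the conditional empirical distribution as the MLE.

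The computation is essentially routine, so there is no genuine obstacle; the only points requiring a little care are the bookkeeping for zero-frequency states (handled by the conventions above) and the observation that a transition matrix assigning zero probability to a transition actually observed makes the objective $+\infty$, hence does not interfere with the minimum. Combining the three displays completes the proof.
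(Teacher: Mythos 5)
Your proposal is correct and follows essentially the same route as the paper: both establish \eqref{eq:Hemp2} by the chain rule for entropy, and both obtain \eqref{eq:Hemp1} by rewriting the normalized negative log-likelihood as $\Hemp + \sum_i \hat\pi_i D(\hat P_{X_2|X_1=i}\|P_{i\cdot})$ and invoking nonnegativity of the Kullback--Leibler divergence. Your treatment of zero-frequency states and the $+\infty$ convention is a modest extra precision over the paper's terse argument, but the underlying decomposition is identical.
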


The following lemma gives well-known tail bounds for Poisson and Binomial random variables.
\begin{lemma}\cite[Exercise 4.7]{mitzenmacher2005probability}\label{lemma.poissontail}
	If $X\sim \spo(\lambda)$ or $X\sim \mathsf{B}(n,\frac{\lambda}{n})$, then for any $\delta>0$, we have
	\begin{align}
	\bP(X \geq (1+\delta) \lambda) & \leq \left( \frac{e^\delta}{(1+\delta)^{1+\delta}} \right)^\lambda \le e^{-\delta^2\lambda/3}\vee e^{-\delta\lambda/3} \\
	\bP(X \leq (1-\delta)\lambda) & \leq  \left( \frac{e^{-\delta}}{(1-\delta)^{1-\delta}} \right)^\lambda \leq e^{-\delta^2 \lambda/2}.
	\end{align}
\end{lemma}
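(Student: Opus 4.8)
The plan is to derive both inequalities from the classical Chernoff (exponential Markov) bound, treating the Poisson and Binomial cases simultaneously by routing them through a single moment generating function estimate. For $X\sim\spo(\lambda)$ one has the exact identity $\bE[e^{tX}]=\exp(\lambda(e^t-1))$ for every $t\in\bR$; for $X\sim\mathsf{B}(n,\lambda/n)$ one has $\bE[e^{tX}]=\bigl(1+\tfrac{\lambda}{n}(e^t-1)\bigr)^n\le\exp(\lambda(e^t-1))$, using $1+u\le e^u$ with $u=\tfrac{\lambda}{n}(e^t-1)$ (note $1+u\ge 1-\tfrac{\lambda}{n}\ge 0$ since $\lambda\le n$). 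Hence in both cases $\bE[e^{tX}]\le\exp(\lambda(e^t-1))$ for all $t\in\bR$, and it suffices to establish both tail bounds from this one inequality.

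Next I would apply the exponential Markov inequality to this bound. For the upper tail and any $t>0$,
\begin{align}
\bP\bigl(X\ge(1+\delta)\lambda\bigr)\le e^{-t(1+\delta)\lambda}\,\bE[e^{tX}]\le\exp\bigl(\lambda(e^t-1-t(1+\delta))\bigr),
\end{align}
and the exponent on the right is minimized at $t=\ln(1+\delta)>0$, which gives $\bP(X\ge(1+\delta)\lambda)\le(e^{\delta}/(1+\delta)^{1+\delta})^\lambda$. For the lower tail (the case $\delta<1$ being the only nontrivial one), for $t>0$,
\begin{align}
\bP\bigl(X\le(1-\delta)\lambda\bigr)=\bP\bigl(e^{-tX}\ge e^{-t(1-\delta)\lambda}\bigr)\le e^{t(1-\delta)\lambda}\,\bE[e^{-tX}]\le\exp\bigl(\lambda(e^{-t}-1+t(1-\delta))\bigr),
\end{align}
and optimizing at $t=-\ln(1-\delta)>0$ yields $\bP(X\le(1-\delta)\lambda)\le(e^{-\delta}/(1-\delta)^{1-\delta})^\lambda$. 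These are the first inequalities in each displayed line of the statement.

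It then remains to simplify these closed forms, which is the only step requiring (mild) care. For the lower tail, set $\varphi(\delta):=\delta+(1-\delta)\ln(1-\delta)$, so the bound is $e^{-\lambda\varphi(\delta)}$; substituting $-\ln(1-\delta)=\sum_{k\ge 1}\delta^k/k$ and collecting terms gives the identity $\varphi(\delta)=\sum_{k\ge 2}\frac{\delta^k}{k(k-1)}$, a power series with nonnegative coefficients and leading term $\delta^2/2$, so $\varphi(\delta)\ge\delta^2/2$ and $e^{-\lambda\varphi(\delta)}\le e^{-\delta^2\lambda/2}$. For the upper tail, set $\psi(\delta):=(1+\delta)\ln(1+\delta)-\delta$, so the bound is $e^{-\lambda\psi(\delta)}$, and note $\psi(0)=\psi'(0)=0$ with $\psi''(\delta)=(1+\delta)^{-1}$; a short monotonicity argument (checking that $\psi(\delta)-\delta^2/3$ has vanishing value and derivative at $0$ and nonnegative derivative on $(0,1]$, and that $\psi(\delta)-\delta/3$ is positive at $\delta=1$ and increasing for $\delta\ge 1$ since $\psi'(\delta)=\ln(1+\delta)\ge\ln 2>1/3$) shows $\psi(\delta)\ge\delta^2/3$ on $(0,1]$ and $\psi(\delta)\ge\delta/3$ on $[1,\infty)$. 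Consequently $e^{-\lambda\psi(\delta)}\le e^{-\delta^2\lambda/3}$ when $\delta\le 1$ and $\le e^{-\delta\lambda/3}$ when $\delta\ge 1$, hence $e^{-\lambda\psi(\delta)}\le e^{-\delta^2\lambda/3}\vee e^{-\delta\lambda/3}$ for all $\delta>0$. This is a textbook estimate with no real obstacle; the only subtlety is that $\ln(1+\delta)$ has an alternating expansion, so the upper-tail simplification is handled by the calculus argument above rather than by term-by-term comparison as in the lower-tail case.
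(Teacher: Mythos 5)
Your proof is correct. The paper does not prove this lemma---it is quoted with a citation to Exercise 4.7 of Mitzenmacher--Upfal---so there is no in-paper argument to compare against; your derivation is the standard Chernoff-bound calculation (unified MGF bound $\bE[e^{tX}]\le\exp(\lambda(e^t-1))$ for both Poisson and Binomial, optimize the exponent, then the series/calculus estimates $\delta+(1-\delta)\ln(1-\delta)\ge\delta^2/2$ and $(1+\delta)\ln(1+\delta)-\delta\ge(\delta^2/3)\wedge(\delta/3)$), which is precisely what the cited exercise intends. One tiny remark: your claim that $g(\delta)=\psi(\delta)-\delta^2/3$ has nonnegative derivative on $(0,1]$ is correct but slightly subtler than stated, since $g''(\delta)=\tfrac{1}{1+\delta}-\tfrac23$ changes sign at $\delta=\tfrac12$; one needs to observe that $g'(0)=0$, $g'$ increases to $\delta=\tfrac12$, and then decreases only to $g'(1)=\ln 2-\tfrac23>0$, so $g'\ge0$ throughout, which does hold.
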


The following lemma is the Hoeffding inequality.
\begin{lemma}\label{lem_hoeffding}
	\cite{Hoeffding1963probability}
	Let $X_1,X_2,\ldots,X_n$ be independent random variables such that $X_i$ takes its value in $[a_i,b_i]$ almost surely for all $i\leq n$. Let $S_n=\sum_{i=1}^n X_i$, we have for any $t>0$,
	\begin{align}
	P\left\{|S_n-\bE[S_n]|\ge t\right\} \le 2\exp\left(-\frac{2t^2}{\sum_{i = 1}^n (b_i-a_i)^2}\right).
	\end{align}
\end{lemma}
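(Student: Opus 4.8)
The plan is the classical Chernoff--Cram\'er exponential moment argument. First I would fix $s>0$ and apply Markov's inequality to the exponentiated deviation:
\begin{align}
P\left(S_n-\bE[S_n]\ge t\right)=P\left(e^{s(S_n-\bE[S_n])}\ge e^{st}\right)\le e^{-st}\,\bE\left[e^{s(S_n-\bE[S_n])}\right].
\end{align}
By independence of the $X_i$ the moment generating function factorizes, $\bE[e^{s(S_n-\bE[S_n])}]=\prod_{i=1}^n\bE[e^{s(X_i-\bE[X_i])}]$, so everything reduces to a uniform bound on the MGF of a single centered random variable taking values in an interval of length $b_i-a_i$.

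The heart of the proof is Hoeffding's lemma: if $Y\in[a,b]$ almost surely with $\bE[Y]=0$, then $\bE[e^{sY}]\le\exp(s^2(b-a)^2/8)$ for every $s\in\bR$. I would establish this by setting $\psi(s)=\ln\bE[e^{sY}]$ and observing that $\psi(0)=0$ and $\psi'(0)=\bE[Y]=0$, while $\psi''(s)$ is the variance of $Y$ under the exponentially tilted measure $d\bP_s\propto e^{sY}\,d\bP$; since that measure is still supported on $[a,b]$, its variance is at most $(b-a)^2/4$. A second-order Taylor expansion with Lagrange remainder then gives $\psi(s)\le s^2(b-a)^2/8$. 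This MGF bound is the only genuinely nontrivial ingredient; the rest is bookkeeping.

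Combining the two steps yields
\begin{align}
P\left(S_n-\bE[S_n]\ge t\right)\le\exp\left(-st+\frac{s^2}{8}\sum_{i=1}^n(b_i-a_i)^2\right),
\end{align}
and optimizing the exponent over $s>0$ (the minimizer is $s^\star=4t/\sum_{i}(b_i-a_i)^2$) produces the one-sided tail bound $\exp(-2t^2/\sum_{i}(b_i-a_i)^2)$. Finally I would apply the identical argument to $-S_n$, whose summands $-X_i$ lie in $[-b_i,-a_i]$ and hence have the same interval lengths, and take a union bound over the two one-sided events to get the stated two-sided inequality with the leading factor $2$. The main obstacle, such as it is, is the variance-of-the-tilted-measure bound inside Hoeffding's lemma; the Chernoff reduction and the final optimization are routine.
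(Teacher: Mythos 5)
The paper does not prove this lemma: it is cited verbatim from Hoeffding (1963) as a standard concentration inequality, so there is no ``paper's own proof'' to compare against. Your argument is correct and is the canonical Chernoff--Hoeffding proof: Markov's inequality applied to $e^{s(S_n-\mathbb{E}[S_n])}$, the product structure of the MGF from independence, Hoeffding's lemma $\mathbb{E}[e^{sY}]\le e^{s^2(b-a)^2/8}$ for a centered bounded variable, optimization over $s$ giving $s^\star=4t/\sum_i(b_i-a_i)^2$, and a union bound for the two-sided version. One small presentational note: your derivation of Hoeffding's lemma via the tilted measure $d\bP_s\propto e^{sY}d\bP$ and the bound $\mathrm{Var}_s(Y)\le (b-a)^2/4$ (Popoviciu's inequality on the tilted law, which still lives on $[a,b]$) is a clean modern route; Hoeffding's original 1963 argument instead bounded $e^{sy}$ by the chord through the endpoints using convexity and then controlled the resulting log by elementary calculus. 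Both yield the same $s^2(b-a)^2/8$ bound, and the rest of the derivation is identical, so this is essentially the same proof as the cited source with a slightly different internal lemma.
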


\section{Proofs of main lemmas}\label{sec.mainlemmasproof}

\subsection{Proof of Lemma~\ref{lemma.goodeventshighprobability}}

We being with a lemma on the concentration of the empirical distribution $\hat{\pi}$ for reversible Markov chains.
\begin{lemma}\label{lemma.confidencepi}
Consider a reversible stationary Markov chain with spectral gap $\gamma$. Then, for every $i, 1\leq i\leq S$, every constant $c_3>0$, the event
\begin{align}
\label{eq:confidencepi}
\mathcal{E}_i & =  \left \{ |\hat{\pi}_i - \pi_i| \geq c_3 \max \left \{ \frac{\ln n}{n\gamma}, \sqrt{\frac{\pi_i \ln n}{n \gamma}} \right \} \right \}
\end{align}
happens with probability at most $\frac{2}{n^\beta}$, where $\beta = \frac{c_3^2}{4 + 10 c_3}$.
\end{lemma}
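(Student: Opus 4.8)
The plan is to write $\hat\pi_i - \pi_i = \frac1n \sum_{t=0}^{n-1} \left( \Indc\{X_t = i\} - \pi_i \right)$ and to apply a Bernstein-type concentration inequality for functions of a reversible Markov chain to the centered indicator function $f(x) = \Indc\{x=i\} - \pi_i$. The relevant ingredient is a standard concentration bound (e.g., of Lezaud or Paulin type) which states that for a stationary reversible chain with spectral gap $\gamma$ and a function $f$ with $\Expect_\pi[f]=0$, $\norm{f}_\infty \le b$ and $\Var_\pi(f) \le \sigma^2$, one has for all $u>0$
\begin{align}
\Prob\left( \left| \sum_{t=0}^{n-1} f(X_t) \right| \ge u \right) \le 2 \exp\left( - \frac{u^2 \gamma}{4 n \sigma^2 + 10 b u} \right).
\end{align}
First I would record that for $f(x) = \Indc\{x=i\} - \pi_i$ we have $b = \norm{f}_\infty \le 1$ and $\sigma^2 = \Var_\pi(f) = \pi_i(1-\pi_i) \le \pi_i$, so the exponent is at least $\frac{u^2\gamma}{4n\pi_i + 10 u}$.

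Next I would set $u = c_3 \max\left\{ \frac{\ln n}{\gamma}, \sqrt{\frac{n\pi_i \ln n}{\gamma}} \right\}$, so that $n|\hat\pi_i - \pi_i| \ge u$ is exactly the event $\calE_i$ in~\eqref{eq:confidencepi}. The remaining work is purely arithmetic: I would verify that with this choice of $u$, both $4n\pi_i \le 4 u^2 \gamma / (c_3^2 \ln n)$ and $10 u \le 10 u^2 \gamma/(c_3 \ln n)$, using the two branches of the maximum — the first term $\frac{\ln n}{\gamma}$ dominates the denominator term $10u$ (giving $10u \le 10 u^2\gamma/(c_3\ln n)$ directly from $u \ge c_3 \ln n/\gamma$), and the second term $\sqrt{n\pi_i\ln n/\gamma}$ controls the variance term (giving $4n\pi_i \le 4u^2\gamma/(c_3^2\ln n)$ from $u^2 \ge c_3^2 n\pi_i\ln n/\gamma$). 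Combining, the denominator $4n\sigma^2 + 10 bu \le (4 + 10 c_3) u^2 \gamma / (c_3^2 \ln n)$, so the exponent is at most $-\frac{c_3^2 \ln n}{4 + 10 c_3} = -\beta \ln n$, which yields the claimed bound $2/n^\beta$.

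The only genuine subtlety — and the step I would be most careful about — is making sure the stated Markov concentration inequality applies in the form used: it requires stationarity (which we assume), reversibility (assumed, ensuring a real spectral gap), and that the bound be in terms of the spectral gap $\gamma = 1-\lambda_2$ rather than the absolute spectral gap; since $f$ is an indicator minus its mean and we only need an upper tail on $|\cdot|$, the one-sided bounds controlled by $\lambda_2$ suffice, and a union over the two tails costs the factor $2$. I would cite the appropriate reference (e.g.\ Paulin's concentration inequalities for Markov chains, or León--Perron / Lezaud) for the precise constants $4$ and $10$ in the denominator, which are exactly what produces $\beta = c_3^2/(4+10c_3)$; everything else is bookkeeping over the two regimes of the maximum.
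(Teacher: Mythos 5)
Your proposal is correct and follows essentially the same route as the paper: both invoke Paulin's Bernstein inequality for stationary reversible chains with the same denominator $4n\pi_i(1-\pi_i)+10t$ and both set $t = c_3\max\{\ln n/\gamma,\ \sqrt{n\pi_i\ln n/\gamma}\}$, yielding the exponent $\beta = c_3^2/(4+10c_3)$. The only (cosmetic) difference is that the paper splits into the two regimes $\pi_i \lessgtr \frac{\ln n}{n\gamma}$ and bounds the denominator differently in each, whereas you observe that since $u$ is the maximum, both inequalities $u\ge c_3\ln n/\gamma$ and $u^2 \ge c_3^2 n\pi_i\ln n/\gamma$ hold simultaneously, which lets you bound the variance and linear terms at once without case analysis.
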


%\begin{lemma} \label{lemma.piconcentrationnonreversible}
%Consider a stationary non-reversible Markov chain with pseudo spectral gap $\gamma_{\mathrm{ps}}$. Then,
%\begin{align}
%P_\pi\left( |\hat{\pi}_i - \pi_i| \geq \frac{t}{n} \right) & \leq 2 \exp\left( - \frac{t^2 \gamma_{\mathrm{ps}}}{8(n + 1/\gamma_{\mathrm{ps}})\pi_i(1-\pi_i) + 20t} \right).
%\end{align}
%\end{lemma}
\begin{proof}[Proof of Lemma~\ref{lemma.confidencepi}]
Recall the following Bernstein inequality for reversible chains \cite[Theorem 3.3]{Paulin15}:
%\begin{lemma} \label{lemma.piconcentrationreversible}
For any stationary reversible Markov chain with spectral gap $\gamma$,
\begin{align}
P\sth{ |\hat{\pi}_i - \pi_i| \geq \frac{t}{n}} & \leq  2 \exp\left(  - \frac{t^2 \gamma}{4n \pi_i (1-\pi_i) + 10t} \right).
\label{eq:paulin}
\end{align}
%\end{lemma}
We have $\frac{\ln n}{n\gamma} \geq \sqrt{\frac{\pi_i \ln n}{n\gamma}}$ if and only if $\pi_i \leq \frac{\ln n}{n\gamma}$. We split the proof of \prettyref{eq:confidencepi} into two parts.
\begin{enumerate}
\item $\pi_i \leq \frac{\ln n}{n\gamma}$: Invoking \prettyref{eq:paulin} and setting $t = c_3 \frac{\ln n}{\gamma}$, we have
\begin{align*}
P_\pi \left( |\hat{\pi}_i - \pi_i| \geq c_3 \frac{\ln n}{n\gamma} \right ) & \leq 2 \exp \left( - \frac{\gamma c_3^2 \frac{\ln^2 n}{\gamma^2}}{4 n  \frac{\ln n}{n\gamma} + 10 c_3 \frac{\ln n}{\gamma}} \right) \\
& \leq 2 \exp \left( - \frac{c_3^2}{4 + 10c_3} \ln n \right) \\
& = \frac{2}{n^\beta}.
\end{align*}
\item $\pi_i \geq \frac{\ln n}{n\gamma}$: Invoking \prettyref{eq:paulin} and setting $t = c_3 \sqrt{ \frac{n \pi_i \ln n}{\gamma} }$, we have
\begin{align*}
P_\pi \left( |\hat{\pi}_i - \pi_i| \geq c_3 \sqrt{\frac{\pi_i \ln n}{n\gamma}} \right ) & \leq  2 \exp \left( - \frac{\gamma c_3^2 \frac{n \pi_i \ln n}{\gamma}}{4n \pi_i + 10 c_3 \sqrt{\frac{n\pi_i \ln n}{\gamma}}} \right) \\
& \leq 2 \exp \left( - \frac{c_3^2 n \pi_i \ln n}{4n\pi_i + 10c_3 n\pi_i} \right) \\
& = \frac{2}{n^\beta}.
\end{align*}
\end{enumerate}
\end{proof}

Now we are ready to prove~Lemma~\ref{lemma.goodeventshighprobability}.
We only consider $\calG_{\opt}$ and the upper bound on $P(\calG_{\emp})$ follows from the same steps.
%Since
%\begin{align*}
%1 - P( \cap_{i \in \mathcal{I}} E_i)  = P\left( \cup_{i \in \mathcal{I} } E_i^c \right) \leq \sum_{i \in \mathcal{I}} P(E_i^c),
%\end{align*}
By the union bound,
it suffices to upper bound the probability of the complement of each event in the definition of the ``good'' event $\calG_{\opt}$ (cf.~Definition \ref{def.goodevents}).

For the first part of the definition, the probability of ``bad'' events $\calE_i^c$ in \prettyref{eq:Ei} are upper bounded by
\begin{align}
\sum_{i\in [S]} P(\calE_i^c) \leq S \cdot \frac{2}{n^\beta},
\end{align}
where $\beta = \frac{c_3^2}{4 + 10c_3}$ as in Lemma~\ref{lemma.confidencepi}. Since we have assumed that $c_3 \geq 20$, we have $\beta \geq 1$.

For the second part of the definition,
applying Lemma \ref{lemma.concentrationentropy}, the overall probability of ``bad'' events $\calH_i^c$ in \eqref{eqn.goodeventiidentropy} are upper bounded by
\begin{align*}
& \sum_{i\in [S]} P(\calH_i^c) \mathbbm{1}\left( \pi_i \geq n^{c_4-1} \vee 100c_3^2 \frac{\ln n}{n\gamma} \right) \\
& \leq \sum_{i = 1}^S 2 \cdot c_3 \sqrt{\frac{n\pi_i \ln n}{\gamma}} \frac{2}{ \left (n\pi_i - c_3 \sqrt{\frac{n\pi_i \ln n}{\gamma}}\right )^\beta  } \mathbbm{1}\left( \pi_i \geq n^{c_4-1} \vee 100c_3^2 \frac{\ln n}{n\gamma} \right) \\
& \leq \sum_{i = 1}^S 2 \cdot c_3 \sqrt{\frac{n\pi_i \ln n}{\gamma}} \frac{2}{(9 n \pi_i /10)^\beta} \mathbbm{1}\left( \pi_i \geq n^{c_4-1} \vee 100c_3^2 \frac{\ln n}{n\gamma} \right) \\
& \leq \sum_{i = 1}^S 2c_3 n\pi_i\frac{2}{(9 n \pi_i /10)^\beta} \mathbbm{1}\left( \pi_i \geq n^{c_4-1} \vee 100c_3^2 \frac{\ln n}{n\gamma} \right) \\
& = \sum_{i = 1}^S \frac{4c_3 n\pi_i}{(9 n\pi_i /10)^\beta}  \mathbbm{1}\left( \pi_i \geq n^{c_4-1} \vee 100c_3^2 \frac{\ln n}{n\gamma} \right) \\
& \leq D  \frac{S}{n^{c_4 (\beta -1)}},
\end{align*}
where $D \triangleq \frac{4c_3 (10)^\beta}{9^\beta}$ and the second step follows from the fact that $\pi_i \mapsto n\pi_i - c_3 \sqrt{\frac{n\pi_i \ln n}{\gamma}}$ is increasing when $\pi_i \geq 100c_3^2 \frac{\ln n}{n\gamma}$.

\subsection{Proof of Lemma~\ref{lemma.reductiontoim}}

We simulate a Markov chain sample path with transition matrix $T_{ij}$ and stationary distribution $\pi_i$ from the independent multinomial model as described in Definition~\ref{def.immodel}, and define the estimator $\hat{H}_2$ as follows: output zero if the event $\cap_{1\leq i\leq S} \mathcal{E}_i$ does not happen (where $\mathcal{E}_i$ are events defined in Definition~\ref{def.goodevents}); otherwise, we set
\begin{align*}
\hat{H}_2(X_0,(W_{ij})_{i\in [S], j\le m_i}) = \hat{H}_1(X_0,(W_{ij})_{i\in [S], j\le n_i}).
\end{align*}
Note that this is a valid definition since $\cap_{1\leq i\leq S} \mathcal{E}_i$ implies $n_i\le m_i$ for any $i\in [S]$. As a result,
\begin{align}
P_{\mathsf{IM}}\left( |\hat{H}_2 - \bar{H}|\geq \epsilon \right) & \leq P_{\mathsf{IM}}\left( \left( \cap_{1\leq i\leq S} \mathcal{E}_i \right)^c \right) + P_{\mathsf{IM}} \left( \cap_{1\leq i\leq S} \mathcal{E}_i \right) P_{\mathsf{IM}} \left( |\hat{H}_2 - \bar{H}| \geq \epsilon | \cap_{1\leq i\leq S} \mathcal{E}_i \right).
\end{align}

It follows from Lemma~\ref{lemma.goodeventshighprobability} that
\begin{align}
P_{\mathsf{IM}}\left( \left( \cap_{1\leq i\leq S} \mathcal{E}_i \right)^c \right) & \leq \frac{2S}{n^\beta},
\end{align}
where $\beta = \frac{c_3^2}{4 + 10c_3}\geq 1$. Now, it suffices to upper bound $P_{\mathsf{IM}} \left( |\hat{H}_2 - \bar{H}| \geq \epsilon | \cap_{1\leq i\leq S} \mathcal{E}_i \right)$. The crucial observation is that the joint distribution of $(X_0, (n_i)_{i\in [S]}, (W_{ij})_{i\in [S],j\le n_i})$ are identical in two models, and thus
\begin{align}
P_{\mathsf{IM}} \left( |\hat{H}_2 - \bar{H}| \geq \epsilon | \cap_{1\leq i\leq S} \mathcal{E}_i \right) & = P_{\mathsf{MC}} \left( |\hat{H}_1 - \bar{H}| \geq \epsilon | \cap_{1\leq i\leq S} \mathcal{E}_i \right) \\
P_{\mathsf{IM}} \left( \cap_{1\leq i\leq S} \mathcal{E}_i \right) & = P_{\mathsf{MC}} \left( \cap_{1\leq i\leq S} \mathcal{E}_i \right).
\end{align}

By definition, the estimator $\hat{H}_1$ satisfies
\begin{align}
P_{\mathsf{MC}}\left( \cap_{1\leq i\leq S} \mathcal{E}_i,  |\hat{H}_1 - \bar{H}| \geq \epsilon  \right) & \leq \delta.
\end{align}
A combination of the previous inequalities gives
\begin{align}
P_{\mathsf{IM}}\left( |\hat{H}_2 - \bar{H}|\geq \epsilon \right)  & \leq \frac{2S}{n^\beta} + \delta.
\end{align}
as desired.

\subsection{Proof of Lemma~\ref{lemma.poissontomultinomial}}

%Suppose there exists an estimator $\hat{H}$, which is a function of the empirical data of parameter $n$ in the independent multinomial model and takes values in $[0, \ln S]$, such that
%\begin{align}
%\sup_{T \in \mathcal{M}_{2,\text{rev}}(S, \gamma^*)} P_{\mathsf{IM}} \left( |\hat{H} -
%\bar{H}| \geq \epsilon \right) & \leq \delta.
%\end{align}

%Now our goal is to construct an estimator with decent performance guarantee for the independent Poisson model using this estimator $\hat{H}$.
%The main difficulty is that the ``entropy rate'' $\bar H = \bar H(R)$ for the independent Poisson model is defined in \eqref{eqn.entropyratepoisson}, which is not the same as as the entropy rate of the normalized transition matrix $\bar H(T(R)) \triangleq \bar{H}'$. Nevertheless, the constraints on the matrix $R$ in \eqref{eqn.ipuncertaintysetdef} ensure that these quantities are sufficiently close.

We can simulate the independent multinomial model from the independent Poisson model by conditioning on the row sum. For each $i$, conditioned on $M_i\triangleq \sum_{j = 1}^S C_{ij} = m_i$, the random vector $C_i=(C_{i1},C_{i2},\ldots, C_{iS})$ follows the multinomial distribution $\mathsf{multi}\left( m_i, T_i \right)$, where $T=T(R)$ is the transition matrix obtained from normalizing $R$. In particular,
$T_i = \frac{1}{\sum_{j = 1}^S R_{ij}} ( R_{i1},R_{i2},\ldots,R_{iS})$.
Furthermore, $C_1,\ldots,C_S$ are conditionally independent. Thus, to apply the estimator $\hat{H}_1$ designed for the independent multinomial model with parameter $n$ that fulfills the guarantee \prettyref{eq:imguarantee1}, we need to guarantee that
\begin{align}\label{eqn.immodelrowcountrequirement}
M_i & > n\pi_i + c_3 \max \left \{ \frac{\ln n}{\gamma^*}, \sqrt{\frac{n\pi_i \ln n}{\gamma^*}} \right\},
\end{align}
for all $i$ with probability at least $1 - S n^{-c_3/2}$. Here $c_3\geq 20$ is the constant in Definition~\ref{def.immodel}, and
\begin{align}
\pi_i & = \frac{\sum_{j =1}^S R_{ij} }{r},
\end{align}
where $r = \sum_{1\leq i,j\leq S} R_{ij}$.
Note that $M_i \sim \Poi(\lambda_i)$, where
$\lambda_i \triangleq \frac{4 n}{\tau} \sum_j R_{ij} = \frac{4 n r }{\tau} \pi_i \geq 4 n \pi_i$, due to the assumption that $r \geq \tau$.
By the assumption of $\pi_i \geq \pi_{\min} \geq \frac{c_3 \ln n}{n \gamma}$, we have
\begin{equation}
n \pi_i \geq c_3 \max \left \{ \frac{\ln n}{\gamma^*}, \sqrt{\frac{n\pi_i \ln n}{\gamma^*}} \right\}.
\label{eq:verify}
\end{equation}
Then
\begin{align}
\prob{M_i < n\pi_i + c_3 \max \left \{ \frac{\ln n}{\gamma^*}, \sqrt{\frac{n\pi_i \ln n}{\gamma^*}} \right\}}
\leq & ~  \prob{\Poi(4 n \pi_i) < 2 n \pi_i} 	\nonumber \\
\stepa{\leq} & ~ \exp(-n \pi_i/2)	\\
\stepb{\leq} & ~ n^{-c_3/2},
\end{align}
where
(a) follows from Lemma~\ref{lemma.poissontail}; (b) follows from $\pi_i \geq \pi_{\min} \geq \frac{ c_3 \ln n}{n \gamma} \geq \frac{ c_3 \ln n}{n}$. This completes the proof.

\subsection{Proof of Lemma~\ref{lemma.indistinguishableipmodel}}

The dependence diagram for all random variables is as follows:
\[
%\begin{tikzcd}
%v \arrow{rd} \arrow{r} 	& \hV \arrow{r} \arrow{d} & \tV								& L \arrow{r} & \tR \\
%												& \phi \arrow{r}					&  A_m \arrow{u} & 								& %U \arrow{u}
%\end{tikzcd}
\begin{tikzcd}
\bfU \arrow{r} 	& \bfR \arrow{r} \arrow{d} & \pi(\bfR) \arrow{r} 		& X_0\\
												& \bfC &  &
\end{tikzcd}
\]
where $\pi(\bfR)=(\pi_0(\bfR),\pi_1(\bfR),\cdots,\pi_S(\bfR))$ is the stationary distribution defined in \prettyref{eq:piR} obtained by normalizing the matrix $\bfR$.
Recall that for $i=1,2$, $F_i$ denotes the joint distribution on the sufficient statistic $(X_0,\bfC)$ under the prior $\mu_i$.
Our goal is to show that $\TV(F_1,F_2)\to 0$.
Note that $X_0$ and $\bfC$ are dependent; however, the key observation is that, by concentration, the distribution of $X_0$ is close to a fixed distribution $P_0$ on the state space $\{0,1,\cdots,S\}$, where $P_0\triangleq \frac{}{S(1+\sqrt{\alpha S})} (S\sqrt{\alpha S},1,1,\cdots,1)$.
Thus, $X_0$ and $\bfC$ are approximately independent. For clarity, we denote $F_1=P_{X_0,\bfC}, F_2=Q_{X_0,\bfC}$. By the triangle inequality of the total variation distance, we have
\begin{align}
\mathsf{TV}(F_1,F_2) \le \mathsf{TV}(P_{X_0,\bfC}, P_0\otimes P_{\bfC}) + \mathsf{TV}(P_0\otimes P_{\bfC}, P_0\otimes Q_{\bfC}) + \mathsf{TV}(Q_{X_0,\bfC}, P_0\otimes Q_{\bfC}). \label{eq.TV_triangle}
\end{align}

To upper bound the first term, note that $\bfC \to \bfR \to X_0$ forms a Markov chain. Hence, by the convexity of total variation distance, we have
\begin{align}
\mathsf{TV}(P_{X_0,\bfC}, P_0\otimes P_{\bfC}) &= \bE_{P_{\bfC}}[ \mathsf{TV}(P_{X_0|\bfC}, P_0)] \\
&\le \bE_{P_{\bfR}}[\mathsf{TV}(P_{X_0|\bfR}, P_0)] \nonumber \\
&= \bE [\mathsf{TV}(\pi(\bfR), P_0)] \nonumber \\
&= \frac{1}{2} \left(\bE \left|\pi_0(\bfR)-\frac{1}{1+\sqrt{\alpha S}}\right| + \sum_{i=1}^S \bE \left|\pi_i(\bfR)-\frac{1}{S(1+\sqrt{\alpha S})}\right|\right). \label{eq.TV_first_term}
\end{align}
 We start by showing that the row sums of $\bfR$ concentrate. Let $r_i = \sum_{i=0}^S R_{ij} = a + \sum_{i=1}^S U_{ij}$, where $a=\sqrt{\alpha S}$.
It follows from the Hoeffding inequality in Lemma~\ref{lem_hoeffding} that
\begin{align}
\prob{ \left| r_i - (\sqrt{\alpha S} + \alpha S) \right| \geq u,~~i=1,\ldots,S } & \leq 2S \exp \left( \frac{-2u^2}{S \left( \frac{d_1^2 S \ln S}{n} \right)^2} \right) \to0,
\label{eq:ri}
\end{align}
provided that $u\gg \frac{(S\ln S)^{3/2}}{n}$.

Next consider the entrywise sum of $\bfR$.
%$\mu_i \left( \left| \sum_{1\leq i,j\leq S} R_{ij} - S \right| \geq \sqrt{S}u \right)$:
Write $r \triangleq \sum_{0\leq i,j\leq S} R_{ij} =b+2aS +  \sum_{1\leq i<j\leq S} 2U_{ij} + \sum_{1\leq i\leq S} U_{ii}$.
Note that $\Expect[r]= b+2aS + S^2 \alpha = S (1+\sqrt{\alpha S})^2 $, by \prettyref{eq:ab}.
Then, it follows from the Hoeffding inequality in Lemma~\ref{lem_hoeffding} that
\begin{align}
\prob{ \left| r  - S(1+\sqrt{\alpha S})^2 \right| \geq \sqrt{S}u} & \leq 2\exp\left( \frac{-2Su^2}{\frac{S(S-1)}{2} 4 \left( \frac{d_1^2 S\ln S}{n} \right)^2 + S \left( \frac{d_1^2 S\ln S}{n} \right)^2}\right) \to 0
\label{eq:r}
\end{align}
provided that $u \gg \frac{S^{3/2} \ln S}{n}$. Henceforth, we set
\begin{equation}
u = \frac{(S\ln^2 S)^{3/2}}{n}.
\label{eq:u}
\end{equation}

Hence, with probability tending to one, $|r_i-(\sqrt{\alpha S}+\alpha S)|\le u$ for $i=1,2,\cdots,S$ and $|r-S(1+\sqrt{\alpha S})^2|\le \sqrt{S}u$. Conditioning on this event, for $i=1,2,\cdots,S$ we have
\begin{align}
\left|\pi_i(\bfR) - \frac{\sqrt{\alpha S}}{S(1+\sqrt{\alpha S})}\right| \le (\sqrt{\alpha S}+\alpha S)\left|\frac{1}{r}-\frac{1}{\bE[r]}\right| + \frac{u}{r} \le \frac{2u}{S^\frac{3}{2}} + \frac{u}{S}.
\end{align}
For $i=0$, $\pi_0(\bfR)=\frac{S(1+\sqrt{\alpha S})}{r}$, we have
\begin{align}
\left|\pi_0(\bfR) - \frac{1}{1+\sqrt{\alpha S}}\right| = S(1+\sqrt{\alpha S})\left|\frac{1}{r}-\frac{1}{\bE[r]}\right| \le \frac{2u}{\sqrt{S}}.
\end{align}
Therefore, in view of \eqref{eq.TV_first_term}, we have
\begin{align}
\mathsf{TV}(P_{X_0,\bfC}, P_0\otimes P_{\bfC})
\le \left(\frac{2u}{\sqrt{S}}+\sum_{i=1}^S \left(\frac{2u}{S^\frac{3}{2}} + \frac{u}{S}\right)\right) + 2\cdot o(1) = \frac{4u}{\sqrt{S}} + u + o(1) = o(1)
\end{align}
as $S\to\infty$. Similarly, we also have $\mathsf{TV}(Q_{X_0,\bfC}, P_0\otimes Q_{\bfC})=o(1)$.

By \eqref{eq.TV_triangle}, it remains to show that $\mathsf{TV}(P_0\otimes P_{\bfC},P_0\otimes Q_{\bfC})=o(1)$. Note that $P_\bfC, Q_\bfC$ are products of Poisson mixtures, by the triangle inequality of total variation distance again we have
\begin{align}
\mathsf{TV}(P_0\otimes P_{\bfC},P_0\otimes Q_{\bfC}) = \mathsf{TV}(P_\bfC,Q_\bfC) \le \sum_{1\le i\le j\le S}\mathsf{TV}\left(\bE[\spo(\frac{4n}{\tau} U_{ij})], \bE[\spo(\frac{4n}{\tau} U_{ij}')]\right). \label{eq.TV_second_term}
\end{align}
We upper bound the individual terms in \eqref{eq.TV_second_term}. For the total variation distance between Poisson mixtures, note that the random variables $\frac{4n}{S} U_{ij}$ and $\frac{4n}{S} U'_{ij}$ match moments up to order
\begin{align}
\frac{D}{\sqrt{\eta}} & = D d_1 \ln S,
\end{align}
and are both supported on $[0, \alpha \eta^{-1} \cdot \frac{4n}{S}] = [0, \frac{d_1^2 S \ln S}{n} \frac{4n}{S}] = [0, 4d_1^2 \ln S]$. It follows from Lemma~\ref{lemma.poissontv} that if
\begin{align}
D d_1 \ln S \geq 8e^2 d_1^2 \ln S,
\end{align}
we have
\begin{align}
\mathsf{TV}\left( \mathbb{E}\left[ \mathsf{Poi}\left( \frac{4n}{S} U_{ij} \right) \right], \mathbb{E}\left[ \mathsf{Poi}\left( \frac{4n}{S} U'_{ij} \right) \right]  \right) & \leq \frac{1}{2^{D d_1 \ln S}} \\
& \leq \frac{1}{S^{\frac{D^2 \ln 2}{4e^2}}} \\
&  \leq  \frac{1}{S^{100}}.
\end{align}
where we set $d_1 = \frac{D}{8e^2}$ and used the fact that $D\geq 100$. By \eqref{eq.TV_second_term},
\begin{align}
\mathsf{TV}(P_0\otimes P_{\bfC},P_0\otimes Q_{\bfC}) \le S^2\cdot\frac{1}{S^{100}} = \frac{1}{S^{98}} = o(1)
\end{align}
as $S\to\infty$, establishing the desired lemma.

%We then consider the random variables $\frac{n}{S} U_{ii}$ and $\frac{n}{S} U_{ii}'$. They also match moments up to order $D d_1 \ln S$, and are both supported on $\left[ \frac{n}{S} - \frac{S}{\ln S}, \frac{n}{S} - \frac{S}{\ln S} + d_1^2 \ln S \right]$. It follows from Lemma~\ref{lemma.poissontv} that if
%\begin{align}
%D d_1 \ln S \geq 2e^2 d_1^2 \ln S,
%\end{align}
%which is guaranteed by the assignment $d_1 = \frac{D}{4e^2}$, we have
%\begin{align}
%\mathsf{TV}\left(  \mathbb{E}\left [\mathsf{Poi}\left( \frac{n}{S} U_{ii} \right)\right ], \mathbb{E}\left [\mathsf{Poi}\left( \frac{n}{S} U'_{ii} \right)\right ] \right)  & \leq \frac{1}{2^{D d_1 \ln S}} \\
%& \leq \frac{1}{S^{100}}.
%\end{align}

\subsection{Proof of Lemma~\ref{lemma.functionalseperation}}
Let $\Delta = \frac{c S^2}{8 n \log S} = \frac{c \alpha S}{8}$, where $c$ is the constant from Lemma \ref{lemma.wuyangprior}.
Recall that $\phi(x) = x \log \frac{1}{x}$. In view of \prettyref{eq:HTR2}, we have
\begin{align*}
\bar H(T(\bfR))
= & ~ \frac{1}{r} \left( \sum_{i,j=0}^{S} \phi(R_{ij}) - \sum_{i = 0}^S \phi(r_i) \right) \\
= & ~ \frac{1}{r} (\phi(b) + 2 S \phi(a) + \phi(b+a S)) + \frac{1}{r}
\sum_{i,j=1}^{S} \phi(R_{ij})  -  \frac{1}{r} \sum_{i = 0}^S \phi(r_i)  \\
= & ~
\underbrace{\frac{1}{r} (\phi(b) + 2 S \phi(a) + \phi(b+a S))}_{H_1} + \underbrace{\frac{1}{r}
	\pth{2 \sum_{1\leq i<j \leq S}  \phi(U_{ij}) + \sum_{1\leq i\leq S} \phi(U_{ii})}}_{H_2} -  \underbrace{\frac{1}{r} \sum_{i = 0}^S \phi(r_i)}_{H_3},
\end{align*}
where the last step follows from the symmetry of the matrix $\bfU$.

For the first term, note that $|\phi(b) + 2 S \phi(a) + \phi(b+a S)| = |\phi(S) + 2 S \phi(\sqrt{\alpha S}) + \phi(S(1+\sqrt{\alpha S}))| \leq 10 S \log S$.
Thus, conditioned on \prettyref{eq:r}, we have
\begin{equation}
\left|\frac{1}{r} - \frac{1}{\Expect[r]}\right| \leq \frac{u}{S^{\frac{3}{2}}},
\label{eq:rr}
\end{equation}
where $\Expect[r]=S(1+\sqrt{\alpha S})^2$.
Put $h_1 \triangleq \frac{\phi(b) + 2 S \phi(a) + \phi(b+a S)}{S(1+\sqrt{\alpha S})^2}$, we have
\begin{equation}
|H_1 - h_1| \leq \frac{10u \ln S}{\sqrt{S}}.
\label{eq:h1}
\end{equation}
with probability tending to one.

For the second term, by Definition~\ref{con.priorconstruction}, for any $i,j$, $U_{ij}$ is supported on $[0,\frac{d_1^2 S \ln S}{n}]$. Thus, $\phi(U_{ij})$ is supported on $[0,\frac{d_1^2 S\ln S}{n}\ln \frac{n}{d_1^2 S \ln S}]$ for any $i,j$.
Hence, it follows from the Hoeffding inequality in Lemma~\ref{lem_hoeffding} that
\begin{align}
& \prob{\left | 2 \sum_{1\leq i<j \leq S}  \phi(U_{ij}) + \sum_{1\leq i\leq S} \phi(U_{ii}) - S^2 \Expect[\phi(U)]
	\right | \geq \frac{\Delta S}{4}} \\
& \leq 2 \exp\left( \frac{-2(\Delta S/4)^2}{\sum_{1\leq i<j\leq S} \left( 2 \frac{d_1^2 S\ln S}{n}\ln \frac{n}{d_1^2 S \ln S} \right)^2 + \sum_{1\leq i\leq S} \left( \frac{d_1^2 S \ln S}{n}\ln \frac{n}{d_1^2 S \ln S}\right)^2}  \right) \\
& \leq 2 \exp\left( - \Omega \left( \frac{S^2}{(\ln n)^2(\ln S)^4} \right) \right) \\
& \to 0
\end{align}
as $S\to \infty$, provided that $\ln n \ll \frac{S}{\ln^2 S}$.
Put $h_2 = \frac{S}{(1+\sqrt{\alpha S})^2} \Expect[\phi(U)] $. Using \prettyref{eq:rr} and the fact that $0 \leq \phi(U) \leq \frac{d_1^2 S\ln S}{n}\ln \frac{n}{d_1^2 S \ln S}$, we have
\begin{equation}
|H_2 - h_2| \leq u\cdot\frac{d_1^2 S^{\frac{3}{2}}\ln S}{n}\ln \frac{n}{d_1^2 S \ln S} + \frac{\Delta}{4}.
\label{eq:h2}
\end{equation}
For the third term, condition on the event in \prettyref{eq:ri}, we have
$|\phi(r_i) - \phi(\alpha S + \sqrt{\alpha S})| \leq C u \ln S$  and $|\phi(r_i)|\le C$,
for some absolute constant $C$.
Put $h_3 = \frac{1}{(1+\sqrt{\alpha S})^2} \phi(\alpha S + \sqrt{\alpha S})$. We have
\begin{equation}
|H_3 - h_3| \leq \frac{C u}{\sqrt{S}} + C u \ln S .
\label{eq:h3}
\end{equation}

Finally, combining \prettyref{eq:h1}, \prettyref{eq:h2}, \prettyref{eq:h3} as well as \prettyref{eq:u},
%and using the assumption that $n \geq \frac{S^2}{\log S}$, i.e., $\alpha S \leq 1$,
with probability tending to one,
\begin{equation}
|\bar H(T(\bfR))  - (h_1+h_2-h_3)| \leq
\frac{\Delta}{4} + C' \frac{S^{3/2} \ln^4 S }{n}
\end{equation}
for some absolute constant $C'$. Likewise, with probability tending to one, we have
\begin{equation}
|\bar H(T(\bfR'))  - (h_1+h_2'-h_3)| \leq
\frac{\Delta}{4} + C' \frac{S^{3/2} \ln^4 S }{n}
\end{equation}
where $h_2' = \frac{S}{(1+\sqrt{\alpha S})^2} \Expect[\phi(U')] $.
In view of Lemma~\ref{lemma.wuyangprior}, we have
\begin{align}
|h_2-h_2'| \geq  \frac{c \alpha S}{(1+\sqrt{\alpha S})^2} \geq 2\Delta.
\end{align}
This completes the proof.

\subsection{Proof of Lemma~\ref{lemma.spectralgapcontrol}}

We only consider the random matrix $\bfR=(R_{ij})$ which is distributed according to the prior $\mu_1$; the case of $\mu_2$ is entirely analogous.

First we lower bound $\pi_{\min}$ with high probability. Recall the definition of $u$ in \eqref{eq:u}, and \eqref{eq:ri}, \eqref{eq:r}. Since $u \leq \alpha S = \frac{S^2}{n \ln S}$, we have $S\le r\le 5S$ and $r_i \geq \sqrt{\alpha S}$ for all $i\in [S]$ with probability tending to one.
Furthermore, $r_0 = S(1+\sqrt{\alpha S})\ge \sqrt{\alpha S}$.
Consequently,
\[
\pi_{\min} = \frac{1}{r} \min_{0\leq i\leq S} r_i \geq \frac{\sqrt{\alpha S}}{5S} = \frac{1}{5 \sqrt{n \ln S}},
\]
as desired.

Next, we deal with the spectral gap. Recall $T=T(R)$ is the normalized version of $R$. 
Let $D = \diag(r_0,\ldots,r_S)$ and
$D_\pi = \diag(\pi_0,\ldots,\pi_S)$, where $r_i = \sum_{j=0}^S R_{ij}$, $r = \sum_{i,j=0}^S R_{ij}$, and $\pi_i = \frac{r_i}{r}$.
Then we have $T=D^{-1} R$. Furthermore, by the reversiblity of $T$,
\begin{equation}
T'\triangleq  D_{\pi}^{1/2} T D_{\pi}^{-1/2} = D^{-1/2} R D^{-1/2},
\label{eq:Tp}
\end{equation}
is a symmetric matrix. Since $T'$ is a similarity transform of $T$, they share the same spectrum. Let $1= \lambda_1(T) \geq \ldots \geq \lambda_{S+1}(T)$ (recall that $T$ is an $(S+1)\times (S+1)$ matrix). In view of \prettyref{eq:R}, we have
\[
L \triangleq  \Expect[\bfR] = \left[
\begin{array}{c|c}
  b & a \cdots a \\ \hline
  a & \alpha \cdots \alpha \\
  \vdots & \vdots \\
  a & \alpha \cdots \alpha
\end{array}
\right],
\quad
Z \triangleq \bfR - \Expect[\bfR]=
\left[
\begin{array}{c|c}
  0 & 0 \cdots 0 \\ \hline
  0 & \raisebox{-15pt}{\mbox{{$\bfU-\Expect[\bfU]$}}} \\[-4ex]
  \vdots & \\[-0.5ex]
  0 &
\end{array}
\right]
\]
Crucially, the choice of $a=\sqrt{\alpha S},b=S$ in \prettyref{eq:ab} is such that $b \alpha = a^2$, so that $\Expect[\bfR]$ is a symmetric positive semidefinite rank-one matrix. Thus, we have from \prettyref{eq:Tp}
\[
T' = D^{-1/2} L D^{-1/2} + D^{-1/2} Z D^{-1/2}.
\]
Note that $L' \triangleq D^{-1/2} L D^{-1/2}$ is also a symmetric positive semidefinite rank-one matrix.
Let $\lambda_1(L') \geq 0 = \lambda_2(L') = \cdots =\lambda_{S+1}(L')$.
By Weyl's inequality \cite[Eq.~(1.64)]{tao2012topics}, for $i=2,\ldots,S+1$, we have
\begin{equation}
|\lambda_i(T)| \leq \|D^{-1/2} Z D^{-1/2}\|_2 \leq \|D^{-1/2}\|_2^2 \|Z\|_2 = \frac{1}{\min_{0\leq i \leq S} r_i } \|\bfU - \Expect[\bfU]\|_2.
\label{eq:weyl}
\end{equation}
Here and below $\|\cdot\|_2$ stands for the spectral norm (largest singular values).
So far everything has been determinimistic. Next we show that with high probability, the RHS of \prettyref{eq:weyl} is at most $\Omega(\sqrt{\frac{S \ln^3 S}{n}} )$.

Note that $\bfU - \Expect[\bfU]$ is a zero-mean Wigner matrix. Furthermore, $U_{ij}$ takes values in $[0,\alpha \eta^{-1}] = [0,\frac{d_1^2 S \ln S}{n}]$, where $d_1$ is an absolute constant. It follows from the standard tail estimate of the spectral norm for the Wigner ensemble (see, e.g.~\cite[Corollary 2.3.6]{tao2012topics}) that there exist universal constants $C,c,c'>0$ such that
\begin{align}
P \left( \|\bfU - \Expect[\bfU]\|_2 > \frac{C S^{3/2} \ln S}{n}  \right) & \leq c' e^{-c S}.
\label{eq:wigner}
\end{align}
Combining \prettyref{eq:ri}, \prettyref{eq:weyl}, and \prettyref{eq:wigner}, the absolute spectral gap of $T=T(\bfR)$ satisfies
\[
\prob{\gamma^*(T(\bfR)) \geq 1 - C \sqrt{\frac{S \ln^3 S}{n}} } \to 1,
\]
as $S\to\infty$. By union bound, we have shown that $\prob{\bfR \in \calR\left( S, \gamma,\tau,q\right)} \to 1$, with $\gamma,\tau,q$ as chosen in Lemma~\ref{lemma.spectralgapcontrol}.

\subsection{Proof of Lemma~\ref{lemma.representationofempiricalentropy}}
The representation \prettyref{eq:Hemp2} follows from definition of conditional entropy. It remains to show \prettyref{eq:Hemp1}.
%For a Markov chain $P$, denote
%\begin{align}
%T_{ij} & = P( X_2 = j|X_1 = i).
%\end{align}
%
%Then, for any Markov chain $P$,
Let $\hat P$ denote the transition matrix corresponding to the empirical conditional distribution, that is,
$\hat P_{ij} \triangleq \hat P_{X_2=j|X_1=i}$. Then, for any transition matrix $P=(P_{ij})$,
\begin{align*}
\frac{1}{n} \ln \frac{1}{P_{X_1^n|X_0}(x_1^n|x_0)} & = \frac{1}{n} \sum_{m = 1}^n \sum_{i = 1}^S \sum_{j = 1}^S \mathbbm{1}(x_{m-1}=i, x_m = j) \ln \frac{1}{P_{ij}} \\
& = \frac{1}{n} \sum_{i = 1}^S \sum_{j = 1}^S (n\hat{\pi}_i \hat{P}_{ij}) \ln \frac{1}{P_{ij}} \\
& = \sum_{i = 1}^S \hat{\pi}_i \sum_{j = 1}^S \hat{P}_{ij} \ln \frac{1}{P_{ij}} \\
& = \Hemp + \sum_{i = 1}^S \hat{\pi}_i D(\hat{P}_{i\cdot} \| P_{i\cdot}),
%& \geq \sum_{i = 1}^S \hat{\pi}_i \sum_{j = 1}^S \hat{P}_{ij} \ln \frac{1}{\hat{P}_{ij}}  \\
%& = \sum_{i = 1}^S \hat{\pi}_i H(P_n^{(i)}),
\end{align*}
where in the last step $D(p\|q) = \sum_{i} p_i \ln \frac{p_i}{q_i}\geq 0$ stands for the Kullback--Leibler (KL) divergence between probability vectors $p$ and $q$.
Then \prettyref{eq:Hemp1} follows from the fact that the nonnegativity of the KL divergence.
%
%We further note that $\sum_{i = 1}^S \hat{\pi}_i H(P_n^{(i)})$ is in fact the entropy rate of Markov chain $Q$ whose transition matrix is given by
%\begin{align}
%Q(X_2 = j|X_1 = i) & = P_n^{(i)}(j).
%\end{align}
%Indeed, for this specific transition matrix, we have the equation
%\begin{align}
%\sum_{i = 1}^S \hat{\pi}_i Q(X_2 = j|X_1 = i) & = \sum_{i =1}^S \hat{\pi}_i P_n^{(i)}(j) \\
%& = \sum_{i = 1}^S \frac{1}{n}\sum_{m = 1}^n \mathbbm{1}\left(x_m = j, x_{m-1} = i \right) \\
%& = \hat{\pi}_j,
%\end{align}
%which proves that $\hat{\pi}$ is the stationary distribution of this Markov chain. The result follows from the definition of the entropy rate of Markov chains in~(\ref{eqn.entropyratedef}).

\end{document}